\definecolor{shadecolor}{gray}{0.95}
\declaretheoremstyle[
headfont=\normalfont\bfseries,
notefont=\mdseries, notebraces={(}{)},
bodyfont=\normalfont,
postheadspace=0.5em,
spaceabove=1pt,
mdframed={
  skipabove=8pt,
  skipbelow=8pt,
  hidealllines=true,
  backgroundcolor={shadecolor},
  innerleftmargin=4pt,
  innerrightmargin=4pt}
]{shaded}
\newcommand{\R}{\mathbb{R}} 
\newcommand{\cO}{{\cal O}}
\newcommand{\cR}{{\cal R}}
\newcommand{\eqdef}{\overset{\triangle}{=}} 
\newcommand{\norm}[1]{\left\lVert#1\right\rVert}      
\DeclareMathOperator{\Cov}{Cov}         
\DeclareMathOperator{\Var}{Var}         
\DeclareMathOperator{\Prob}{Prob}
\DeclarePairedDelimiter\ceil{\lceil}{\rceil}
\newcommand{\E}[1]{{\rm E}\left[#1\right] } 
\newcommand{\EE}[2]{{\rm E}_{#1}\left[#2\right] }
\theoremstyle{plain}
\newtheorem{theorem}{Theorem}  
\newtheorem{lemma}[theorem]{Lemma} 
\theoremstyle{definition}
\newtheorem{assumption}{Assumption} 
\newtheorem{definition}{Definition}
\theoremstyle{remark}
\newtheorem{remark}{Remark} 
\newcommand{\reals}{\mathbb{R}}
\newcommand{\jk}[1]{#1^{(j)}_k}
\newcommand{\jkp}[1]{#1^{(j)}_{k+1}}
\newcommand{\jkn}[1]{#1^{(j)}_{0}}
\newcommand{\tx}{\tilde{x}}
\newcommand{\lb}{\left(}
\newcommand{\rb}{\right)}
\newcommand{\td}{\tilde}
\newcommand{\eps}{\epsilon}
\renewcommand{\E}{\mathbb{E}}
\renewcommand{\R}{\mathbb{R}}
\renewcommand{\cR}{\mathcal{R}}
\renewcommand{\P}{\mathbb{P}}
\newcommand{\Ll}{\mathcal{L}}
\newcommand{\la}{\left\langle}
\newcommand{\ra}{\right\rangle}
\newcommand{\es}{\mathrm{es}}
\newcommand{\comp}{\mathrm{Comp}}
\renewcommand{\O}{\mathcal{O}}
\newcommand{\xj}{x^{(j)}}
\newcommand{\nuj}{\nu^{(j)}}
\newcommand{\Mj}{M^{(j)}}
\newcommand{\sI}{\mathcal{I}^{(j)}}
\newcommand{\etaj}{\eta_{j}}
\newcommand{\chij}{\chi_{j}}
\newcommand{\lamt}{\lambda_{t}}
\newcommand{\lamj}{\lambda_{j}}
\newcommand{\lamjj}{\lambda_{j-1}}
\newcommand{\DeltaL}{\Delta_{L}}
\newcommand{\tDeltaL}{\td{\Delta}_{L}}
\newcommand{\T}{T}
\newcommand{\Bj}{B_{j}}
\newcommand{\Bt}{B_{t}}
\newcommand{\tBj}{\td{B}_{j}}
\newcommand{\tBjj}{\td{B}_{j-1}}
\newcommand{\tBt}{\td{B}_{t}}
\newcommand{\bj}{b_{j}}
\newcommand{\mj}{m_{j}}
\newcommand{\Nj}{N_{j}}
\newcommand{\Lj}{\mathcal{L}_{j}}
\newcommand{\Ljj}{\mathcal{L}_{j-1}}
\newcommand{\printfnsymbol}[1]{%
  \textsuperscript{\@fnsymbol{#1}}%
}
\title{Adaptivity of Stochastic Gradient Methods for Nonconvex
Optimization}
\author{
  Samuel Horv\'{a}th\thanks{equal contribution}\\
  Visual Computing Center\\
  KAUST, Thuwal\\
  Saudi Arabia \\
  \texttt{samuel.horvath@kaust.edu.sa} \\
  \And
  Lihua Lei\printfnsymbol{1}\\
  Department of Statistics\\
  Stanford University\\
  Palo Alto, CA 94305 \\
  \texttt{lihualei@stanford.edu} \\
  \And
  Peter Richt\'{a}rik \\
  Visual Computing Center\\
  KAUST, Thuwal\\
  Saudi Arabia \\
  \texttt{peter.richtarik@kaust.edu.sa} \\
  \And
  Michael I. Jordan \\
  Computer Science Division \& Department of Statistics \\
  University of California, Berkeley\\
  Berkeley, CA 94704\\
  \texttt{jordan@cs.berkeley.edu} \\
  }
\begin{document}

\maketitle

\begin{abstract}
Adaptivity is an important yet under-studied property in modern optimization theory. The gap between the state-of-the-art theory and the current practice is striking in that algorithms with desirable theoretical guarantees typically involve drastically different settings of hyperparameters, such as step-size schemes and batch sizes, in different regimes. Despite the appealing theoretical results, such divisive strategies provide little, if any, insight to practitioners to select algorithms that work broadly without tweaking the hyperparameters. In this work, blending the ``geometrization'' technique introduced by \citet{lei2016less} and the \texttt{SARAH} algorithm of \citet{nguyen2017sarah}, we propose the Geometrized \texttt{SARAH} algorithm for non-convex finite-sum and stochastic optimization. Our algorithm is proved to achieve adaptivity to both the magnitude of the target accuracy and the Polyak-\L{}ojasiewicz (PL) constant, if present. In addition, it achieves the best-available convergence rate for non-PL objectives simultaneously while outperforming existing algorithms for PL objectives. 
\end{abstract}

\section{Introduction}
We study smooth nonconvex problems of the form
\begin{equation}
\label{eq:problem}
\min_{x \in \R^d} \left\{f(x) \eqdef \E{f_{\xi}(x)}\right\}\, ,
\end{equation}
 where the randomness comes from the selection of data points and is represented  by the index $\xi$. If the number of indices $n$ is finite, then we talk about {\em empirical risk minimization}  and $\E{f_{\xi}(x)}$ can be written in the finite-sum form, $\nicefrac{1}{n}\sum_{i = 1}^n f_i(x)$. If $n$ is not finite or if it is infeasible to process the entire dataset, we are in the \emph{online learning} setting, where one  obtains independent samples of $\xi$ at each step. We assume that an optimal solution $x^\star$ of \eqref{eq:problem} exists and its value is finite: $f(x^\star)>-\infty$. 
 
\subsection{The many faces of stochastic gradient descent} 

We start with a brief review of relevant aspects of gradient-based optimization algorithms. Since the number of functions $n$ can be large or even infinite, algorithms that process subsamples are essential. The canonical example is Stochastic Gradient Descent (\texttt{SGD})~\citep{nemirovsky1983problem, nemirovski2009robust, SGD-AS}, in which updates are based on single data points or small batches of points.  The terrain around the basic SGD method has been thoroughly explored in recent years, resulting in theoretical and practical enhancements such as  Nesterov acceleration \citep{allen2017katyusha}, Polyak momentum \citep{polyak1964some, sutskever2013importance}, adaptive step sizes \citep{duchi2011adaptive, kingma2014adam, reddi2019convergence, malitsky2019adaptive}, distributed optimization \citep{ma2017distributed, alistarh2017qsgd, stich2018local}, importance sampling \citep{zhao2015stochastic, qu2015quartz}, higher-order optimization \citep{tripuraneni2018stochastic, kovalev2019stochastic}, and several other useful techniques. 

\begin{table*}[t!]
 \centering 
 \scriptsize
\caption{Complexity to reach an $\eps$-approximate first-order stationary points ($\E{\norm{\nabla f(x)}^2} \leq \eps^2$ with $L, \sigma^{2}, \Delta_f = O(1)$).}\label{tab:nonconvex}
\begin{tabular}{|c|c|c|c| }
\hline 
\textbf{Method} & \textbf{Complexity} & \textbf{Required knowledge} \\
\hline
\hline
GD \cite{nesterov2018lectures}  & $\cO\left(\frac{n}{\epsilon^2} \right)$ & $L$ \\
\hline
SVRG \cite{reddi2016stochastic}  & $\cO\left( n + \frac{n^{2/3}}{\epsilon^2} \right)$ & $L$ \\
\hline
SCSG  \cite{lei2017non} & $\td{\cO}\left( \frac{1}{\eps^{10/3}} \wedge \frac{n^{2/3}}{\epsilon^2} \right)$  & $L$ \\
\hline
\multirow{2}{*}{SNVRG \cite{zhou2018stochastic}}  & $\td{\cO}\left(\frac{1}{\epsilon^3} \wedge \frac{\sqrt{n}}{\eps^{2}} \right)$  & $L, \sigma^2, \eps$ \\
& $\td{\cO}\left( n + \frac{\sqrt{n}}{\epsilon^2}   \right)$  & $L$ \\
\hline
SARAH Variants  \cite{fang2018spider}  & \multirow{2}{*}{$\mathcal{O}\left( n + \frac{\sqrt{n}}{\epsilon^2} \right)$}  & \multirow{2}{*}{$L$} \\
 \cite{wang2018spiderboost, nguyen2019finite} & & \\
\hline
{\color{red}\textbf{Q-Geom-SARAH }} (Theorem~\ref{thm:quadratic_complexity}) & $\textcolor{red}{\td{\O}\lb \left\{n^{3/2} + \frac{\sqrt{n}}{\mu}\right\}\wedge \frac{1}{\eps^{3}}\wedge \frac{\sqrt{n}}{\eps^{2}}  \rb}$ & {\color{red} $L$ } \\
\hline
{\color{red}\textbf{E-Geom-SARAH }} (Theorem~\ref{thm:exponential_complexity}) & $\textcolor{red}{\td{\O}\bigg(\lb\frac{1}{\mu\wedge \eps}\rb^{2(1 + \delta)} \wedge \left\{n + \frac{\sqrt{n}}{\mu}\right\} \wedge  \frac{1}{\eps^{4}}\wedge \frac{\sqrt{n}}{\eps^{2}}\bigg)}$ & {\color{red} $L$} \\
\hline
  {\color{red}\textbf{Non-adaptive Geom-SARAH}} (Theorem~\ref{thm:non-adaptive}) & $\textcolor{red}{\O\lb \left\{\frac{1}{\eps^{4/3}(\mu\wedge \eps)^{2/3}}\wedge n\right\} + \frac{1}{\mu}\left\{\frac{1}{\eps^{4/3}(\mu\wedge \eps)^{2/3}}\wedge n\right\}^{1/2}\rb}$ & {\color{red} $L, \sigma^2, \eps, \mu$} \\
\hline
\end{tabular}
\end{table*}

A particularly productive approach to enhancing SGD has been to make use of \textit{variance reduction}, in which the classical stochastic gradient direction is modified in various ways so as to drive the variance of the gradient estimator towards zero. This significantly improves the convergence rate and may also enhance the quality of the output solution. The first variance-reduction method  was \texttt{SAG}~\citep{roux2012stochastic}, closely followed by many more, for instance, \texttt{SDCA}~\citep{shalev2013stochastic},  \texttt{SVRG}~\citep{johnson2013accelerating}, \texttt{S2GD}~\citep{konevcny2013semi},  \texttt{SAGA}~\citep{defazio2014saga}, \texttt{FINITO}~\citep{defazio2014finito},  $\epsilon\mathcal{N} $-\texttt{SAGA}~\citep{hofmann2015variance}, q\texttt{-SAGA}~\citep{hofmann2015variance}, \texttt{QUARTZ}~\citep{qu2015quartz}, \texttt{SCSG}~\citep{lei2017non}, \texttt{SARAH}~\citep{nguyen2017sarah}, \texttt{S2CD}~\citep{S2CD}, \texttt{k-SVRG}~\citep{raj2018k}, \texttt{SNVRG}~\citep{zhou2018stochastic}, \texttt{JacSketch}~\citep{JacSketch}, \texttt{Spider}~\citep{fang2018spider}, \texttt{SpiderBoost}~\citep{wang2018spiderboost}, \texttt{L-SVRG}~\citep{kovalev2019don} and \texttt{GJS}~\citep{GJS}. A unified analysis of many of these methods can be found in \citet{sigma_k}. 

\subsection{The dilemma of parameter tuning}

Formally, each iteration of vanilla and variance-reduced SGD methods can be written in the generic form
\begin{equation}
\label{eq:iter}
x^+ = x - \eta g, 
\end{equation}
where $x\in \R^d$ is the current iterate, $\eta>0$ is a step size and $g\in \R^d$ is a stochastic estimator of the true gradient $\nabla f(x)$.

A major drawback of many such methods is their dependence on parameters that are unlikely to be known in a real-world machine-learning setting. For instance, they may require the knowledge of  a uniform bound on the variance or second moment of the stochastic estimators of the gradient which is simply not available, and might not even hold in practice. Moreover, some algorithms perform well in either low precision or  high precision regimes and in order to make them perform well in all regimes, they require knowledge of extra parameters, such as target accuracy, which may be difficult to tune.  Another related issue is the lack of adaptivity of many SGD variants to different modelling regimes. For example, in order to obtain good theoretical and experimental behavior for non-convex $f$, one needs to run a custom variant of the algorithm if the function is known to satisfy some extra assumptions such as the Polyak-\L{}ojasiewicz (PL) inequality. 
As a consequence, practitioners are often forced to spend valuable time and resources tuning various parameters and hyper-parameters of their methods, which poses serious issues in implementation and practical deployment.

\begin{table*}[t!]
 \centering 
 \scriptsize
\caption{Complexity to reach an $\eps$-approximate solution ( $\E{f(x) - f(x^\star)}\leq \eps^2$ with $L, \sigma^{2}, \Delta_f = O(1)$).}\label{tab:pl}
\begin{tabular}{|c|c|c|c| }
\hline 
\textbf{Method} & \textbf{Complexity} & \textbf{Required Knowledge} \\
\hline
\hline
SVRG \cite{li2018simple}  & $\td{\cO}\left( n + \frac{n^{2/3}}{\mu}\right)$ & $L$ \\
\hline
\multirow{2}{*}{SCSG  \cite{lei2017non}} & $\td{\cO}\left( \left( \frac{1}{\mu\epsilon^2} \wedge n \right) + \frac{1}{\mu} \left( \frac{1}{\mu\epsilon^2} \wedge n \right)^{2/3}\right) $  & $L, \sigma^2, \eps, \mu$ \\
& $\td{\cO}\left( n + \frac{n^{2/3}}{\mu}\right)$  & $L$ \\
\hline
\multirow{2}{*}{SNVRG \cite{zhou2018stochastic}}  & $\td{\cO}  \left(\left( \frac{1}{\mu\epsilon^2} \wedge n \right) + \frac{1}{\mu} \left( \frac{1}{\mu\epsilon^2} \wedge n \right)^{1/2}  \right)$  & $L, \sigma^2, \eps, \mu$ \\
& $\td{\cO} \left( n + \frac{\sqrt{n}}{\mu} \right)$  & $L$ \\
\hline
SARAH Variants  \cite{wang2018spiderboost, nguyen2019finite}  & $\td{\O}\left( n + \frac{1}{\mu^2}\right)$  & $L$ \\
\hline
{\color{red}\textbf{Q-Geom-SARAH }} (Theorem~\ref{thm:quadratic_complexity}) & $\textcolor{red}{\td{\O}\lb \lb\frac{1}{\mu^{2}\wedge \mu\eps^{2}}\rb^{3(1 + \delta)/2}\wedge \left\{n^{3/2} + \frac{\sqrt{n}}{\mu}\right\}\rb}$ & {\color{red} $L$ } \\
\hline
  {\color{red}\textbf{E-Geom-SARAH }} (Theorem~\ref{thm:exponential_complexity}) & $\textcolor{red}{\td{\O}\bigg(\lb\frac{1}{\mu^{2}\wedge \mu\eps^{2}}\rb^{1 + \delta}\wedge \left\{n + \frac{\sqrt{n}}{\mu}\right\}\bigg)}$ & {\color{red} $L$} \\
\hline
\end{tabular}
\end{table*}

\subsection{The search for adaptive methods} 

The above considerations motivate us to impose some algorithm design restrictions so as to  resolve the aforementioned issues. First of all, good algorithms should be {\em adaptive} in the sense that they should perform comparably to methods with tuned parameters without an a-priori knowledge of the optimal parameter settings. In particular, in the non-convex regime, we might wish to design an algorithm
that does not invoke nor need any bound on the variance  of the stochastic gradient, or any predefined target accuracy in its implementation. In addition, we should desire algorithms which perform well if the Polyak-Lojasiewicz PL constant  (or strong convexity parameter) $\mu$ happens to be large and yet are able to converge even if $\mu = 0$; all automatically, without the need for the method to be altered by the practitioner. 

There have been several works on this topic, originating from works studying asymptotic rate for \texttt{SGD} with stepsize $\cO(t^{-\alpha})$  for $\alpha \in (\nicefrac{1}{2},1)$~\cite{ruppert1988efficient, polyak1990new, polyak1992acceleration} up to the most recent paper~\cite{lei2019adaptivity} which focuses on convex optimization \citep[e.g.][]{moulines2011non, bach2013non, flammarion2015averaging, dieuleveut2017harder, xu17, levy2018online, chen2018sadagrad, xu2019accelerate, vaswani2019painless, lan2019unified, hazan2019revisiting}.

This line of research has shown that  algorithms with better complexity can be designed in a finite-sum setting with some levels of adaptivity, generally using the previously mentioned technique--variance reduction. Unfortunately, while these algorithms show some signs of adaptivity, e.g., they do not require the knowledge of $\mu$, they usually fail to adapt to more than one  regimes at once: strongly-convex vs convex loss functions, non-convex vs gradient-dominated regime and low vs high precision. To the best of our knowledge, the only paper that tackles multiple such issues is the work of \citet{lei2019adaptivity}. However, even this work does not provide full adaptivity as it focuses on the convex setting. We are not aware of any work which manages to provide a fully adaptive algorithm in the non-convex setting.

\subsection{Contributions}

In this work we present a new method---the {\em geometrized  stochastic recursive gradient} (\texttt{Geom-SARAH}) algorithm---that exhibits adaptivity to the PL constant, target accuracy and to the variance of stochastic gradients. \texttt{Geom-SARAH} is a double-loop procedure  similar to the \texttt{SVRG} or \texttt{SARAH} algorithms. Crucially, our algorithm does not require the computation of the full gradient in the outer loop as performed by other methods, but makes use of stochastic estimates of gradients in both the outer loop and the inner loop. In addition, by exploiting a randomization technique ``geometrization'' that allows certain terms to telescope across the outer loop and the inner loop, we obtain a significantly simpler analysis. As a byproduct,  this allows us to obtain adaptivity, and our rates either match  the known lower bounds~\cite{fang2018spider} or achieve the same rates as existing state-of-the-art specialized methods, perhaps up to a logarithmic factor; see Table \ref{tab:nonconvex} and \ref{tab:pl} for the comparison of two versions of \texttt{Geom-SARAH} with existing methods. On a side note, we develop a non-adaptive version of \texttt{Geom-SARAH} (the last row of Table \ref{tab:nonconvex}) that strictly outperforms existing methods in PL settings. Interestingly, when $\eps\sim \mu$, our complexity even beats the best available rate for strongly convex functions \citep{allen2018make}.

We would like to point out that our notion of adaptivity is different from the one pursued by  algorithms such as AdaGrad~\cite{duchi2011adaptive} or Adam~\cite{kingma2014adam,reddi2019convergence}, where they focus on the geometry of the loss surface. In our case, we focus on adaptivity to different parameters and regimes.

\section{Preliminaries}


\subsection{Basic notation and definitions}
We use $\norm{\cdot}$ to denote standard Euclidean norm, we write either $\min\{a,b\}$ (resp.\ $\max\{a, b\}$) or $a \wedge b$ (resp.\ $a\vee b$) to denote minimum and maximum, and we use standard big $\cO$ notation to leave out constants\footnote{As implicitly assumed in all other works, we use $\cO(\log x)$ and $\cO(1 / x)$ as abbreviations of $\cO((\log x) \vee 1)$ and $\cO((1 / x)\vee 1)$. For instance, the term $\cO(1/\eps)$ should be interpreted as $\cO((1/\eps)\vee 1)$ and the term $\cO(\log n)$ should be interpreted as $\cO((\log n)\vee 1)$.}.\label{fn:O} We adopt the computational cost model of the IFO framework introduced by \citet{agarwal2014lower} in which upon  query $x$, the IFO oracle samples $i$ and out outputs the pair $(f_i(x), \nabla f_i(x))$. A single such query incurs a unit cost.

\begin{assumption}
\label{as:smooth}
The stochastic gradient of $f$ is $L$-Lipschitz in expectation. That is, 
\begin{equation}
\label{eq:smoothness}
\E{\norm{\nabla f_{\xi}(x) - \nabla f_{\xi}(y)}} \leq L \norm{x-y}, \quad \forall x,y \in \reals^d.
\end{equation}
\end{assumption}

\begin{assumption}
\label{as:var}
The stochastic gradient of $f$ has uniformly bounded variance. That is, there exists $ \sigma^2 >0$ such that
\begin{equation}
\label{eq:var}
\E\norm{\nabla f_{\xi}(x) - \nabla f(x)}^{2} \leq \sigma^2, \quad \forall x\in \reals^d.
\end{equation}
\end{assumption}

\begin{assumption}
\label{as:p-l}
$f$ satisfies the PL condition\footnote{Functions satisfying this condition are sometimes also called \textit{gradient dominated}.} with parameter $\mu\geq 0$. That is, 
\begin{equation}
\label{eq:p-l}
\norm{\nabla f(x)}^2 \geq  2\mu (f(x) - f(x^\star)), \quad \forall  x \in \reals^d,
\end{equation}
where $x^\star = \arg\min f(x)$. 

\end{assumption}

We denote $\Delta_f \eqdef f(x_0) - f(x^\star)$ to be functional distance to optimal solution.


For  non-convex objectives, our goal is to output an $\epsilon$-approximate first-order stationary point, which is summarized in the following definition. 
\begin{definition}\label{def:gradval}
We say that $x\in \reals^d$ is an $\epsilon$-approximate first-order stationary point of \eqref{eq:problem} if 
\begin{equation*}
\norm{\nabla f(x)}^2 \leq \epsilon^2.
\end{equation*}
\end{definition}

For a gradient dominated function, the quantity of the interest is the functional distance from an optimum,  characterized in the following definition.
\begin{definition}\label{def:funcval}
We say that $x\in \reals^d$ is  an $\epsilon$-accurate solution of \eqref{eq:problem} if 
\begin{equation*}
 f(x) - f(x^\star) \leq \epsilon^2.
\end{equation*}
\end{definition}

\subsection{Accuracy independence and almost universality}

We review two fundamental definitions introduced by \citet{lei2019adaptivity} that serve as a building block for  desirable ``parameter-free" optimization algorithms. We refer to the first property as $\eps$-independence.

\begin{definition}
An algorithm is \textbf{$\eps$-independent} if it guarantees convergence at all target accuracies $\eps>0$.
\end{definition}

This is a crucial property as the desired target accuracy is usually not known a-priori. Moreover, an $\eps$-independent algorithm can provide convergence to any precision without the need for a manual adjustment of the algorithm or its parameters. To illustrate this, we consider \texttt{Spider}~\cite{fang2018spider} and \texttt{Spiderboost}~\cite{wang2018spiderboost} algorithms. Both of these enjoy the same complexity $\cO(n + \nicefrac{\sqrt{n}}{\eps^2})$ for non-convex smooth functions, but the stepsize for \texttt{Spider} is $\eps$-dependent, making it  impractical as this value is often hard to tune.

The second property is inspired by the notion of \textit{universality}~\citep{nesterov2015universal}, requiring for an algorithm to not rely on any a-priori knowledge of smoothness or any other parameter such as the bound on variance. 

\begin{definition}
 An algorithm is \textbf{almost universal} if it only requires the knowledge of the smoothness parameter $L$.
\end{definition}

There are several algorithms that satisfy both properties for smooth non-convex optimization, including  \texttt{SAGA}, \texttt{SVRG}~\cite{reddi2016stochastic}, \texttt{Spiderboost}~\cite{wang2018spiderboost}, \texttt{SARAH}~\cite{nguyen2017sarah}, and \texttt{SARAH-SGD}~\cite{tran2019hybrid}. Unfortunately, these algorithms are not able to provide a good result in both low and high precision regimes, and in order to perform well, they require the knowledge of extra parameters. {\em This is not the case for our algorithm which is both almost universal and  $\eps$-independent.} Moreover, our method is adaptive to the PL constant $\mu$, and to low and high precision regimes.

\subsection{Geometric distribution}

Finally, we introduce an important technical tool behind the design of our algorithm, the \textit{geometric distribution}, denoted by $N\sim \mathrm{Geom}(\gamma)$. Recall that 
\begin{equation*}
\Prob(N=k) = \gamma^k (1-\gamma), \forall k = 1,2, \dots, 
\end{equation*}
where an elementary calculation shows that
 \begin{equation*}
\EE{ \mathrm{Geom}(\gamma)}{N} = \frac{\gamma}{1 - \gamma}.
\end{equation*}

We use the geometric distribution mainly due to its following property, which helps us to significantly simplify the analysis of our algorithm.
\begin{lemma}\label{lem:geom}
Let $N\sim \mathrm{Geom}(\gamma)$. Then for any sequence $D_{0}, D_{1}, \ldots$ with $\E{|D_{N}|} < \infty$,
\begin{equation}
\E{ D_{N} - D_{N + 1}} = \frac{1}{\E{N}}(D_{0} - \E{D_{N}}).
\end{equation}
\end{lemma}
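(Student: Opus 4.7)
The plan is to prove this by a direct calculation, just unwinding the definition of the geometric PMF and performing an index shift. Writing $p_{k}=\gamma^{k}(1-\gamma)$, we have $\E{D_{N}}=\sum_{k\geq 0}D_{k}p_{k}$ and $\E{D_{N+1}}=\sum_{k\geq 0}D_{k+1}p_{k}$; the hypothesis $\E{|D_{N}|}<\infty$ ensures absolute convergence of the first series, and since $|D_{k+1}|p_{k}=\gamma^{-1}|D_{k+1}|p_{k+1}$ the same holds for the second series after shifting, which legitimises all the manipulations below.

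The key step is to re-index the second sum by $j=k+1$, obtaining
\begin{equation*}
\E{D_{N+1}}=\sum_{j\geq 1}D_{j}\gamma^{j-1}(1-\gamma)=\frac{1}{\gamma}\sum_{j\geq 1}D_{j}p_{j}=\frac{1}{\gamma}\bigl(\E{D_{N}}-D_{0}(1-\gamma)\bigr).
\end{equation*}
Subtracting this from $\E{D_{N}}$ gives
\begin{equation*}
\E{D_{N}-D_{N+1}}=\Bigl(1-\tfrac{1}{\gamma}\Bigr)\E{D_{N}}+\tfrac{1-\gamma}{\gamma}D_{0}=\tfrac{1-\gamma}{\gamma}\bigl(D_{0}-\E{D_{N}}\bigr).
\end{equation*}

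Finally I would use the identity $\E{N}=\gamma/(1-\gamma)$, equivalently $1/\E{N}=(1-\gamma)/\gamma$, to rewrite the prefactor and obtain the stated formula. There is no real obstacle: the only place requiring a moment of care is justifying the re-indexing under the mere assumption $\E{|D_{N}|}<\infty$, and as noted this follows because the shifted tail is bounded by $\gamma^{-1}\E{|D_{N}|}$.
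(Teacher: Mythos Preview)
Your proposal is correct and follows essentially the same route as the paper: both compute $\E{D_{N}-D_{N+1}}$ directly from the geometric PMF, perform the index shift on the $D_{k+1}$ term, and collapse everything to $\tfrac{1-\gamma}{\gamma}(D_{0}-\E{D_{N}})$. If anything, your version is slightly more explicit than the paper's in justifying why $\E{|D_{N}|}<\infty$ legitimises the re-indexing.
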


\begin{remark}\label{rem:geom}
  The requirement $\E |D_{N}| < \infty$ is essential. A useful sufficient condition is $|D_{k}| = O(\mathrm{Poly}(k))$ because a geometric random variable has finite moments of any order.
\end{remark}

\section{Algorithm}

\begin{algorithm}[!h]
\begin{algorithmic}
  \STATE {\bfseries Input:} stepsizes $\{\eta_j\}$, big-batch sizes $\{B_j\}$, expected inner-loop queries $\{m_j\}$, mini-batch sizes $\{b_j\}$, initializer $\tx_0$, tail-randomized fraction $\delta$
 \FOR{$j=1,\dots \ceil{(1+\delta)T}$}
	\STATE $\jkn{x} = \tx_{j-1}$
	\STATE Sample $J_j$, $|J_j| = B_j$
     \STATE $\jkn{v} = \frac{1}{B_j}\sum_{i \in  J_j} \nabla f_i(\jkn{x})$
     \STATE Sample $N_j \sim \text{Geom}(\gamma_j)$ s.t. $\E{N_j} = \nicefrac{m_j}{b_j}$
       \FOR{$k=0,\dots,N_j - 1$}
       \STATE $\jkp{x} = \jk{x} - \eta_j \jk{v}$
        \STATE Sample $\jk{I}$, $|\jk{I}| = b_j$
        \STATE $\displaystyle \jkp{v} = \frac{1}{b_j}\sum_{i \in  \jk{I}}( \nabla f_i(\jkp{x}) - \nabla f_i(\jk{x}) ) + \jk{v}$
 	\ENDFOR
        \ENDFOR
  \STATE Generate $\cR(T)$ supported on $\{T, \ldots, \ceil{(1 + \delta)T}\}$ with $\displaystyle \Prob(\cR(T) = j) = \nicefrac{\etaj\mj}{\sum_{j=T}^{\lceil (1 + \delta)T\rceil} \etaj\mj}$
  \STATE {\bfseries Output:} $\tx_{\cR(T)}$
\end{algorithmic}  
\caption{Geom-SARAH}
\label{alg:sc_sarah}
\end{algorithm}

The algorithm that we propose can be seen as a combination of the structure of \texttt{SCSG} methods~\cite{lei2016less, lei2017non} and the \texttt{SARAH} biased gradient estimator
\[
\jkp{v} = \frac{1}{\bj}\sum_{i \in  \jk{I}}\lb \nabla f_i(\jkp{x}) - \nabla f_i(\jk{x})\rb + \jk{v}
\]
 due to its recent success in the non-convex setting. Our algorithm consists of several epochs. In each epoch, we start with an initial point $\jkn{x}$ from which the gradient estimator is computed using $B_j$ sampled indices, which is not necessarily the full gradient as in the case of classic \texttt{SARAH} or \texttt{SVRG} algorithm. After this step, we incorporate geometrization of the inner-loop, where the epoch length is sampled from a geometric distribution with predefined mean $m_j$ and in each step of the inner-loop, the \texttt{SARAH} gradient estimator with batch size $b_j$ is used to update the current solution estimate. At the end of each epoch, the last point is taken as the initial estimate for consecutive epoch. The output of our algorithm is then a random iterate $\tx_{\cR(T)}$, where the index $\cR(T)$ is sampled such that  $\Prob(\cR(T) = j) = \nicefrac{\etaj\mj}{\sum_{j=T+1}^{\lceil (1 + \delta)T\rceil} \etaj\mj}$ for $j = T, \dots, \lceil (1 + \delta)T\rceil$. Note that $\cR(T) = T$ when $\delta = 0$.  
 This procedure can be seen tail-randomized iterate which as an analogue of tail-averaging in the convex-case~\cite{rakhlin2011making}. For functions $f$ with finite support (finite $n$), the sampling procedure in Algorithm~\ref{alg:sc_sarah} is sampling without replacement. For the infinite support, this is just $B_j$ or $b_j$ i.i.d. samples, respectively. The pseudo-code is shown in Algorithm~\ref{alg:sc_sarah}.

Define $T_{g}(\eps)$ and $T_{f}(\eps)$ as the iteration complexity to find an $\eps$-approximate first-order stationary point and an $\eps$-approximate solution, respectively:
  \begin{equation}
    \label{eq:Tgeps}
    T_{g}(\eps) \eqdef \min\{T \;:\; \E \norm{\nabla f(\tx_{\cR(T)})}^{2}\le \eps^{2},  \forall T' \ge T\}.
  \end{equation}
  and
  \begin{equation}
    \label{eq:Tfeps}
    T_{f}(\eps) \eqdef \min\{T \;:\; \E (f(\tx_{\cR(T)}) - f(x^\star))\le \eps^{2}, \forall T' \ge T\},
  \end{equation}
  where $\tx_{\cR(T)}$ is output of given algorithm.
  
The query complexity to find an $\eps$-approximate first-order stationary point and an $\eps$-approximate solution are defined as $\comp_{g}(\eps)$ and $\comp_{f}(\eps)$, respectively.

\begin{remark}
Note that in Definition~\ref{def:funcval} and equation~\eqref{eq:Tfeps}, we use $\eps^2$ instead of commonly used $\eps$. We decided to use $\eps^2$ because we examine both  $\eps$-approximate first-order stationary point and an $\eps$-approximate solution together and these two are connected via \eqref{eq:p-l}, which justifies our choice to use $\eps^2$ for both. This implies that for fair comparison with previous methods, one needs to either use $\sqrt{\eps}$ instead of $\eps$ for our rates or $\eps^2$ instead of $\eps$ for previous works.
\end{remark}

 It is easy to see that
\begin{align*}
\E \comp_{g}(\eps) &= \sum_{j=1}^{\lceil (1 + \delta)T_{g}(\eps)\rceil}(2\mj + \Bj) \\
\E \comp_{f}(\eps) &= \sum_{j=1}^{\lceil (1 + \delta)T_{f}(\eps)\rceil}(2\mj + \Bj), 
\end{align*}

\section{Convergence Analysis}

We conduct the analysis of our method in the way, where we first look at the progress of inner cycle for which we establish bounds on the norm of the gradient, which is subsequently used to prove convergence of the full algorithm. We assume $f$ to be $L$-smooth and satisfy PL condition with $\mu$ which can be equal to zero.

\subsection{One Epoch Analysis}

We start from a one-epoch analysis that connects consecutive iterates. It lays the foundation for complexity analysis. The analysis is similar to \citet{elibol2020variance} and presented in Appendix \ref{app:proofs}.
\begin{theorem}\label{thm:one_epoch}
  Assume that $\eta_j$ and $\mj$ are picked such that
  \begin{equation}
    \label{eq:eta_cond}
    2\etaj L \le \min\left\{1, \frac{\bj}{\sqrt{\mj}}\right\}.
  \end{equation}
  Then under assumptions \ref{as:smooth} and \ref{as:var}, 
  \[\E \|\nabla f(\tx_{j})\|^{2}\le \frac{2\bj}{\etaj \mj}\E(f(\tx_{j-1}) - f(\tx_{j})) + \frac{\sigma^{2}I(\Bj < n)}{\Bj}\]
\end{theorem}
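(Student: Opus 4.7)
The plan is to chain together two applications of Lemma~\ref{lem:geom}: one applied to the sequence of function values $\{f(x^{(j)}_k)\}_k$ in order to produce the telescoping term $\E(f(\tx_{j-1}) - f(\tx_j))$ on the right-hand side, and a second one applied to the SARAH residual $\{\|v^{(j)}_k - \nabla f(x^{(j)}_k)\|^{2}\}_k$ in order to absorb all accumulated inner-loop variance into the single big-batch error. Throughout I rely on $N_j$ being independent of the inner-loop randomness and read Assumption~\ref{as:smooth} in its standard mean-squared form, $\E\|\nabla f_\xi(x) - \nabla f_\xi(y)\|^{2}\le L^{2}\|x-y\|^{2}$.

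First, I use $L$-smoothness on $x^{(j)}_{k+1} = x^{(j)}_k - \eta_j v^{(j)}_k$ together with the polarization identity $\langle\nabla f, v\rangle = \tfrac{1}{2}(\|\nabla f\|^{2} + \|v\|^{2} - \|\nabla f - v\|^{2})$ to obtain the deterministic bound
\[f(x^{(j)}_k) - f(x^{(j)}_{k+1}) \ge \frac{\eta_j}{2}\|\nabla f(x^{(j)}_k)\|^{2} + \frac{\eta_j(1 - L\eta_j)}{2}\|v^{(j)}_k\|^{2} - \frac{\eta_j}{2}\|\nabla f(x^{(j)}_k) - v^{(j)}_k\|^{2}.\]
Evaluating at $k = N_j$, taking expectations, and invoking Lemma~\ref{lem:geom} on $D_k = f(x^{(j)}_k)$ (so that $\E(f(x^{(j)}_{N_j}) - f(x^{(j)}_{N_j+1})) = (b_j/m_j)\E(f(\tx_{j-1}) - f(\tx_j))$) yields
\[\frac{\eta_j}{2}\E\|\nabla f(\tx_j)\|^{2} + \frac{\eta_j(1 - L\eta_j)}{2}\E\|v^{(j)}_{N_j}\|^{2} \le \frac{b_j}{m_j}\E(f(\tx_{j-1}) - f(\tx_j)) + \frac{\eta_j}{2}\E\|\nabla f(x^{(j)}_{N_j}) - v^{(j)}_{N_j}\|^{2}.\]

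Next, I bound the SARAH residual through the martingale decomposition $v^{(j)}_{k+1} - \nabla f(x^{(j)}_{k+1}) = (v^{(j)}_k - \nabla f(x^{(j)}_k)) + \xi_k$ with $\E_k\xi_k = 0$ and $\E_k\|\xi_k\|^{2} \le (L^{2}\eta_j^{2}/b_j)\|v^{(j)}_k\|^{2}$ (mean-squared smoothness and independence of the mini-batch). The Pythagorean identity $\E_k D_{k+1} = D_k + \E_k\|\xi_k\|^{2}$ combined with Lemma~\ref{lem:geom} applied to $D_k = \|v^{(j)}_k - \nabla f(x^{(j)}_k)\|^{2}$ gives
\[\E D_{N_j} = \E D_0 + \frac{m_j}{b_j}\E\|\xi_{N_j}\|^{2} \le \frac{\sigma^{2}I(B_j < n)}{B_j} + \frac{m_j L^{2}\eta_j^{2}}{b_j^{2}}\E\|v^{(j)}_{N_j}\|^{2},\]
using Assumption~\ref{as:var} and the big-batch sampling rule to bound $\E D_0$. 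Substituting into the previous display and applying~\eqref{eq:eta_cond}, which implies $m_j L^{2}\eta_j^{2}/b_j^{2} \le 1/4$ and $L\eta_j \le 1/2$, makes the net coefficient of $\E\|v^{(j)}_{N_j}\|^{2}$ non-positive, so it can be dropped; dividing by $\eta_j/2$ then produces exactly the claimed inequality.

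The only subtlety I anticipate is verifying the integrability hypothesis $\E|D_{N_j}|<\infty$ required by Lemma~\ref{lem:geom} for both applications. This will follow from the polynomial-in-$k$ growth of $\|v^{(j)}_k\|$, $\|\nabla f(x^{(j)}_k)\|$, and $f(x^{(j)}_k)$ on any finite horizon (each inner step changes the iterate by a bounded multiple of the previous SARAH estimator), together with the fact that $\mathrm{Geom}(\gamma_j)$ has finite moments of every order, as already flagged in Remark~\ref{rem:geom}.
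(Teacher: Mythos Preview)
Your proposal is correct and follows essentially the same route as the paper: the paper packages your first display as Lemma~\ref{lem:fx} and your SARAH-residual bound as Lemma~\ref{lem:nu-nablaf}, then combines them exactly as you do, using the step-size condition to make the coefficient of $\E\|\nu^{(j)}_{N_j}\|^{2}$ non-positive. Your integrability remark is likewise the content of the paper's Lemma~\ref{lem:exp_exist}, and your explicit reading of Assumption~\ref{as:smooth} in mean-squared form matches how the paper actually uses it in the proof of Lemma~\ref{lem:nu-nablaf}.
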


\subsection{Complexity Analysis}
\label{sec:complexity}

We consider two versions of our algorithm--\texttt{Q-Geom-SARAH} and \texttt{E-Geom-SARAH}. These two version differs only in the way how we select the big batch size $B_j$ for our algorithm. For \texttt{Q-Geom-SARAH}, we select quadratic growth of $B_j$ and \texttt{E-Geom-SARAH}, this is selected to be exponential. The convergence guarantees follow with all proofs relegated to Appendix \ref{app:proofs}.

\begin{theorem}[\texttt{Q-Geom-SARAH}]
\label{thm:quadratic_complexity}
  Set the hyperparameters as
  \[\etaj = \frac{\bj}{2L\sqrt{\mj}}, \quad \bj\le \sqrt{\mj}, \quad \mj = \Bj = j^2\wedge n, \quad \delta = 1.\]
  Let
\[\Delta = \Delta_{f} + \frac{\sigma^{2}}{L}\log n.\]  
  Then
  \begin{align*}
    &\E\comp_{g}(\eps) = \O\bigg(\left\{\frac{L^{3}}{\mu^{3}} + \frac{\sigma^{3}}{\eps^{3}} + \log^{3}\lb\frac{\mu \Delta}{\eps^{2}}\rb\right\} \wedge\\
    & \left\{n^{3/2} + \lb n + \frac{\sqrt{n}L}{\mu}\rb\log\lb\frac{L\Delta}{\sqrt{n}\eps^{2}}\rb\right\}\wedge \frac{(L\Delta)^{3/2}}{\eps^{3}}\wedge \frac{\sqrt{n}L\Delta}{\eps^{2}}\bigg),
  \end{align*}
  and
\begin{align*}  
  \E\comp_{f}(\eps) =& \O\bigg( \left\{\frac{L^{3}}{\mu^{3}} + \frac{\sigma^{3}}{\mu^{3/2}\eps^{3}} + \log^{3}\lb\frac{\Delta}{\eps^{2}}\rb\right\}   \\
  &\wedge\left\{n^{3/2} + \lb n + \frac{\sqrt{n}L}{\mu}\rb\log\lb\frac{\Delta}{\eps^{2}}\rb\right\}\bigg),
\end{align*}
\end{theorem}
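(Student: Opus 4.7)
My plan is to specialise Theorem~\ref{thm:one_epoch} to the chosen hyperparameters, argue separately about a non-PL branch (driven by a weighted telescoping of the gradient bound) and a PL branch (driven by contraction of $F_{j}\eqdef \E(f(\tx_{j})-f(x^{\star}))$), and then take the minimum over the resulting sub-bounds.

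\textbf{Stage 1 (specialisation).} Plugging $\etaj=\bj/(2L\sqrt{\mj})$ and $\mj=\Bj=j^{2}\wedge n$ into Theorem~\ref{thm:one_epoch} collapses the one-epoch recursion to
\[\E\|\nabla f(\tx_{j})\|^{2}\le \frac{4L}{j\wedge \sqrt n}(F_{j-1}-F_{j})+\frac{\sigma^{2}I(j^{2}<n)}{j^{2}\wedge n},\]
since $2\bj/(\etaj\mj)=4L/\sqrt{\mj}$. Fixing $\bj=1$, one has $\etaj\mj=(j\wedge \sqrt n)/(2L)$, so the tail-randomisation normaliser $S_{T}\eqdef \sum_{j=T}^{\lceil 2T\rceil}\etaj\mj$ is $\Theta(T^{2}/L)$ when $T\le \sqrt n$ and $\Theta(T\sqrt n/L)$ otherwise. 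Rearranging the recursion yields $F_{j}-F_{j-1}\le \sigma^{2}I(j^{2}<n)/(4Lj)$, whose telescoping from $1$ to $T-1$ gives $F_{T-1}\le \Delta_{f}+\sigma^{2}\log n/(4L)=O(\Delta)$, explaining the $\sigma^{2}\log n/L$ in the definition of $\Delta$.

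\textbf{Stage 2 (non-PL branch).} I would next multiply the specialised recursion by $\etaj\mj$, sum over $j\in\{T,\ldots,\lceil 2T\rceil\}$, and use $\Prob(\cR(T)=j)=\etaj\mj/S_{T}$. The $(F_{j-1}-F_{j})$ contribution telescopes to $2F_{T-1}$ and the residual noise sum is $O(\sigma^{2}/L)$, giving
\[\E\|\nabla f(\tx_{\cR(T)})\|^{2}=O(\Delta/S_{T}).\]
Solving $\Delta/S_{T}\le \eps^{2}$ pins down the minimal $T^{\star}$, and pairing it with the cost schedule $\sum_{j\le\lceil 2T\rceil}(2\mj+\Bj)=\Theta(T^{3}\wedge (n^{3/2}+nT))$ produces the $(L\Delta)^{3/2}/\eps^{3}$ and $n^{3/2}+\sqrt n L\Delta/\eps^{2}$ entries of $\E\comp_{g}(\eps)$; substituting $\Delta_{f}+\sigma^{2}\log T/L$ for $\Delta_{f}$ in the online regime gives the $\sigma^{3}/\eps^{3}$ term.

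\textbf{Stage 3 (PL branch) and main obstacle.} Invoking Assumption~\ref{as:p-l} as $2\mu F_{j}\le \E\|\nabla f(\tx_{j})\|^{2}$ in the Stage~1 recursion yields
\[F_{j}\le r_{j}F_{j-1}+s_{j},\quad r_{j}=\frac{1}{1+\mu\sqrt{\mj}/(2L)},\quad s_{j}=\frac{\sigma^{2}I(j^{2}<n)}{(2\mu+4L/\sqrt{\mj})\Bj},\]
which I would unroll into $F_{T}\le F_{0}\prod_{j} r_{j}+\sum_{j} s_{j}\prod_{k>j}r_{k}$. The log-inverse-contraction $-\sum_{j=1}^{T}\log r_{j}$ grows like $\mu T^{2}/L$ while $j<2L/\mu$ and like $T\log(\mu T/L)$ once $\sqrt{\mj}\ge 2L/\mu$; inverting identifies $T^{\star}=\tilde{\cO}(L/\mu+\log(\mu\Delta/\eps^{2}))$ in the online regime and $T^{\star}=\tilde{\cO}(\sqrt n+(1+\sqrt n L/\mu)\log(L\Delta/(\sqrt n\eps^{2})))$ in the finite-sum regime, while the residual $\sum_{j} s_{j}$ produces an accuracy floor whose inversion yields the $\sigma^{3}/\eps^{3}$ term (or, when targeting $F_{T}\le\eps^{2}$ rather than the gradient, the $\sigma^{3}/(\mu^{3/2}\eps^{3})$ term). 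Passing each $T^{\star}$ through the cost schedule $T^{3}\wedge (n^{3/2}+nT)$ yields the PL entries in the statement, and for $\comp_{g}$ a last application of the one-epoch identity at some $j\ge\sqrt n$ (where the noise vanishes) converts small $F$ into small $\E\|\nabla f\|^{2}$. The principal difficulty will be the regime bookkeeping: the three thresholds $j=\sqrt n$, $j=2L/\mu$, and $j=T$ interact with the piecewise definitions of $\sqrt{\mj}$, $r_{j}$, $s_{j}$, and $S_{T}$, and I must verify that each sub-expression inside the final $\min$ is indeed achieved by the optimal $T^{\star}$ in its regime and that the logarithmic arguments emerge with the correct factors after inverting the contraction.
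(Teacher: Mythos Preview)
Your proposal follows essentially the same route as the paper: specialise Theorem~\ref{thm:one_epoch}, use weighted telescoping over the tail window for the non-PL branch, and use a PL contraction of the function-value gap for the PL branch, then feed the resulting iteration bounds through the cost schedule $T^{3}\wedge(n^{3/2}+nT)$ with case analysis on the thresholds $\sqrt n$ and $L/\mu$. Your Stage~1 derivation of $F_{T}\le O(\Delta)$ is exactly the content of the paper's uniform bound (Lemma~\ref{lem:LjFj_uniform_bound}), and your Stage~2 is the paper's ``third term'' (Lemma~\ref{lem:randomized} applied at epoch $T$).

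The one place where your plan diverges from the paper, and where it is not quite right as stated, is the conversion from small $F_{j}$ to small $\E\|\nabla f(\tx_{j})\|^{2}$ in Stage~3. You write that ``a last application of the one-epoch identity at some $j\ge\sqrt n$ (where the noise vanishes)'' does the job. But in the online PL regime responsible for the first block $L^{3}/\mu^{3}+\sigma^{3}/\eps^{3}+\log^{3}(\cdot)$, the relevant $T^{\star}$ satisfies $T^{\star}\ll\sqrt n$, so you never reach $j\ge\sqrt n$ and the noise term $\sigma^{2}/B_{T}$ does not vanish. The conversion must be carried out at $j=T^{\star}$ itself, and the resulting $\sigma^{2}/B_{T^{\star}}$ term is precisely what forces $T^{\star}\ge\sigma/\eps$ and hence the $\sigma^{3}/\eps^{3}$ contribution (you allude to this ``accuracy floor'' earlier in Stage~3, so the idea is present, but it is inconsistent with the $j\ge\sqrt n$ sentence). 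The paper sidesteps this awkwardness by introducing the Lyapunov sequence $\mathcal{L}_{j}=\lambda_{j}\E\|\nabla f(\tx_{j})\|^{2}+2F_{j}$ (Lemma~\ref{lem:LjFj_recursion}), whose recursion has the same contractive shape as your $F_{j}$ recursion but yields the gradient bound $\E\|\nabla f(\tx_{j})\|^{2}\le\mathcal{L}_{j}/\lambda_{j}$ with the correct noise floor $\sigma^{2}/\tilde B_{j}$ in one stroke (Lemma~\ref{lem:LjFj_bound}). Replacing your ``convert at $j\ge\sqrt n$'' step by either this Lyapunov argument or by a direct one-epoch conversion at $j=T^{\star}$ (and then absorbing the cross term $(L/T)\cdot\sigma^{2}/(\mu T^{2})$ via AM--GM into $L^{3}/\mu^{3}+\sigma^{3}/\eps^{3}$) fixes the gap; the rest of your regime bookkeeping is the same as the paper's.
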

where $\O$ only hides universal constants.
\begin{remark}
  Theorem \ref{thm:quadratic_complexity} continues to hold if $\eta L = \theta \bj / \sqrt{\mj}$ for any $0 < \theta < 1/2$ and $\mj, \Bj \in [a_{1}j^{2}, a_{2}j^{2}]$ for some $0 < a_{1} < a_{2} < \infty$ for sufficiently large $j$.
\end{remark}
First we notice that the logarithm factors are smaller than $\log (\Delta / \eps)$ due to the multiplier $\mu$ and $1 / \sqrt{n}$. If $\mu$ is small or $n$ is large, they can be as small as $O(1)$. In general, to ease comparison, we ignore the logarithm factors. Then Theorem \ref{thm:quadratic_complexity} implies that $\Delta  = \td{\O}\lb\Delta_{f} + \sigma^{2}\rb$,
\begin{equation}
  \label{eq:compg}
  \begin{split}
  \E\comp_{g}(\eps) =& \td{\O}\bigg(\left\{\frac{L^{3}}{\mu^{3}} + \frac{\sigma^{3}}{\eps^{3}} \right\}\wedge \left\{n^{3/2} + \frac{\sqrt{n}L}{\mu}\right\} \\
  & \wedge \frac{(L\Delta_{f})^{3/2} + \sigma^{3}}{\eps^{3}}\wedge \frac{\sqrt{n}(L\Delta_{f} + \sigma^{2})}{\eps^{2}}
\bigg),
 \end{split}
\end{equation}
and
\begin{equation}
  \label{eq:compf}
  \E\comp_{f}(\eps) = \td{\O}\lb \left\{\frac{L^{3}}{\mu^{3}} + \frac{\sigma^{3}}{\mu^{3/2}\eps^{3}}\right\} \wedge \left\{n^{3/2} + \frac{\sqrt{n}L}{\mu}\right\}\rb.
\end{equation}
Theorem \ref{thm:quadratic_complexity} shows an unusually strong adaptivity in that the last two terms match the state-of-the-art complexity \citep[e.g.][]{fang2018spider} for general smooth non-convex optimization while it may be further improved when PL constant is large without any tweaks. 
  
There is a gap between the complexity of \texttt{Q-Geom-SARAH} and the best achievable rate by non-adaptive algorithms in the PL case. This motivates us to consider another variant of \texttt{Geom-SARAH} that performs better for PL objectives while still have guarantees for general smooth nonconvex objectives. Let $\es$ denote the exponential square-root, i.e.
\begin{equation}
  \label{eq:es}
  \es(x) = \exp\{\sqrt{x}\}.
\end{equation}
It is easy to see that
\[\log x = \O(\es(\log x))\]
 and 
\[ \es(\log x) = \O\lb x^{a}\rb \mbox{ for any }a > 0.\]

\begin{theorem}[\texttt{E-Geom-SARAH}]
\label{thm:exponential_complexity}
    Fix any $\alpha > 1$ and $\delta \in (0, 1]$. Set the hyperparameters as
    \[\etaj = \frac{\bj}{2L\sqrt{\mj}},\quad  \bj\le \sqrt{\mj},\]
    where
    \[  \mj = \alpha^{2j}\wedge n,\quad \Bj = \lceil\alpha^{2j}\wedge n\rceil.\]
Let
    \[\Delta' = \Delta_{f} + \frac{\sigma^{2}}{L}.\]
    Then
  \begin{align*}
    &\E\comp_{g}(\eps) =\\
    & \O\bigg(\left\{\frac{L^{2(1 + \delta)}}{\mu^{2(1 + \delta)}}\es\lb 2\log_{\alpha}\left\{\frac{\mu\Delta'}{\eps^{2}}\right\}\rb + \lb\frac{\sigma}{\eps}\rb^{2(1 + \delta)}\right\}\log^{2}n\\
&\wedge \left\{n\log \lb\frac{L}{\mu}\rb + \lb n + \frac{\sqrt{n}L}{\mu}\rb\log\lb\frac{L\Delta'}{\sqrt{n}\eps^{2}}\rb\right\} \\
&\wedge  \frac{(L\Delta')^{2}}{\delta^{2}\eps^{4}}\wedge \frac{\sqrt{n}L\Delta'\log n}{\delta \eps^{2}}\bigg),
  \end{align*}
  and
  \begin{align*}
    &\E\comp_{f}(\eps) = \\
     &\O\bigg(\left\{\frac{L^{2(1 + \delta)}}{\mu^{2(1 + \delta)}}\es\lb 2\log_{\alpha}\left\{\frac{\Delta'}{\eps^{2}}\right\}\rb + \lb\frac{\sigma^{2}}{\mu\eps^{2}}\rb^{1 + \delta}\right\}\log^{2}n\\
    & \wedge \left\{n\log \lb\frac{L}{\mu}\rb + \lb n + \frac{\sqrt{n}L}{\mu}\rb\log\lb\frac{\Delta'}{\eps^{2}}\rb\right\}\bigg),
  \end{align*}
  where $\O$ only hides universal constants and constants that depend on $\alpha$.
\end{theorem}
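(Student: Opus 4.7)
The plan mirrors the proof of Theorem~\ref{thm:quadratic_complexity} but replaces the quadratic-growth analysis by a super-geometric one, which is ultimately what produces the $\es(\cdot)$ factor. Substituting $\etaj = \bj/(2L\sqrt{\mj})$ into Theorem~\ref{thm:one_epoch} yields
\begin{equation*}
\E\|\nabla f(\tx_j)\|^{2}\le \frac{4L}{\sqrt{\mj}}\,\E(f(\tx_{j-1})-f(\tx_j)) + \frac{\sigma^{2}I(B_j<n)}{B_j}.
\end{equation*}
Combining this with the PL inequality $2\mu e_j\le \E\|\nabla f(\tx_j)\|^{2}$, where $e_j\eqdef \E(f(\tx_j)-f(x^\star))$, gives the one-step contraction $e_j\le \rho_j e_{j-1} + \xi_j$ with $\rho_j = 2L/(2L+\mu\sqrt{\mj})$ and noise $\xi_j = \O(\sigma^{2}/(\mu B_j))\,I(B_j<n)$.

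Next I split the index range at $j_{1}\eqdef\lceil\log_{\alpha}(L/\mu)\rceil$. For $j\le j_1$ the contraction is trivial ($\rho_j\le 1$), so this warm-up phase costs only $\O((L/\mu)^{2})$ IFO queries and merely accumulates noise into $e_{j_1}\le e_0+\sum_{j\le j_1}\xi_j$. For $j>j_1$ one has $\mu\sqrt{\mj}\ge L$, so $\rho_j\lesssim 2L/(\mu\alpha^{j})$, and therefore $\prod_{k=j_1+1}^{T}\rho_k$ decays super-geometrically like $\alpha^{-(T-j_1)^{2}/2}$. Forcing this product times $e_{j_1}\lesssim \Delta'$ to fall below $\eps^{2}$ demands $T-j_1=O(\sqrt{\log_{\alpha}(\Delta'/\eps^{2})})$, so the total cost $\sum_{j\le T}\Bj = \O(\alpha^{2T}) = \O((L/\mu)^{2}\cdot \es(2\log_\alpha(\Delta'/\eps^{2})))$, which matches the first term of $\E\comp_f(\eps)$; the $\log^{2}n$ prefactor absorbs noise contributions from the $O(\log n)$ epochs after $\Bj,\mj$ saturate at $n$.

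To pass from a bound on $e_T$ to one on $\E\|\nabla f(\tx_{\cR(T)})\|^{2}$, I use the tail-randomization identity $\Prob(\cR(T)=j)\propto \etaj\mj$, insert the one-epoch inequality into the numerator so that the $e_{j-1}-e_j$ terms telescope, and lower-bound the denominator by its last term $\Theta(\bj\alpha^{(1+\delta)T}/L)$. This gives $\E\|\nabla f(\tx_{\cR(T)})\|^{2} \lesssim Le_{T-1}/\alpha^{(1+\delta)T} + \O(\sigma^{2}/\alpha^{(1+\delta)T})$; the extra $\alpha^{\delta T}$ in the denominator is precisely what promotes the $(L/\mu)^{2}$ factor to $(L/\mu)^{2(1+\delta)}$. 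The remaining branches of the stated $\wedge$ follow by parallel arguments: (i) once $j>\log_\alpha\sqrt{n}$, $\Bj$ and $\mj$ saturate at $n$ and the analysis reduces to the standard finite-sum \texttt{SARAH} recursion, yielding the $\{n\log(L/\mu)+(n+\sqrt{n}L/\mu)\log(\Delta'/\eps^{2})\}$ bound; (ii) dropping PL (setting $\mu=0$) and summing $\etaj\mj\,\E\|\nabla f(\tx_j)\|^{2}$ directly gives the $(L\Delta')^{2}/(\delta^{2}\eps^{4})$ and $\sqrt{n}L\Delta'\log n/(\delta\eps^{2})$ bounds. Taking the minimum over the four branches produces the full expression.

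The main technical hurdle is controlling the accumulated noise $\sum_{k\le T}(\prod_{\ell>k}\rho_\ell)\,\xi_k$ simultaneously with the super-geometric contraction: since $\xi_k=\O(\sigma^{2}/(\mu\alpha^{2k}))$ shrinks while the tail product is super-geometric, a naive $\ell_\infty$ bound loses the $(1+\delta)$ sharpness, and one must split the noise series at $j_1$ and compare each piece to a geometric series whose common ratio is bounded below by a power of $\alpha$. A secondary, more mechanical difficulty is justifying the telescoping of $e_{j-1}-e_j$ when $\bj$ is not exactly constant; since $\bj\le\sqrt{\mj}$ permits $\bj$ to grow, one must either invoke Abel summation and absorb the correction into the hidden constants, or simply take $\bj$ constant within the tail-randomization window $[T,\lceil(1+\delta)T\rceil]$.
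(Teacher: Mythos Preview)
Your overall architecture is sound and would prove the theorem, but it differs from the paper's in a way that causes your explanation of the $(1+\delta)$ exponent to be wrong, and your noise bookkeeping is off.

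\textbf{How the paper does it.} The paper does \emph{not} pass from $e_j=\E(f(\tx_j)-f(x^\star))$ to the gradient via the tail-randomization telescoping for the PL branches. Instead it introduces the Lyapunov function $\mathcal{L}_j=\lambda_j\E\|\nabla f(\tx_j)\|^2+2e_j$ with $\lambda_j=\eta_j m_j/b_j=\sqrt{m_j}/(2L)$, proves the same contraction $\mathcal{L}_j\le (\mu\lambda_{j-1}+1)^{-1}\mathcal{L}_{j-1}+\sigma^2\lambda_j I(B_j<n)/B_j$, and reads off a \emph{pointwise} bound $\E\|\nabla f(\tx_j)\|^2\le \mathcal{L}_j/\lambda_j$. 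After unrolling this over $[T_\mu,j]$ one gets $\E\|\nabla f(\tx_j)\|^2\lesssim \alpha^{-(j-T_\mu)^2/2}\,\mu\Delta_L + \sigma^2/\tilde{B}_j$. The tail-randomized iterate inherits this bound because the weighted average is dominated by its best term; the telescoping identity is used \emph{only} for the two non-PL branches. Consequently the $(1+\delta)$ exponent arises for a mundane reason: the algorithm runs $\lceil(1+\delta)T\rceil$ epochs, so the cost is $\sum_{j\le (1+\delta)T}B_j=\O(\alpha^{2(1+\delta)T})$, and plugging $T\approx\log_\alpha(L/\mu)+\sqrt{2\log_\alpha(\mu\Delta'/\eps^2)}$ or $T\approx\log_\alpha(\sigma/\eps)$ produces $(L/\mu)^{2(1+\delta)}\es(\cdot)$ and $(\sigma/\eps)^{2(1+\delta)}$ respectively.

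\textbf{Where your sketch slips.} Your claim that ``the extra $\alpha^{\delta T}$ in the denominator is precisely what promotes $(L/\mu)^2$ to $(L/\mu)^{2(1+\delta)}$'' is incorrect: that denominator contributes only a lower-order improvement to the required $T$; the $(1+\delta)$ comes from the cost summation. More seriously, your telescoping noise is miscomputed. With $\lambda_j/B_j=\alpha^{-j}/(2L)$, the noise numerator over $[T,(1+\delta)T]$ is $\O(\sigma^2/(L\alpha^{T}))$, not $\O(\sigma^2/L)$, so after dividing by $\sum_j\lambda_j\asymp\alpha^{(1+\delta)T}/L$ the noise is $\O(\sigma^2/\alpha^{(2+\delta)T})$, not $\O(\sigma^2/\alpha^{(1+\delta)T})$. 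With your stated bound you would need $\alpha^{(1+\delta)T}\gtrsim\sigma^2/\eps^2$ and hence cost $\alpha^{2(1+\delta)T}\gtrsim(\sigma/\eps)^4$, which fails to deliver $(\sigma/\eps)^{2(1+\delta)}$ when $\delta<1$. With the correct exponent the route works (and is even slightly sharper than the paper's $\sigma^2/\tilde{B}_T$). Finally, your ``secondary difficulty'' about varying $b_j$ is a non-issue: the telescoping weights are $\lambda_j=\eta_j m_j/b_j$, which absorb $b_j$ exactly, so no Abel summation is needed. The $\log^2 n$ factor in the statement is not ``noise from post-saturation epochs'' either; it emerges from the case analysis reconciling the $T_\mu$- and $T_n$-based bounds when $T_{g1}(\eps)>T_n/(1+\delta)$.
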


Ignoring the logarithm factors and $\es(\cdot)$ factors and setting $\delta = O(1)$, Theorem \ref{thm:exponential_complexity} implies
\begin{align*}
&\E\comp_{g}(\eps) = \td{\O}\bigg(\left\{\frac{L^{2(1 + \delta)}}{\mu^{2(1 + \delta)}} + \lb\frac{\sigma}{\eps}\rb^{2(1 + \delta)}\right\} \\
& \quad \wedge \left\{n + \frac{\sqrt{n}L}{\mu}\right\}\wedge  \frac{(L\Delta_f)^{2} + \sigma^{4}}{\eps^{4}}\wedge \frac{\sqrt{n}(L\Delta_f + \sigma^{2})}{\eps^{2}}\bigg),
\end{align*}

and
\begin{align*}
&\E\comp_{f}(\eps) = \\
&\quad\td{\O}\lb\left\{\frac{L^{2(1 + \delta)}}{\mu^{2(1 + \delta)}} + \lb\frac{\sigma^{2}}{\mu\eps^{2}}\rb^{1 + \delta}\right\} \wedge \left\{n + \frac{\sqrt{n}L}{\mu}\right\}\rb.
\end{align*}

Note that in order to provide convergence result for all three cases we need $\delta$ to be arbitrarily small positive constant, thus one might almost ignore factor $1+\delta$ in the complexity results. Recall that $\delta = 0$ implies $\cR(T) = T$ meaning that the output of an algorithm is the last iterate, which is common setting, e.g. for \texttt{Spiderboost} or \texttt{SARAH}, under assumption $\mu > 0$. 

\begin{figure*}[t!]
\centering
\hfill
\subfigure[\texttt{mushrooms}]
{\includegraphics[width=0.325\textwidth]{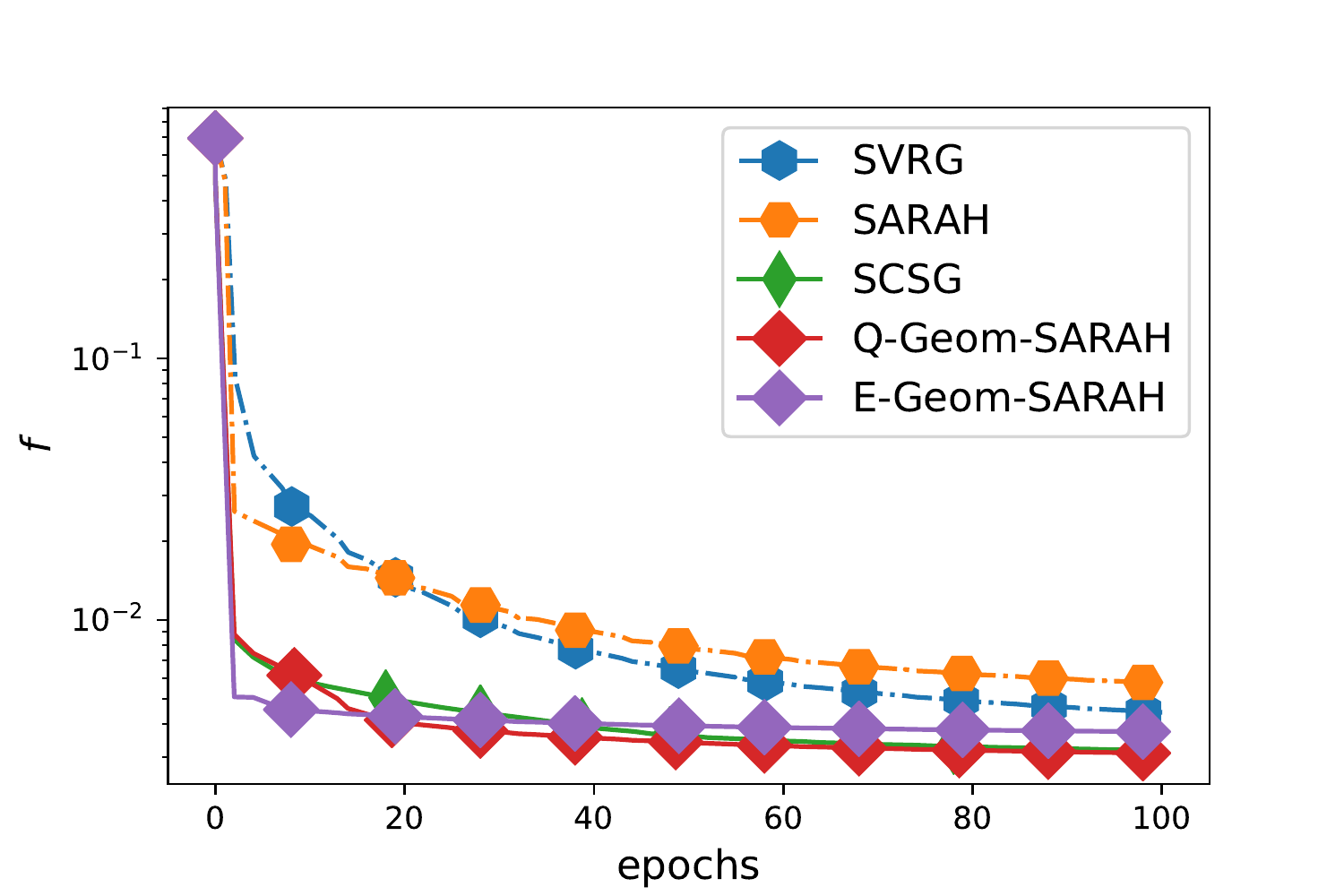}}
\hfill
\subfigure[\texttt{w8a}]
{\includegraphics[clip,width=0.325\textwidth]{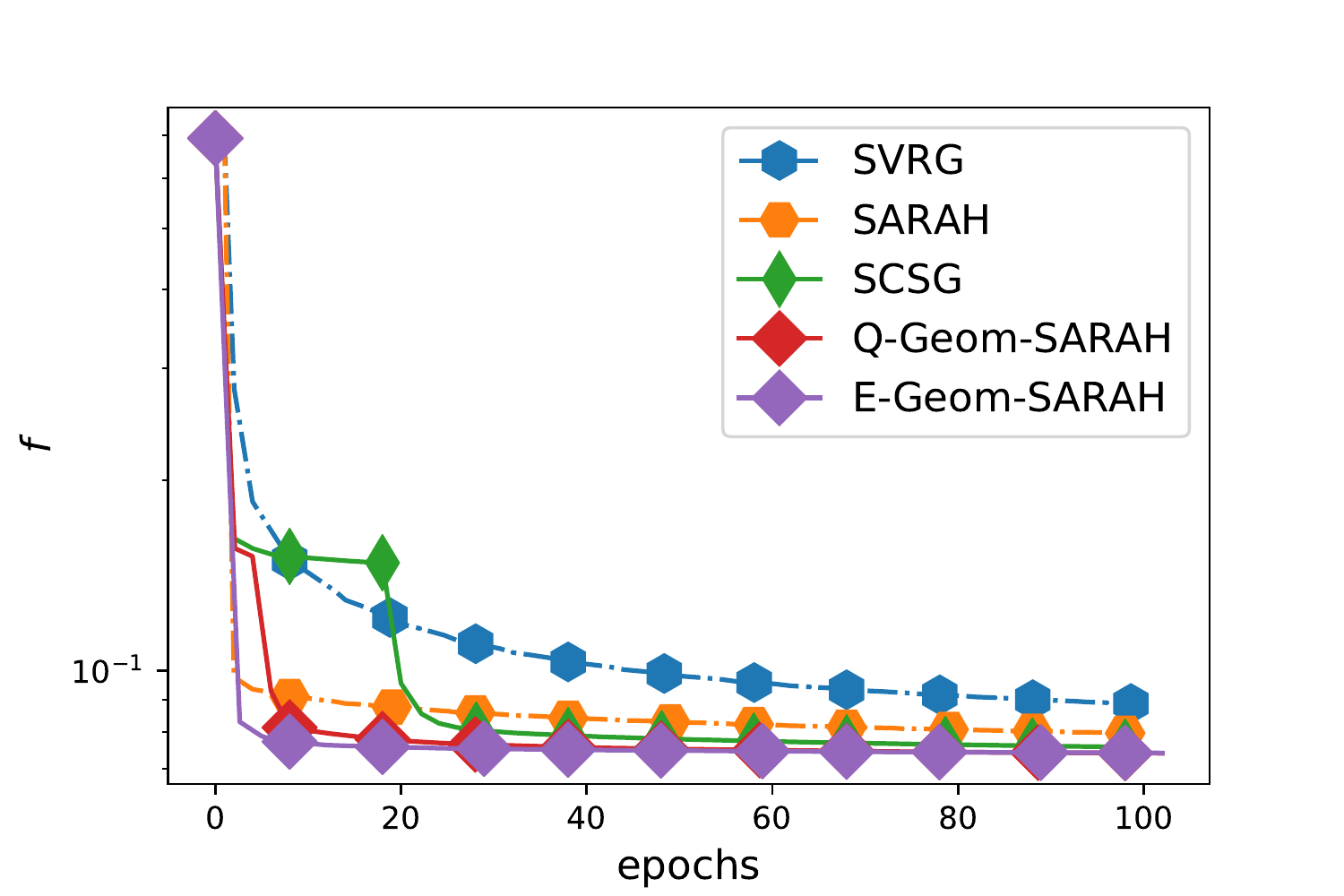}}
\hfill
\subfigure[\texttt{ijcnn1}]
{\includegraphics[width=0.325\textwidth]{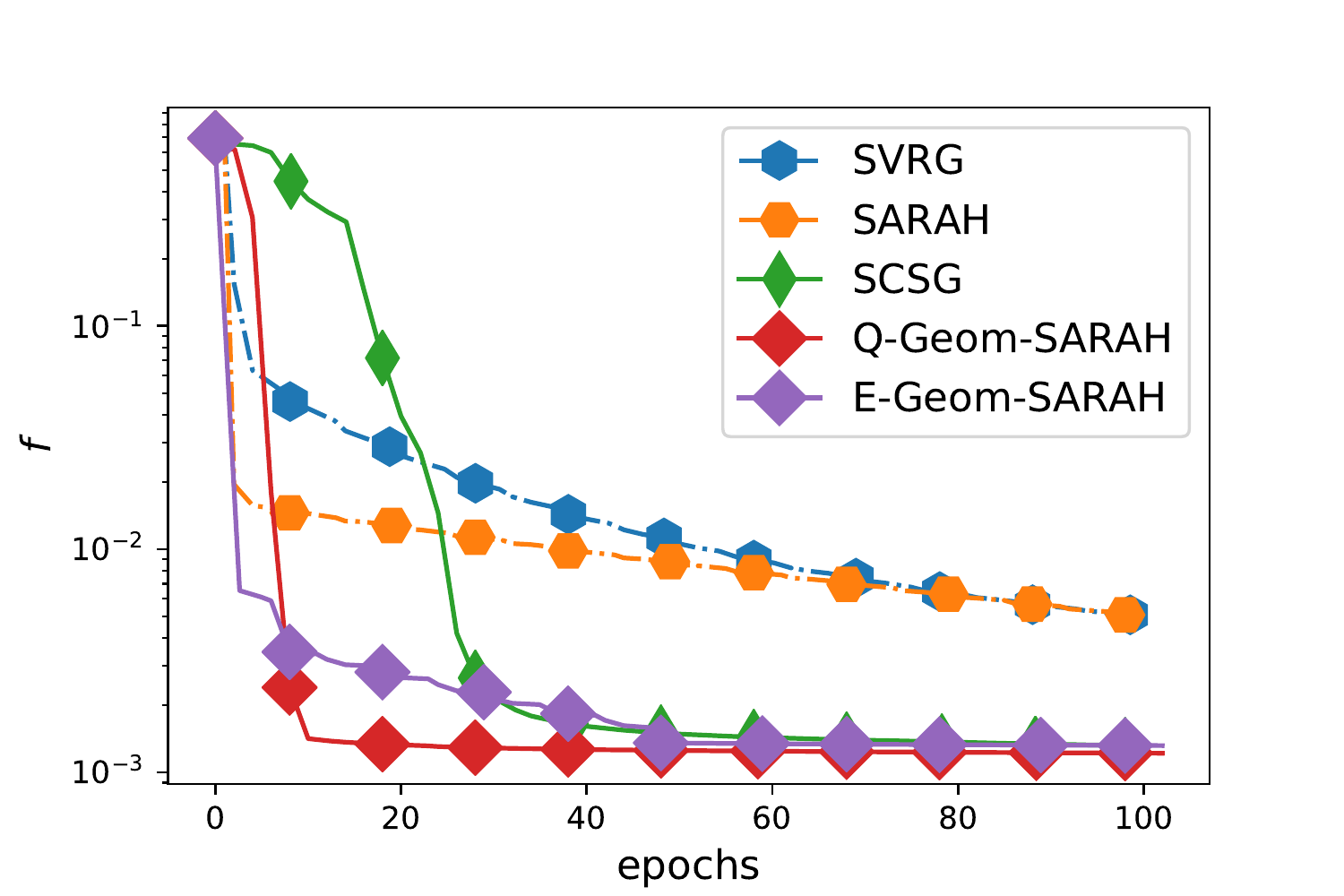}}
\hfill\null
\caption{Comparison of convergence with respect to functional value for different high precision VR methods.\label{fig:fval_high}}
\end{figure*}

\begin{figure*}[t!]
\centering
\hfill
\subfigure[\texttt{mushrooms}]
{\includegraphics[width=0.325\textwidth]{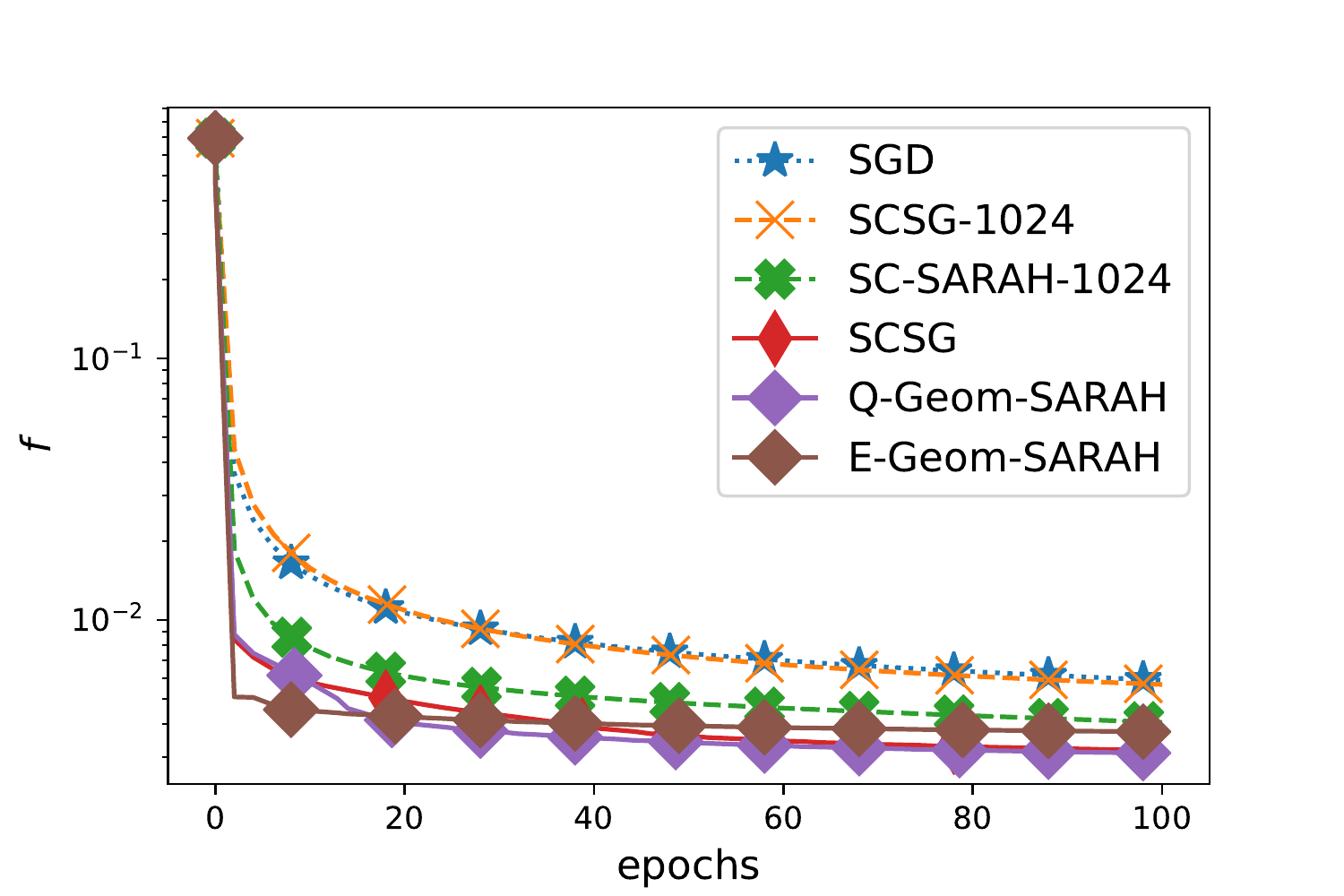}}
\hfill
\subfigure[\texttt{w8a}]
{\includegraphics[width=0.325\textwidth]{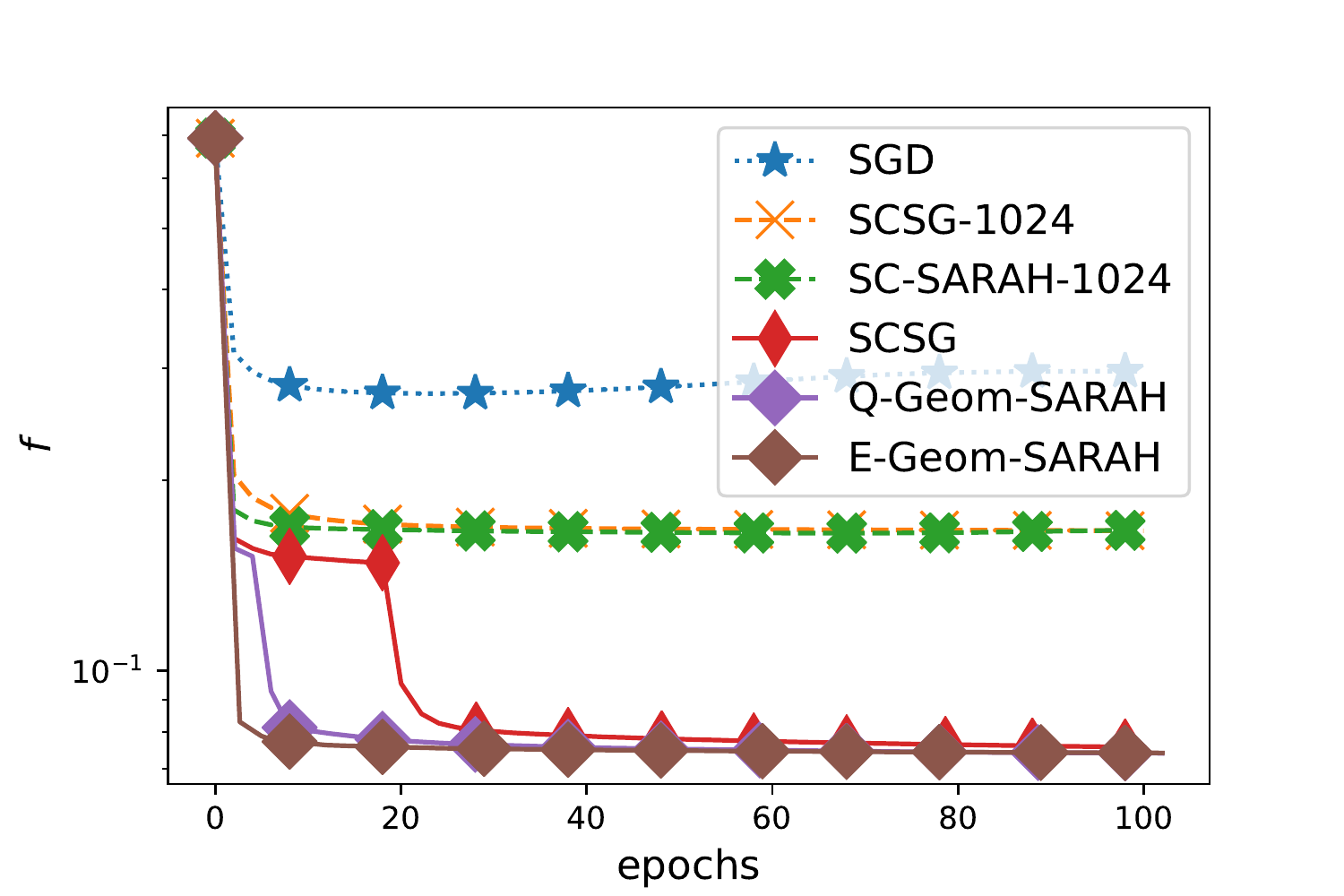}}
\hfill
\subfigure[\texttt{ijcnn1}]
{\includegraphics[width=0.325\textwidth]{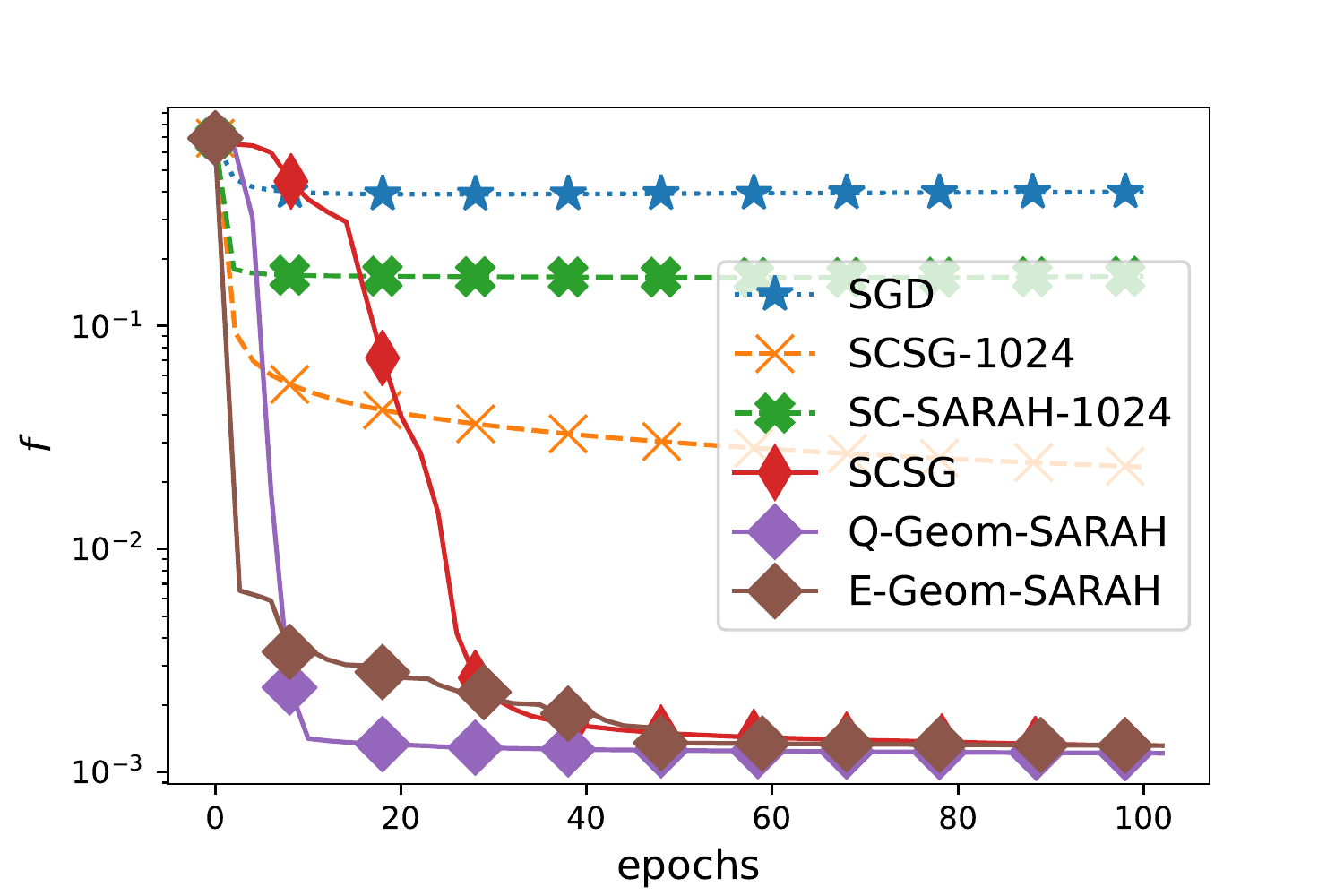}}
\hfill\null
\caption{Comparison of convergence with respect to functional value for different low precision VR methods.\label{fig:fval_low}}
\end{figure*}

\subsection{Better rates for non-adaptive \texttt{Geom-SARAH}}

In this section, we provide the versions of our algorithms, which are neither almost universal nor $\eps$-independent, but they either reach the known lower bounds or best achievable results known in literature. We include this result for two reasons. Firstly, we want to show there is a small gap between results in Section~\ref{sec:complexity} and the best results, which might be obtained. We conjecture that this gap is inevitable. Secondly, our complexity result for the functional gap beats the best known complexity result known in literature which is $\mathcal{O}\left(\log^3 B \left[B + \frac{\sqrt{B}}{\mu} \right] \log(\frac{1}{\epsilon}) \right)$, where $B=\O\lb \frac{\sigma^2}{\mu\epsilon^2} \wedge n\rb$ \citep{zhou2018stochastic}
, where our complexity result does not involve $ \log^3 B$ factor. 
Finally, we obtain very interesting result for the norm of the gradient, which we discuss later in this section. The proofs are relegated into Appendix \ref{app:proofs}.

\begin{theorem}[Non-adaptive]
\label{thm:non-adaptive}
Set the hyperparameters as
\[
\eta_j = \frac{b_j}{2L\sqrt{m_j}}, \quad b_j \leq \sqrt{m_j},\quad B_j = m_j = B.
\]
\begin{enumerate}
\item If $B=\left( \frac{\sigma^2}{4\mu\epsilon^2} \wedge n \right)$ and $\delta = 0$ then
\begin{align*}
&\E\comp_{f}(\eps) = \O\lb \lb B + \frac{\sqrt{B}L}{\mu}\rb\log\lb\frac{\Delta_f}{\eps^{2}}\rb\rb
\end{align*}
\item If $B = \left( \left\{\frac{8\sigma^{2}}{\eps^{2}} + \frac{8\sigma^{4/3}L^{2/3}}{\eps^{4/3}\mu^{2/3}}\right\} \wedge n \right)$ 
  and $\delta = 0$ then
\[
\E\comp_{g}(\eps) =  \O\lb   \lb  B +  \frac{\sqrt{B}L}{\mu}\rb\log\frac{L\Delta_f}{\sqrt{B}\eps^2} \rb,
\]
\end{enumerate}
\end{theorem}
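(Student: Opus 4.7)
The plan is to combine Theorem~\ref{thm:one_epoch} with the PL inequality (Assumption~\ref{as:p-l}) to obtain a linear contraction on $\E\Delta_{j}:=\E(f(\tx_{j}) - f(x^\star))$, iterate it, and then choose $B$ separately for the two parts so that the resulting variance residual matches the target accuracy. With the prescribed $\etaj = \bj/(2L\sqrt{\mj})$ and $\mj = \Bj = B$, the coefficient $2\bj/(\etaj\mj)$ in Theorem~\ref{thm:one_epoch} simplifies to $4L/\sqrt{B}$, so
$$2\mu\,\E\Delta_{j} \le \E\|\nabla f(\tx_{j})\|^{2} \le \frac{4L}{\sqrt{B}}\,\E(\Delta_{j-1} - \Delta_{j}) + \frac{\sigma^{2} I(B<n)}{B}.$$
Rearranging gives the one-step contraction $\E\Delta_{j} \le \rho\,\E\Delta_{j-1} + r$ with $\rho = (1 + \mu\sqrt{B}/(2L))^{-1}$ and $r \le \sigma^{2} I(B<n)/(2\mu B)$; iterating yields
$$\E\Delta_{T} \le \rho^{T}\Delta_{f} + \frac{\sigma^{2} I(B<n)}{2\mu B}.$$
Since $1/\log(1/\rho) = \O(1 + L/(\mu\sqrt{B}))$, driving the geometric term below a target $\tau$ costs $T = \O((1 + L/(\mu\sqrt{B}))\log(\Delta_{f}/\tau))$ epochs, and each epoch costs $\O(B)$ IFO queries in expectation.

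For the first claim the target is $\E\Delta_{T} \le \eps^{2}$. Choosing $B = (\sigma^{2}/(C\mu\eps^{2})) \wedge n$ for a sufficiently large universal $C$ forces the residual below $\eps^{2}/2$, and then $T = \O((1 + L/(\mu\sqrt{B}))\log(\Delta_{f}/\eps^{2}))$ epochs suffice to drive the geometric term below $\eps^{2}/2$; multiplying by $\O(B)$ per epoch gives the stated $\E\comp_{f}(\eps) = \O((B + \sqrt{B}L/\mu)\log(\Delta_{f}/\eps^{2}))$.

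For the second claim the target is $\E\|\nabla f(\tx_{T})\|^{2} \le \eps^{2}$. The key move -- and the step where I expect the main friction -- is to avoid the naïve reduction $\|\nabla f\|^{2} \le 2L\Delta$ via smoothness (which would force $B \gtrsim L\sigma^{2}/(\mu\eps^{2})$) and instead apply Theorem~\ref{thm:one_epoch} one more time at $j = T$,
$$\E\|\nabla f(\tx_{T})\|^{2} \le \frac{4L}{\sqrt{B}}\,\E\Delta_{T-1} + \frac{\sigma^{2} I(B<n)}{B},$$
and then substitute the contraction estimate for $\E\Delta_{T-1}$. The right-hand side splits into a geometric piece $4L\rho^{T-1}\Delta_{f}/\sqrt{B}$, a mixed residual $2L\sigma^{2}/(\mu B^{3/2})$, and the raw variance $\sigma^{2}/B$. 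Forcing the latter two below $\eps^{2}/3$ requires $B \gtrsim \sigma^{2}/\eps^{2}$ and $B \gtrsim L^{2/3}\sigma^{4/3}/(\mu^{2/3}\eps^{4/3})$ respectively, which is exactly the prescribed choice of $B$; then $T = \O((1 + L/(\mu\sqrt{B}))\log(L\Delta_{f}/(\sqrt{B}\eps^{2})))$ kills the geometric piece, and multiplying by $\O(B)$ per epoch yields the claimed $\E\comp_{g}(\eps)$. The tightness -- in particular, the $\sigma^{4/3}$ scaling rather than $\sigma^{2}$ -- comes entirely from the fact that in this second application of the one-epoch bound the $1/\sqrt{B}$ prefactor multiplies the already-contracted $\E\Delta_{T-1}$ rather than an undecayed functional gap; this is the single non-routine observation in the argument, and everything else is bookkeeping.
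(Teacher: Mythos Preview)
Your proposal is correct and essentially matches the paper's argument. The only cosmetic difference is in the second claim: the paper tracks the Lyapunov function $\Ll_j = \lambda\,\E\|\nabla f(\tx_j)\|^2 + 2\,\E(f(\tx_j) - f(x^\star))$ through the recursion \eqref{eq:Lj_recursion} and extracts $\E\|\nabla f(\tx_T)\|^2 \le \Ll_T/\lambda$, whereas you iterate the function-value contraction (which is \eqref{eq:Fj_recursion} specialized to constant $\lambda = \sqrt{B}/(2L)$) and then apply Theorem~\ref{thm:one_epoch} once more at $j=T$ with $\E(f(\tx_{T-1}) - f(\tx_T)) \le \E\Delta_{T-1}$; both routes produce the same three residual terms ($\sigma^2/B$, $L\sigma^2/(\mu B^{3/2})$, and the geometrically decaying piece $4L\rho^{T-1}\Delta_f/\sqrt{B}$) up to absolute constants, and hence the same complexity.
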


Looking into these result, there is one important thing to note. While these methods reach state-of-the-art performance for PL objectives, they provide no guarantees for the case $\mu = 0$.

For the ease of presentation we assume $\sigma^2, \Delta_f, L = \cO(1)$. For \texttt{Q-Geom-SARAH}, we can see that in term of $\td{\cO}$ notation, we match the best reachable rate in case $\mu = 0$. For the case $\mu > 0$, we see slight degradation in performance for both high and low precision regimes. For \texttt{E-Geom-SARAH}, we can see a bit different results. There is a $\nicefrac{1}{\eps}$ degradation comparing to the low precision case and exact match for high precision case with $\mu = 0$. For the case $\mu>0$, \texttt{E-Geom-SARAH} matches the best achievable rate for high precision and also for in low precision regime in the case when rate is dominated by factor $\nicefrac{1}{\eps^2}$.  Comparison to other methods together with the dependence on parameters can be found in Tables~\ref{tab:nonconvex} and \ref{tab:pl}.

One interesting fact to note is that in the second case of Theorem \ref{thm:non-adaptive}, if $\mu\sim \eps$ and $L, \Delta_f, \sigma^{2} = \O(1)$, $B \sim 1 / \eps^{2}$ and
\[\E \comp_{g}(\eps)= \O\lb \frac{1}{\eps^{2}}\log\lb\frac{1}{\eps}\rb\rb.\]
This is even logarithmically better than the rate $\O(\eps^{-2}\log^{3}(1 / \mu))$ obtained by \citet{allen2018make} for strongly-convex functions. Note that a strongly convex function with modulus $\mu$ is always $\mu$-PL. We plan to further investigate this strong result in the future.

\begin{figure*}[t!]
\centering
\hfill
\subfigure[\texttt{mushrooms}]
{\includegraphics[width=0.32\textwidth]{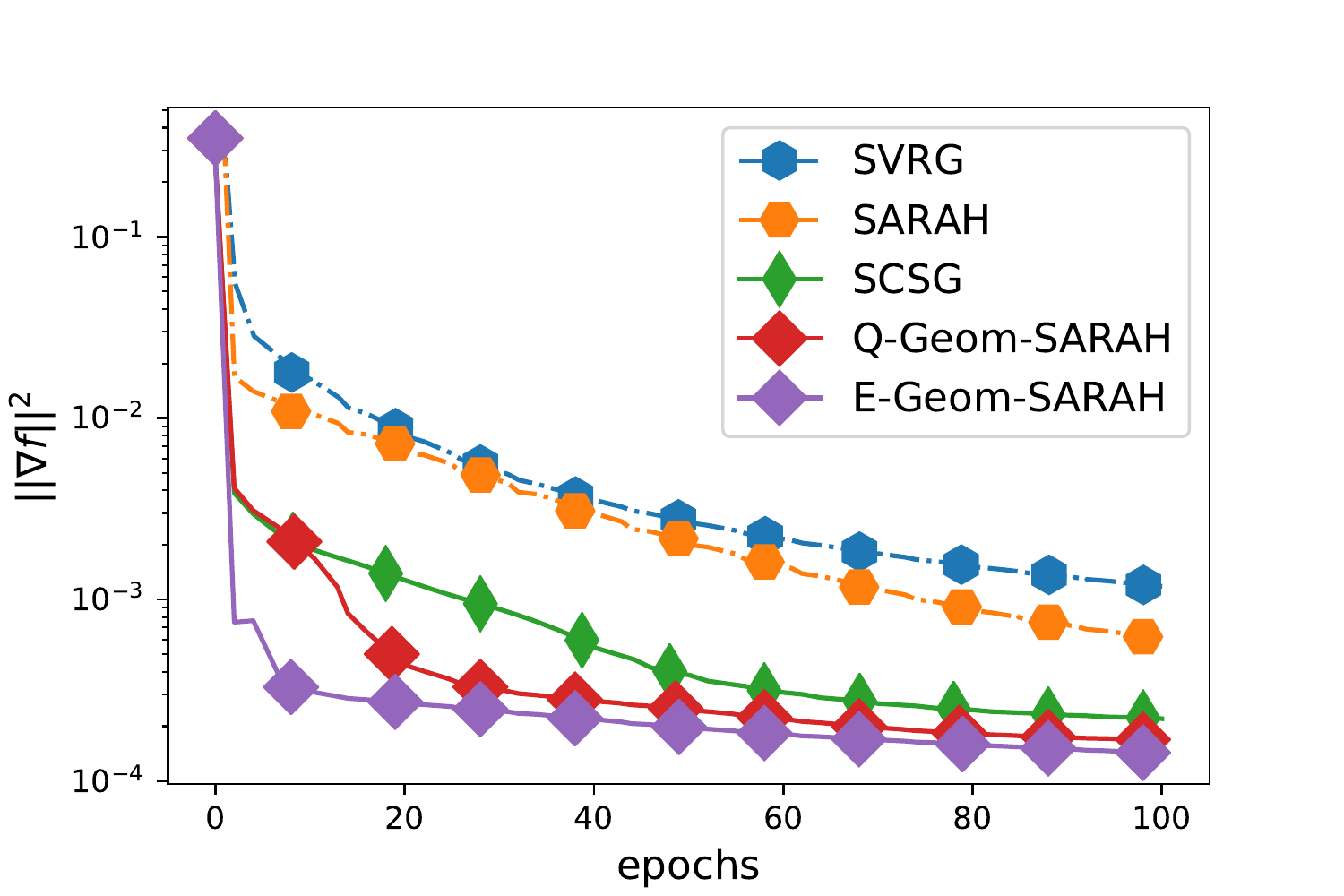}}
\hfill
\subfigure[\texttt{w8a}]
{\includegraphics[width=0.32\textwidth]{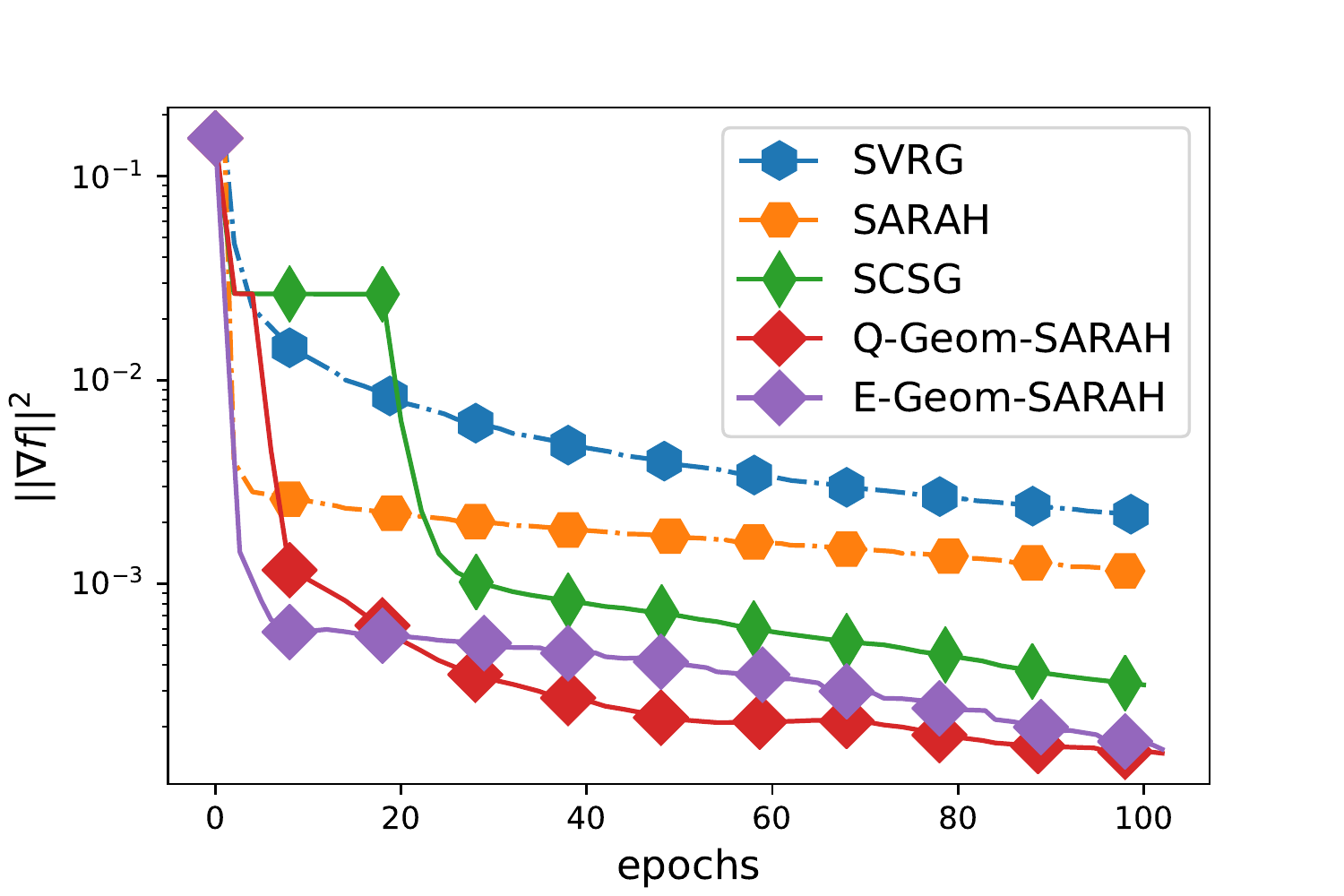}}
\hfill
\subfigure[\texttt{ijcnn1}]
{\includegraphics[width=0.32\textwidth]{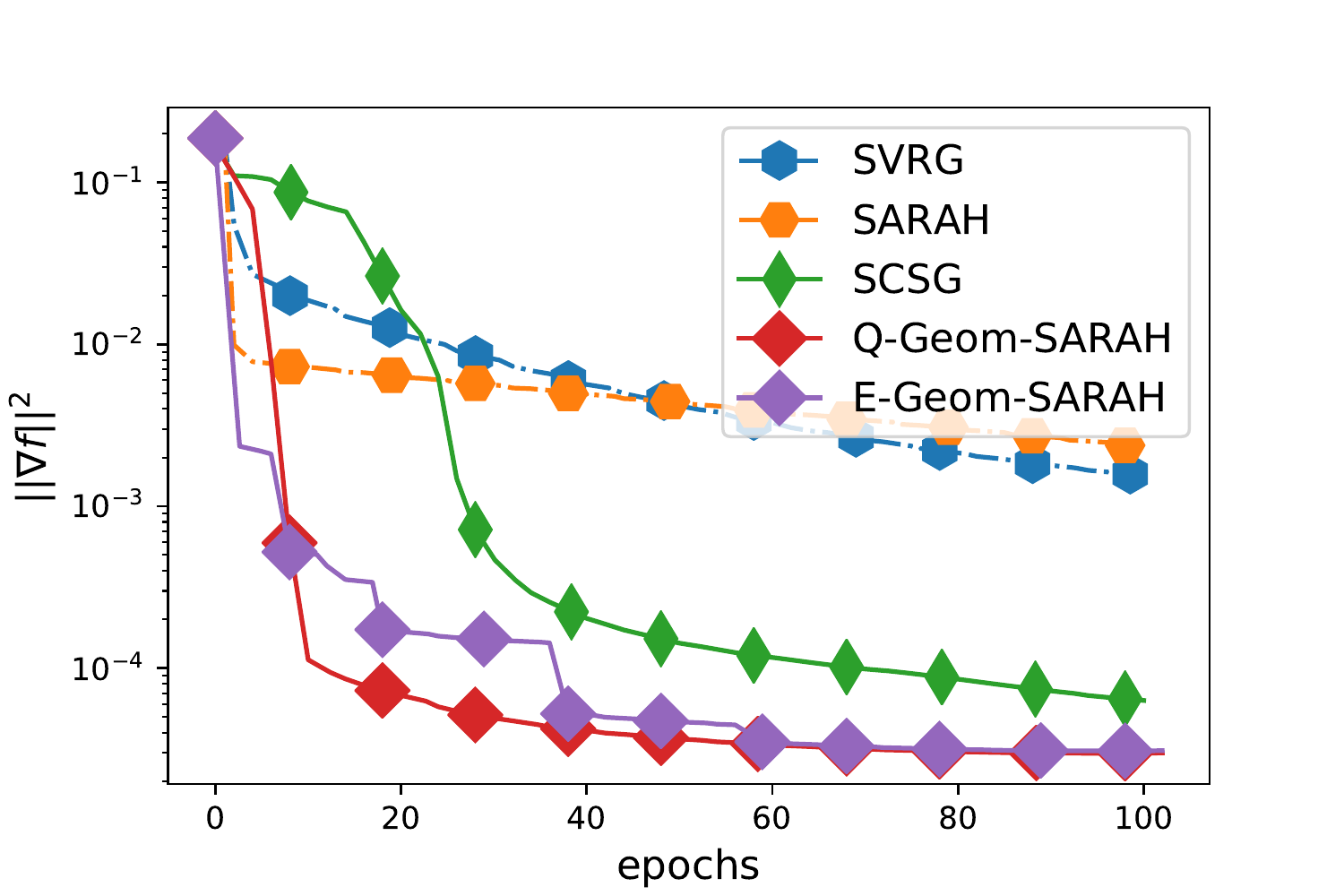}}
\hfill\null
\caption{Comparison of convergence with respect to norm of the gradient  for different high precision VR methods.\label{fig:gval_high}}
\end{figure*}

\begin{figure*}[t!]
\centering
\hfill
\subfigure[\texttt{mushrooms}]
{\includegraphics[width=0.32\textwidth]{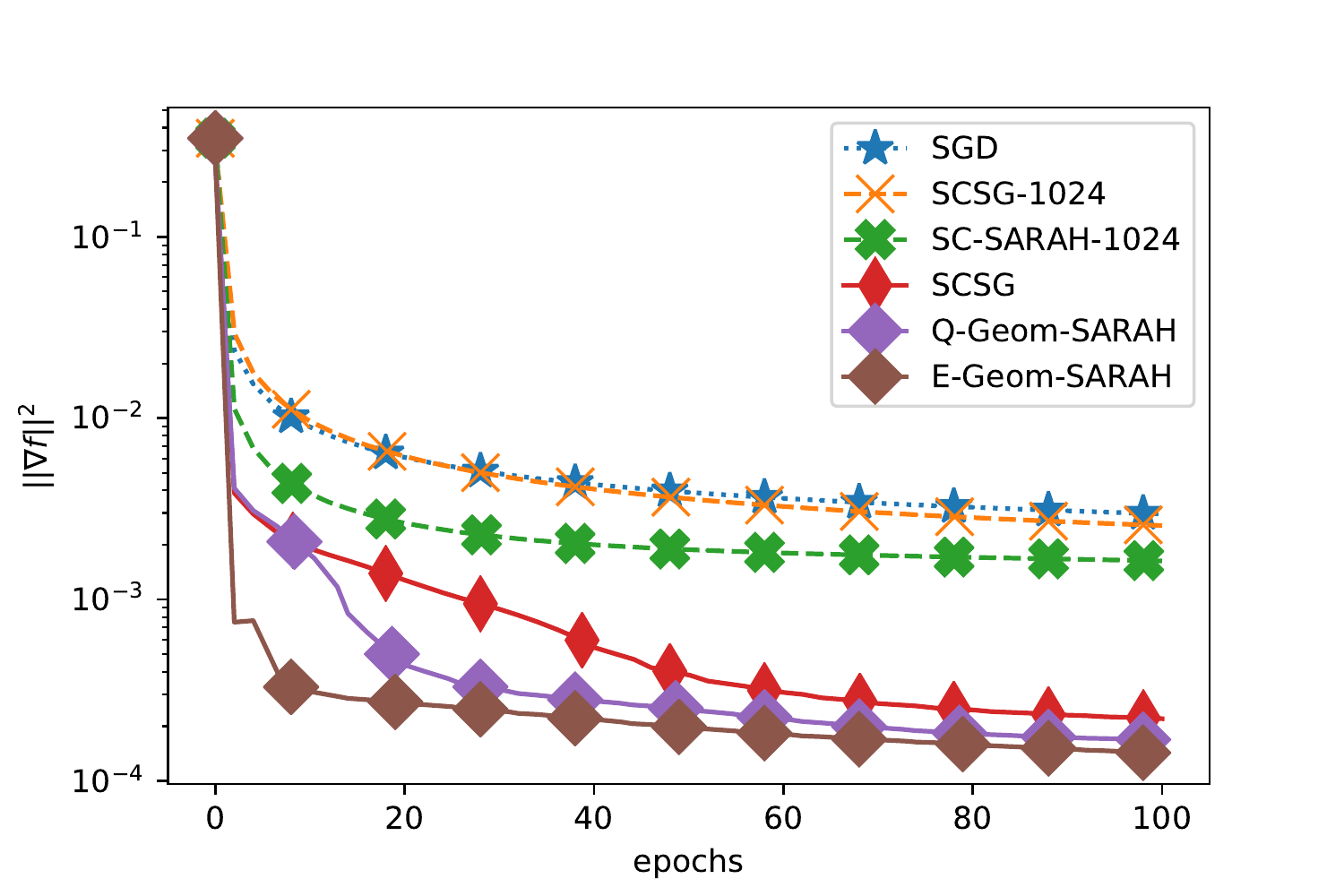}}
\hfill
\subfigure[\texttt{w8a}]
{\includegraphics[width=0.32\textwidth]{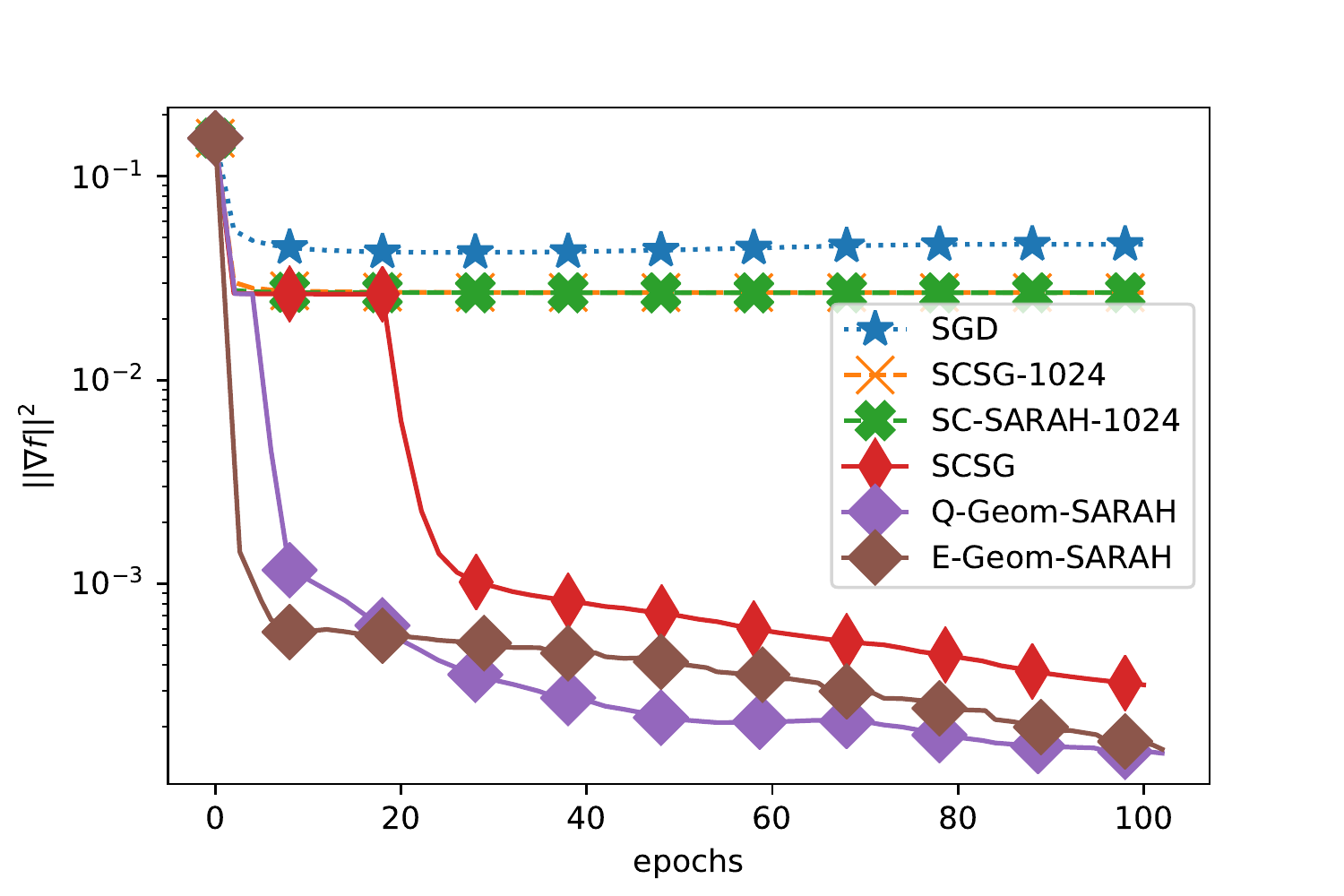}}
\hfill
\subfigure[\texttt{ijcnn1}]
{\includegraphics[width=0.32\textwidth]{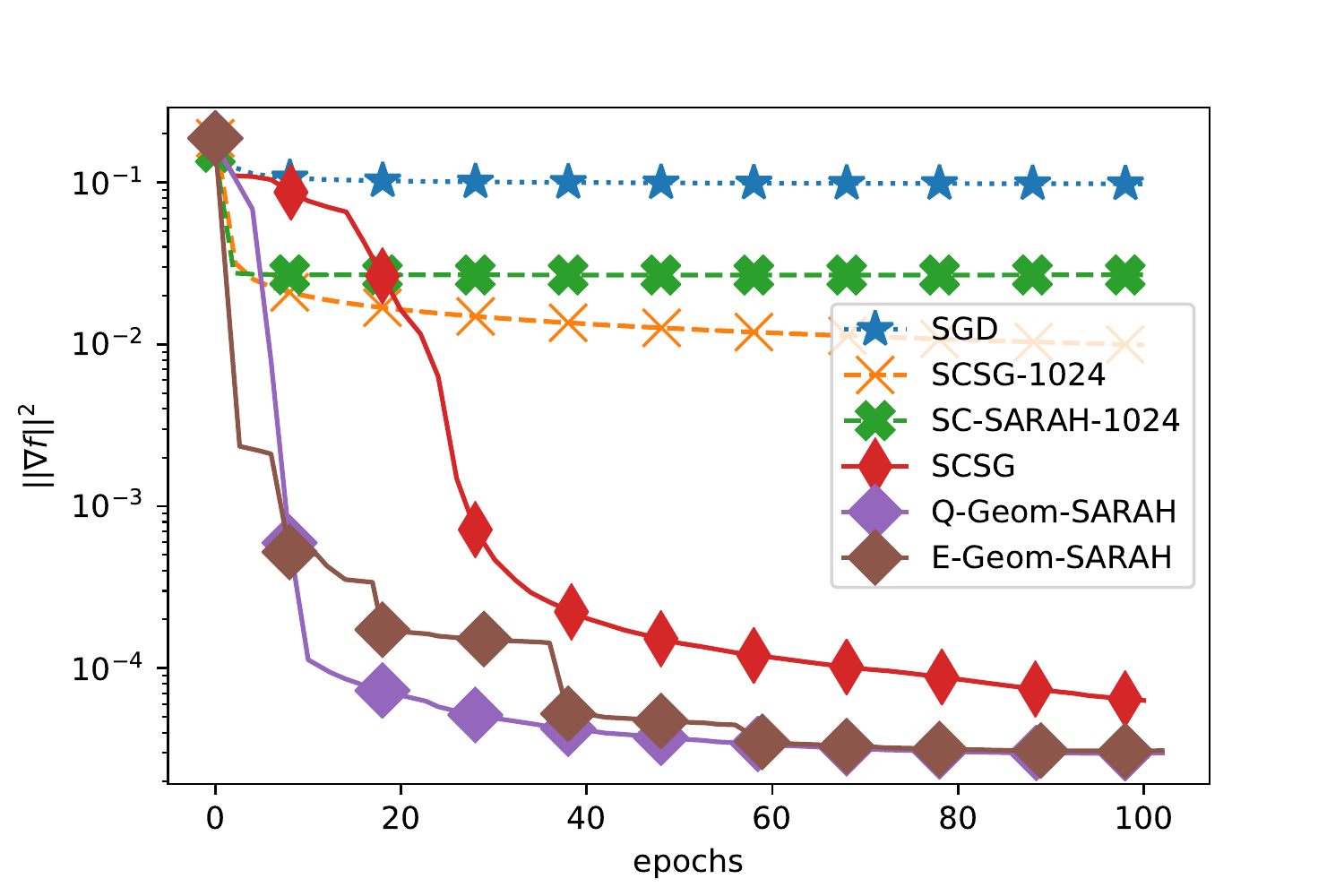}}
\hfill\null
\caption{Comparison of convergence with respect to norm of the gradient for different low precision VR methods.\label{fig:gval_low}}
\end{figure*}

\section{Experiments}

To support our theoretical result, we conclude several experiments using logistic regression with non-convex penalty. The objective that we minimize is of the form
\[
 -\frac{1}{n}\sum_{i=1}^n \left[ f_{i}(x) \eqdef \log\lb1 +e^{-y_iw_i^\top x}\rb + \frac{\lambda}{2} \sum_{j=1}^d \frac{x_j^2}{1+x_j^2}\right],
\]
where $w_i$'s are the features, $y_i$'s the labels and $\lambda > 0$ is a regularization parameter. This fits to our framework with $L_{f_i} = \nicefrac{\norm{a_i}^2}{4} + \lambda$. We compare our adaptive methods against state-of-the-art methods in this framework--\texttt{SARAH}~\cite{nguyen2019finite}, \texttt{SVRG}~\cite{reddi2016stochastic}, \texttt{Spiderboost}~\cite{wang2018spiderboost},  adaptive and fixed version of  \texttt{SCSG}~\cite{lei2017non} with big batch sizes $B = cj^{3/2} \wedge n$ for some constant $c$. We use all the methods with their theoretical parameters. We use \texttt{SARAH} and \texttt{Spiderboost} with constant step size $\nicefrac{1}{2L}$, which implies batch size to be $b = \sqrt{n}$. In this scenario, \texttt{Spiderboost} and \texttt{SARAH} are the same algorithm and we refer to both as \texttt{SARAH}. The same step size is also used for \texttt{SVRG} which requires batch size $b = n^{2/3}$. The same applies to \texttt{SCSG} and our methods and we adjust parameter accordingly, e.g. this applies that for our methods we set $b_j = \sqrt{m_j}$. For \texttt{E-Geom-SARAH}, we chose $\alpha = 2$. We also include \texttt{SGD} methods with the same step size for comparison. All the experiments are run with $\lambda = 0.1$. We use three dataset  from LibSVM\footnote{available on \url{https://www.csie.ntu.edu.tw/~cjlin/libsvmtools/datasets/}}:  \textit{mushrooms} ($n = 8, 124, p = 112$), \textit{w8a} ($n = 49, 749, p = 300$), and \textit{ijcnn1} ($n = 49, 990, p = 22$).

We run two sets of experiments-- low and high precision. Firstly, we compare our adaptive methods with the ones that can guarantee convergence to arbitrary precision $\eps$ -- \texttt{SARAH}, \texttt{SVRG} and adaptive \texttt{SCSG}. Secondly, we conclude the experiment where we compare our adaptive methods against ones that should provide better convergence in low precision regimes-- \texttt{SARAH} and  \texttt{SVRG} with big batch size $B = 1024$, adaptive \texttt{SCSG} and  \texttt{SGD} with batch size equal to $32$. For all the experiments, we display functional value and norm of the gradient with respect to number of epochs (IFO calls divided by $n$). For all Figures~\ref{fig:fval_high}, \ref{fig:fval_low}, \ref{fig:gval_high} and \ref{fig:gval_low}, we can see that our adaptive method perfoms the best in all the regimes and the only method that reaches comparable performance is \texttt{SCSG}.

\section{Conclusion}

We have presented two new methods \texttt{Q-Geom-SARAH} and \texttt{E-Geom-SARAH}, a gradient-based algorithm for the non-convex finite-sum/online optimization problem. We have shown that our methods are both $\eps$-independent and almost-universal algorithms. We obtain these properties via \textit{geometrization} and careful batch size construction. Our methods provide strictly better results comparing to other methods as these are the only methods which can adapt to multiple regimes, i.e. low/high precision or PL with $\mu =0/\mu > 0$. Moreover, we show that the obtained complexity is closed to or even matches the best achievable one in all the regimes. 

\bibliography{literature}
\bibliographystyle{icml2020}

\clearpage

\appendix
\onecolumn
\part*{Appendix}
\setlength{\footskip}{20pt}

\section{Proofs}\label{app:proofs}

\subsection{\textbf{Proof of Lemma \ref{lem:geom}}}
By definition, 
  \begin{align*}
   E (D_{N} - D_{N + 1})  &= \sum_{n\geq 0}(D_{k} - D_{k+1})\cdot \gamma^{k}(1 - \gamma)\\ 
&= (1 - \gamma)( D_{0} - \sum_{k \geq 1}D_{k}(\gamma^{k-1} - \gamma^{k})) = (1 - \gamma)\lb\frac{1}{\gamma}D_{0} - \sum_{k \geq 0}D_{k}(\gamma^{k-1} - \gamma^{k})\rb\\ 
&=  (1 - \gamma)\lb \frac{1}{\gamma}D_{0} - \frac{1}{\gamma}\sum_{k\ge 0}D_{k}\gamma^{k}(1 - \gamma)\rb = \frac{1 -\gamma}{\gamma}  (D_{0} - \E{ D_{N}}),
\end{align*}
where the last equality is implied by the condition that $\E{ |D_{N}|} < \infty$.

In order to use Lemma \ref{lem:geom}, one needs to show $\E{ |D_{N}|} < \infty$. We start with the following lemma as the basis to apply geometrization. The  proof is distracting and relegated to the end of this section. 
\begin{lemma}\label{lem:exp_exist}
Assume that $\etaj L \le 1$. Then $\E |D_{\Nj}^{(s)}| < \infty$ for $s = 1, 2, 3$, where
  \[D_{k}^{(1)} = \E_{j}\norm{\nuj_{k} - \nabla f(\xj_{k})}^{2}, \quad D_{k}^{(2)} = \E_{j}f(\xj_{k}), \quad D_{k}^{(3)} = \E_{j}\norm{\nabla f(\xj_{k})}^{2},\]
  and $\E_{j}$ denotes the expectation over the randomness in $j$-th outer loop. 
\end{lemma}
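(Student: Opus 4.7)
The plan is to verify the sufficient condition highlighted in Remark \ref{rem:geom}: for each $s\in\{1,2,3\}$, the deterministic sequence $|D_k^{(s)}|$ grows at most polynomially in $k$ with all epoch-level hyperparameters $(\eta_j, b_j, B_j, \tilde{x}_{j-1})$ held fixed. Because a geometric random variable has finite moments of every order, polynomial growth immediately gives $\E|D_{N_j}^{(s)}|<\infty$ for all three sequences.

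I would first control the iterate displacement by unrolling the inner-loop update: $x_k^{(j)}-\tilde{x}_{j-1}=-\eta_j\sum_{i=0}^{k-1}\nu_i^{(j)}$, so Cauchy-Schwarz yields $\|x_k^{(j)}-\tilde{x}_{j-1}\|^2\le k\eta_j^2\sum_{i=0}^{k-1}\|\nu_i^{(j)}\|^2$. Setting $a_i=\E_{j}\|\nu_i^{(j)}\|^2$ and combining with $L$-smoothness gives
\[
D_k^{(3)}\le 2\|\nabla f(\tilde{x}_{j-1})\|^2+2L^2k\eta_j^2\sum_{i=0}^{k-1}a_i,
\]
and the descent lemma, together with the standing assumption $f(x^\star)>-\infty$, provides two-sided bounds for $|D_k^{(2)}|$ of the same form (up to a linear-in-$k$ term from Cauchy-Schwarz on the inner product $\langle \nabla f(\tilde{x}_{j-1}),x_k^{(j)}-\tilde{x}_{j-1}\rangle$).

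Second, I would invoke the martingale structure of the SARAH estimator: conditional on the $j$-th outer iteration, the sequence $\{\nu_k^{(j)}-\nabla f(x_k^{(j)})\}_k$ is a martingale whose increments have conditional second moment at most $(L^2/b_j)\|x_{k+1}^{(j)}-x_k^{(j)}\|^2=(L^2\eta_j^2/b_j)\|\nu_k^{(j)}\|^2$, by Assumption \ref{as:smooth} applied in its squared form. Orthogonality of martingale differences gives
\[
D_k^{(1)}\le D_0^{(1)}+\frac{L^2\eta_j^2}{b_j}\sum_{i=0}^{k-1}a_i,
\]
while Assumption \ref{as:var} supplies $D_0^{(1)}\le\sigma^2/B_j$.

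Combining these three estimates through $a_k\le 2D_k^{(1)}+2D_k^{(3)}$ produces a coupled recursive inequality of the form $a_k\le A+Bk\sum_{i<k}a_i$, where $A,B$ depend only on $L,\eta_j,b_j,B_j$ and $\tilde{x}_{j-1}$. The main obstacle is that a direct iteration of this recursion gives super-polynomial rather than polynomial growth; to close the argument, I would use the hypothesis $\eta_j L\le 1$ to keep $B$ bounded by a universal constant, and exploit the sharper SARAH variance recursion available under the with- and without-replacement sampling of Algorithm \ref{alg:sc_sarah}, in which the one-step increment is decreased by $\|\nabla f(x_{k+1}^{(j)})-\nabla f(x_k^{(j)})\|^2$. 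This subtraction cancels the dominant growth term and allows an inductive polynomial bound of sufficiently large fixed degree to close. Once $a_k=O(\mathrm{Poly}(k))$ is established, substituting back into the inequalities above yields $|D_k^{(s)}|=O(\mathrm{Poly}(k))$ for $s=1,2,3$ simultaneously, and Remark \ref{rem:geom} concludes the proof.
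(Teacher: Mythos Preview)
Your proposal has a genuine gap at the point you yourself flag as ``the main obstacle.'' The coupled recursion
\[
a_k \le A + Bk\sum_{i<k} a_i
\]
that you derive cannot yield polynomial growth no matter how small the constant $B$ is: setting $S_k=\sum_{i<k}a_i$ gives $S_{k+1}\le (1+B(k+1))S_k+A$, which grows like $\prod_{j\le k}(1+Bj)\sim e^{Bk^2/2}$. Neither of your proposed fixes works. Keeping $B$ bounded via $\eta_j L\le 1$ does not touch the $k\sum_{i<k}$ structure, and the ``sharper SARAH variance recursion'' subtracts only $\tfrac{1}{b_j}\|\nabla f(x_{k+1}^{(j)})-\nabla f(x_k^{(j)})\|^2$, which is at most $\tfrac{L^2\eta_j^2}{b_j}\|\nu_k^{(j)}\|^2$ and hence cannot cancel the dominant term $\tfrac{1}{b_j}\tfrac{1}{n}\sum_i\|\nabla f_i(x_{k+1}^{(j)})-\nabla f_i(x_k^{(j)})\|^2$ that produces the coupling in the first place.

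The idea you are missing, and that the paper uses, is to \emph{decouple} $D_k^{(1)}$ from $a_k$ by bounding the one-step variance increment with Assumption~\ref{as:var} instead of Assumption~\ref{as:smooth}: since
\[
\Var_{j,k}(\nu_{k+1}^{(j)}-\nu_k^{(j)})\le \frac{2}{b_j}\Bigl(\E\|\nabla f_\xi(x_{k+1}^{(j)})-\nabla f(x_{k+1}^{(j)})\|^2+\E\|\nabla f_\xi(x_k^{(j)})-\nabla f(x_k^{(j)})\|^2\Bigr)\le \frac{4\sigma^2}{b_j},
\]
the recursion for $D_k^{(1)}$ is purely additive and immediately linear in $k$. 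With $D_k^{(1)}=O(k)$ in hand, the descent inequality \eqref{eq:fx_recursion} together with $\eta_j L\le 1$ gives
\[
f(x_{k+1}^{(j)})+\tfrac{\eta_j}{2}\|\nabla f(x_k^{(j)})\|^2\le f(x_k^{(j)})+\tfrac{\eta_j}{2}D_k^{(1)},
\]
so the combined Lyapunov quantity $M_k^{(j)}=f(x_{k+1}^{(j)})-f(x^\star)+\tfrac{\eta_j}{2}\|\nabla f(x_k^{(j)})\|^2$ grows at most quadratically, yielding polynomial bounds on $D_k^{(2)}$ and $D_k^{(3)}$ with no circular dependence. Your displacement-based route through $a_k$ can be dropped entirely once this decoupling is made.
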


Based on Lemma \ref{lem:exp_exist}, we prove two lemmas, which helps us to establish the sequence that is used to prove convergence. Throughout the rest of the section we assume that assumption \ref{as:smooth} and \ref{as:var} hold.

\begin{lemma}\label{lem:nu-nablaf}
For any $j$,
  \[\E_{j}\norm{\nuj_{\Nj} - \nabla f(\tx_{j})}^{2}\le \frac{\mj \etaj^{2}L^{2}}{\bj^{2}}\E_{j}\norm{\nuj_{\Nj}}^{2} + \frac{\sigma^{2}I(\Bj\le n)}{\Bj},\]
where $\E_{j}$ denotes the expectation over the randomness in $j$-th outer loop. 
\end{lemma}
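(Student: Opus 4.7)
\medskip

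\noindent\textbf{Proof sketch of Lemma \ref{lem:nu-nablaf}.} The plan is to combine the standard one-step SARAH variance recursion with the ``telescoping'' identity supplied by Lemma \ref{lem:geom} for the geometric stopping time $N_{j}$. Write $e_{k}^{(j)} \eqdef \nu^{(j)}_{k} - \nabla f(x^{(j)}_{k})$ and let $\mathcal{F}_{k}$ denote the sigma-field generated by the randomness in the $j$-th epoch up through iterate $k$, so that $x^{(j)}_{k+1}$ is $\mathcal{F}_{k}$-measurable but the mini-batch $I^{(j)}_{k}$ used in step $k$ is independent of $\mathcal{F}_{k}$.

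First I would derive the one-step SARAH recursion. Subtracting $\nabla f(x^{(j)}_{k+1})$ from both sides of the update for $\nu^{(j)}_{k+1}$ gives
\[
e_{k+1}^{(j)} = e_{k}^{(j)} + \frac{1}{b_{j}}\sum_{i\in I^{(j)}_{k}}\bigl(\nabla f_{i}(x^{(j)}_{k+1}) - \nabla f_{i}(x^{(j)}_{k})\bigr) - \bigl(\nabla f(x^{(j)}_{k+1}) - \nabla f(x^{(j)}_{k})\bigr).
\]
The second piece is zero-mean conditional on $\mathcal{F}_{k}$, so the cross-term vanishes and, together with the variance-bounded-by-second-moment trick and Assumption~\ref{as:smooth}, I get
\[
\E\bigl[\|e_{k+1}^{(j)}\|^{2}\,\big|\,\mathcal{F}_{k}\bigr] \le \|e_{k}^{(j)}\|^{2} + \frac{L^{2}}{b_{j}}\|x^{(j)}_{k+1} - x^{(j)}_{k}\|^{2} = \|e_{k}^{(j)}\|^{2} + \frac{L^{2}\eta_{j}^{2}}{b_{j}}\|\nu^{(j)}_{k}\|^{2}.
\]
Unrolling and using Assumption~\ref{as:var} on the big-batch initialization to bound $\E_{j}\|e_{0}^{(j)}\|^{2} \le \sigma^{2} I(B_{j}<n)/B_{j}$ yields, for every fixed $k$,
\[
\E_{j}\|e_{k}^{(j)}\|^{2} \le \frac{\sigma^{2} I(B_{j}<n)}{B_{j}} + \frac{L^{2}\eta_{j}^{2}}{b_{j}}\sum_{l=0}^{k-1}\E_{j}\|\nu^{(j)}_{l}\|^{2}.
\]

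Next I would randomize $k = N_{j}$. The key step is to apply Lemma~\ref{lem:geom} with the cumulative sequence $D_{k} = \sum_{l=0}^{k-1}\|\nu^{(j)}_{l}\|^{2}$ (so $D_{0}=0$ and $D_{k+1}-D_{k} = \|\nu^{(j)}_{k}\|^{2}$). This yields the identity
\[
\E_{j}\Bigl[\sum_{l=0}^{N_{j}-1}\|\nu^{(j)}_{l}\|^{2}\Bigr] = \E[N_{j}]\cdot \E_{j}\|\nu^{(j)}_{N_{j}}\|^{2} = \frac{m_{j}}{b_{j}}\,\E_{j}\|\nu^{(j)}_{N_{j}}\|^{2}.
\]
Here the finiteness hypothesis of Lemma~\ref{lem:geom} is exactly the content of Lemma~\ref{lem:exp_exist} (applied to $D^{(3)}_{k}$ and a polynomial-growth check), and the assumption $\eta_{j} L \le 1$ holds by \eqref{eq:eta_cond}. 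Combining with the previous display and recalling $\tx_{j} = x^{(j)}_{N_{j}}$, so that $e_{N_{j}}^{(j)} = \nu^{(j)}_{N_{j}} - \nabla f(\tx_{j})$, gives exactly the claimed bound.

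The only delicate point is the application of Lemma~\ref{lem:geom}: one has to know a priori that $\E_{j}|D_{N_{j}}| < \infty$, i.e.\ that the randomly stopped cumulative second moment of $\nu^{(j)}_{k}$ is integrable. This is precisely the role of Lemma~\ref{lem:exp_exist}; aside from quoting it, every other step is the standard SARAH bias-variance decomposition followed by a direct geometric expectation.
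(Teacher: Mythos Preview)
Your proof is correct and follows essentially the same route as the paper: both derive the standard one-step SARAH variance recursion and then invoke Lemma~\ref{lem:geom} together with the initial big-batch bound from Assumption~\ref{as:var}. The only cosmetic difference is that the paper applies Lemma~\ref{lem:geom} directly to the sequence $D_k=\E_j\|e^{(j)}_k\|^2$ (which is exactly $D_k^{(1)}$ in Lemma~\ref{lem:exp_exist}, so the integrability check is immediate), whereas you first unroll the recursion and then geometrize the cumulative sum $\sum_{l<k}\|\nu^{(j)}_l\|^2$; the two computations are equivalent.
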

\begin{proof}
Let $\E_{j, k}$ and $\Var_{j, k}$ denote the expectation and  variance operator over the randomness of $\sI_{k}$. Since $\sI_{k}$ is independent of $\xj_{k}$,
\[\E_{j, k}\nuj_{k+1} = \nuj_{k} + (\nabla f(\xj_{k+1}) - \nabla f(\xj_{k})).\]
Thus, 
\[\nuj_{k + 1} - \nabla f(\xj_{k + 1}) = \nuj_{k} - \nabla f(\xj_{k}) + \lb \nuj_{k+1} - \nuj_{k} - \E_{j, k} (\nuj_{k+1} - \nuj_{k})\rb.\]  
Since $\sI_{k}$ is independent of $(\nuj_{k}, \xj_{k})$, 
\[\Cov_{j, k}\lb\nuj_{k} - \nabla f(\xj_{k}), \nuj_{k+1} - \nuj_{k}\rb = 0.\]
As a result,
\begin{equation}
  \label{eq:nuj_recursion}
  \E_{j, k} \norm{\nuj_{k + 1} - \nabla f(\xj_{k + 1})}^{2} = \norm{\nuj_{k} - \nabla f(\xj_{k})}^{2} + \Var_{j, k}(\nuj_{k+1} - \nuj_{k}).
\end{equation}
By Lemma \ref{lem:var_sampling},
\begin{align}
  &\Var_{j, k}(\nuj_{k+1} - \nuj_{k}) = \Var\lb\frac{1}{\bj}\sum_{i\in \sI_{k}}(\nabla f_{i}(\xj_{k+1}) - \nabla f_{i}(\xj_{k}))\rb\nonumber\\
  & \le \frac{1}{\bj}\frac{1}{n}\sum_{i=1}^{n}\norm{\nabla f_{i}(\xj_{k+1}) - \nabla f_{i}(\xj_{k}) - (\nabla f(\xj_{k+1}) - \nabla f(\xj_{k}))}^{2}\label{eq:varjk}\\
  & \le \frac{1}{\bj}\frac{1}{n}\sum_{i=1}^{n}\norm{\nabla f_{i}(\xj_{k+1}) - \nabla f_{i}(\xj_{k})}^{2}.\nonumber
\end{align}
Finally by assumption \ref{as:smooth}, 
\[\frac{1}{n}\sum_{i=1}^{n}\norm{\nabla f_{i}(\xj_{k+1}) - \nabla f_{i}(\xj_{k})}^{2}\le L^{2}\norm{\xj_{k+1} - \xj_{k}}^{2} = \etaj^{2}L^{2}\norm{\nuj_{k}}^{2}.\]
By \eqref{eq:nuj_recursion},
\[\E_{j, k} \norm{\nuj_{k + 1} - \nabla f(\xj_{k + 1})}^{2} = \norm{\nuj_{k} - \nabla f(\xj_{k})}^{2} + \frac{\etaj^{2}L^{2}}{\bj}\norm{\nuj_{k}}^{2}.\]
Let $k = \Nj$ and take expectation over all randomness in $\E_{j}$. By Lemma \ref{lem:exp_exist}, we can apply Lemma \ref{lem:geom} on $D_{k} = \E_{j}\norm{\nuj_{k} - \nabla f(\xj_{k})}^{2}$. Then we have
\begin{align}
  0&\le \E_{j}\lb \|\nuj_{\Nj} - \nabla f(\xj_{\Nj})\|^{2} - \|\nuj_{\Nj + 1} - \nabla f(\xj_{\Nj + 1})\|^{2}\rb + \frac{\etaj^{2}L^{2}}{\bj}\E_{j}\|\nuj_{\Nj}\|^{2}\nonumber\\
& = \frac{\bj}{\mj}\E_{j}\lb \|\nuj_{0} - \nabla f(\xj_{0})\|^{2} - \|\nuj_{\Nj} - \nabla f(\xj_{\Nj})\|^{2}\rb + \frac{\etaj^{2}L^{2}}{\bj}\E_{j}\|\nuj_{\Nj}\|^{2}\nonumber.
\end{align}
Finally, by Lemma \ref{lem:var_sampling}, 
\begin{align}
  &\E_{j} \|\nuj_{0} - \nabla f(\xj_{0})\|^{2} \le \frac{\sigma^{2}I(\Bj < n)}{\Bj}.\label{eq:nu-nablaf3}
\end{align}
The proof is then completed.
\end{proof}

\begin{lemma}\label{lem:fx}
For any $j$, 
\[\E_{j} \norm{\nabla f(\tx_{j})}^{2}\le \frac{2\bj}{\etaj \mj}\E_{j}(f(\tx_{j-1}) - f(\tx_{j})) + \E_{j}\norm{\nuj_{\Nj} - \nabla f(\tx_{j})}^{2} - (1 - \etaj L)\E_{j}\norm{\nuj_{\Nj}}^{2},\]
where $\E_{j}$ denotes the expectation over the randomness in $j$-th outer loop.
\end{lemma}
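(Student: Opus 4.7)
The plan is to combine the standard $L$-smoothness descent inequality with the geometrization identity (Lemma \ref{lem:geom}) applied to the sequence $D_{k} = f(\xj_{k})$. All the work happens at the level of a single inner-loop iteration; the geometric stopping time then telescopes things for free.

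First I would start from $L$-smoothness of $f$ applied to the update $\xj_{k+1} = \xj_{k} - \etaj \nuj_{k}$, which gives
\[f(\xj_{k+1}) \le f(\xj_{k}) - \etaj \ip{\nabla f(\xj_{k})}{\nuj_{k}} + \frac{L\etaj^{2}}{2}\norm{\nuj_{k}}^{2}.\]
Then I would use the polarization identity $\ip{a}{b} = \tfrac{1}{2}(\norm{a}^{2} + \norm{b}^{2} - \norm{a - b}^{2})$ on $a = \nabla f(\xj_{k})$, $b = \nuj_{k}$ and collect terms to obtain the per-step inequality
\[\frac{\etaj}{2}\norm{\nabla f(\xj_{k})}^{2} \le f(\xj_{k}) - f(\xj_{k+1}) - \frac{\etaj(1 - \etaj L)}{2}\norm{\nuj_{k}}^{2} + \frac{\etaj}{2}\norm{\nuj_{k} - \nabla f(\xj_{k})}^{2}.\]
This is the only place where smoothness is used.

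Next I would specialize this inequality to $k = \Nj$, take $\E_{j}$ on both sides, and apply Lemma \ref{lem:geom} to $D_{k} = f(\xj_{k})$. The admissibility condition $\E_{j}|f(\xj_{\Nj})| < \infty$ under $\etaj L \le 1$ is furnished by Lemma \ref{lem:exp_exist}. Using $\xj_{0} = \tx_{j-1}$ (deterministic given the $j$-th epoch's input) and $\xj_{\Nj} = \tx_{j}$, together with $\E{\Nj} = \mj / \bj$, Lemma \ref{lem:geom} yields
\[\E_{j}\bigl(f(\xj_{\Nj}) - f(\xj_{\Nj + 1})\bigr) = \frac{\bj}{\mj}\E_{j}\bigl(f(\tx_{j-1}) - f(\tx_{j})\bigr).\]
This is the crucial telescoping identity: the geometrization turns a one-step drop into a whole-epoch drop $f(\tx_{j-1}) - f(\tx_{j})$, which is what we wanted in the final bound.

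Finally I would multiply the resulting inequality through by $2/\etaj$ to read off
\[\E_{j}\norm{\nabla f(\tx_{j})}^{2} \le \frac{2\bj}{\etaj \mj}\E_{j}\bigl(f(\tx_{j-1}) - f(\tx_{j})\bigr) + \E_{j}\norm{\nuj_{\Nj} - \nabla f(\tx_{j})}^{2} - (1 - \etaj L)\E_{j}\norm{\nuj_{\Nj}}^{2}.\]
I do not expect any of the individual manipulations to be hard; the only subtle point is the legitimacy of invoking Lemma \ref{lem:geom}, which is why Lemma \ref{lem:exp_exist} was proved first. Everything else is bookkeeping.
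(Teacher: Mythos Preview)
Your proposal is correct and follows essentially the same route as the paper: start from $L$-smoothness plus the polarization identity to obtain the per-step inequality, evaluate at $k=\Nj$, take $\E_{j}$, and invoke Lemma~\ref{lem:geom} on $D_{k}=f(\xj_{k})$ (with finiteness supplied by Lemma~\ref{lem:exp_exist}) to convert $\E_{j}(f(\xj_{\Nj})-f(\xj_{\Nj+1}))$ into $\tfrac{\bj}{\mj}\E_{j}(f(\tx_{j-1})-f(\tx_{j}))$. The only cosmetic difference is that the paper also cites Lemma~\ref{lem:exp_exist} for $D_{k}=\E_{j}\|\nabla f(\xj_{k})\|^{2}$ to justify that all expectations in the rearranged inequality are finite; you may want to note this as well.
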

\begin{proof}
  By assumption \eqref{as:smooth}, 
  \begin{align}
    f(\xj_{k+1})
    &\le f(\xj_{k}) + \la \nabla f(\xj_{k}), \xj_{k + 1} - \xj_{k}\ra + \frac{L}{2} \norm{\xj_{k} - \xj_{k+1}}^{2}\nonumber\\
    & = f(\xj_{k}) - \eta\la \nabla f(\xj_{k}), \nuj_{k}\ra + \frac{\etaj^{2}L}{2} \norm{\nuj_{k}}^{2} \nonumber\\
    & = f(\xj_{k}) + \frac{\etaj}{2}\norm{\nuj_{k} - \nabla f(\xj_{k})}^{2} - \frac{\etaj}{2}\norm{\nabla f(\xj_{k})}^{2} - \frac{\etaj}{2}\norm{\nuj_{k}}^{2} + \frac{\etaj^{2}L}{2} \norm{\nuj_{k}}^{2}.\label{eq:fx_recursion}
  \end{align}
  Let $j = \Nj$ and take expectation over all randomness in $\E_{j}$. By Lemma \ref{lem:exp_exist}, we can apply Lemma \ref{lem:geom} with $D_{k} = \E_{j}f(\xj_{k})$ and $D_{k} = \E_{j}\norm{\nabla f(\xj_{k})}^{2}$. Thus,
  \begin{align*}   
    0&\le \E_{j}\lb f(\xj_{\Nj}) - f(\xj_{\Nj+1})\rb + \frac{\etaj}{2}\E_{j}\norm{\nuj_{\Nj} - \nabla f(\xj_{\Nj})}^{2} - \frac{\etaj}{2}\E_{j}\norm{\nabla f(\xj_{\Nj})}^{2} - \frac{\etaj}{2}(1 - \etaj L)\E_{j}\norm{\nuj_{\Nj}}^{2}\\
     & = \frac{\bj}{\mj}\E_{j}\lb f(\xj_{0}) - f(\xj_{\Nj})\rb + \frac{\etaj}{2}\E_{j}\norm{\nuj_{\Nj} - \nabla f(\xj_{\Nj})}^{2} - \frac{\etaj}{2}\E_{j}\norm{\nabla f(\xj_{\Nj})}^{2} - \frac{\etaj}{2}(1 - \etaj L)\E_{j}\norm{\nuj_{\Nj}}^{2}\\   
     & = \frac{\bj}{\mj}\E_{j}\lb f(\tx_{j-1}) - f(\tx_{j})\rb + \frac{\etaj}{2}\E_{j}\norm{\nuj_{\Nj} - \nabla f(\xj_{\Nj})}^{2} - \frac{\etaj}{2}\E_{j}\norm{\nabla f(\tx_{j})}^{2} - \frac{\etaj}{2}(1 - \etaj L)\E_{j}\norm{\nuj_{\Nj}}^{2}.
  \end{align*}
The proof is then completed.
\end{proof}

Theorem \ref{thm:one_epoch} is then proved by combining Lemma \ref{lem:nu-nablaf} and Lemma \ref{lem:fx}.

\begin{proof}[\textbf{Proof of Theorem \ref{thm:one_epoch}}]
  By Lemma \ref{lem:nu-nablaf} and Lemma \ref{lem:fx},
  \begin{align*}
  \E \|\nabla f(\tx_{j})\|^{2}&\le \frac{2\bj}{\etaj \mj}\E(f(\tx_{j-1}) - f(\tx_{j})) + \frac{\sigma^{2}I(\Bj < n)}{\Bj}  - \lb 1 - \etaj L - \frac{\mj \etaj^{2} L^{2}}{\bj^{2}}\rb\E\|\nuj_{\Nj}\|^{2}.
  \end{align*}
  Under condition \eqref{eq:eta_cond},
  \[1-\eta_j L - \frac{\mj(\eta_j L)^2 }{\bj^{2}} \geq 1 - \frac{1}{2} - \frac{1}{4}\ge 0,\]
  which concludes the proof.
\end{proof}

\begin{proof}[\textbf{Proof of Lemma \ref{lem:exp_exist}}]
  By \eqref{eq:varjk} and assumption \ref{as:var}, 
  \begin{align*}
    \Var_{j,k}(\nuj_{k+1} - \nuj_{k})&\le \frac{2}{\bj n}\lb\sum_{i=1}^{n}\norm{\nabla f_{i}(\xj_{k+1}) - \nabla f(\xj_{k+1})}^{2} + \sum_{i=1}^{n}\norm{\nabla f_{i}(\xj_{k}) - \nabla f(\xj_{k})}^{2}\rb\\
    & \le \frac{4\sigma^{2}}{\bj}\le 4\sigma^{2}.
  \end{align*}
  By \eqref{eq:nuj_recursion} and taking expectation over all randomness in epoch $j$, 
  \[\E_{j}\norm{\nuj_{k + 1} - \nabla f(\xj_{k + 1})}^{2}\le \E_{j}\norm{\nuj_{k} - \nabla f(\xj_{k})}^{2} + 4\sigma^{2}.\]
  Then
  \begin{equation}
    \label{eq:Dk1}
    \E_{j}\norm{\nuj_{k} - \nabla f(\xj_{k})}^{2}\le \norm{\nuj_{0} - \nabla f(\xj_{0})}^{2} + 4k\sigma^{2}\le (4k + 1)\sigma^{2} = \mathrm{poly}(k),
  \end{equation}
  where the last inequality uses \eqref{eq:nu-nablaf3}. By remark \ref{rem:geom}, we obtain that $\E |D_{\Nj}^{(1)}| < \infty$.

  ~\\
  \noindent On the other hand, by \eqref{eq:fx_recursion}, since $\etaj L\le 1$,
  \[f(\xj_{k+1}) + \frac{\etaj}{2}\norm{\nabla f(\xj_{k})}^{2} \le f(\xj_{k}) + \frac{\etaj}{2}\norm{\nuj_{k} - \nabla f(\xj_{k})}^{2}\le f(\xj_{k}) + (2k+1)\etaj \sigma^{2},\]
  where the last inequality uses \eqref{eq:Dk1}. Let
  \[\Mj_{k} = f(\xj_{k+1}) - f(x^\star)+ \frac{\etaj}{2}\norm{\nabla f(\xj_{k})}^{2}.\]
  Then
  \[\Mj_{k}\le \Mj_{k-1} + (2k+1)\etaj \sigma^{2}.\]
  Applying the above inequality recursively, we have
  \[\Mj_{k}\le \Mj_{0} + (k^2 + 2k)\etaj \sigma^{2} = \mathrm{poly}(k).\]
  As a result,
  \[0\le f(\xj_{k+1}) - f(x^\star)\le \Mj_{k-1} = \mathrm{poly}(k), \quad 0\le \norm{\nabla f(\xj_{k})}^{2}\le \frac{1}{\etaj} \Mj_{k} = \mathrm{poly}(k).\]
  By remark \ref{rem:geom}, we obtain that
  \[\E |f(\xj_{\Nj}) - f(x^\star)| < \infty, \quad \E |D_{\Nj}^{(3)}| = \E \norm{\nabla f(\xj_{\Nj})}^{2} < \infty.\]
  Since $f(x^\star) > -\infty$, $\E |D_{\Nj}^{(2)}| < \infty$.
\end{proof}

\subsection{Preparation for Complexity Analysis}
Although Theorem \ref{thm:quadratic_complexity} and \ref{thm:exponential_complexity} consider the tail-randomized iterate, we start by studying two conventional output -- the randomized iterate and the last iterate. Throughout this subsection we let
\[\lamj = \etaj\mj / \bj.\]

The first lemma states a bound for expected gradient norm of the randomized iterate. 
\begin{lemma}\label{lem:randomized}
Given any positive integer $T$, let $\cR$ be a random variable supported on $\{1, \ldots, T\}$ with
 \[\P(\cR = j)\propto \lamj\]
 Then
 \[\E \norm{\nabla f(x_{\cR})}^{2}\le \frac{2\E \lb f(\tx_{0}) - f(x^\star)\rb  + \sigma^{2}\sum_{j=1}^{T}\lamj I(\Bj < n)/\Bj}{\sum_{j=1}^{T}\lamj}\]
\end{lemma}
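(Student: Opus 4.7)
The plan is to reduce this to Theorem~\ref{thm:one_epoch} via a weighted telescoping argument, where the weights $\lamj = \etaj\mj/\bj$ are precisely those that make the functional-value terms collapse. First I would rewrite the one-epoch bound of Theorem~\ref{thm:one_epoch} using $\lamj$: since $2\bj/(\etaj\mj) = 2/\lamj$, the bound reads
\[
\E\|\nabla f(\tx_j)\|^2 \le \frac{2}{\lamj}\E(f(\tx_{j-1}) - f(\tx_j)) + \frac{\sigma^2 I(\Bj < n)}{\Bj}.
\]

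Next I would multiply both sides by $\lamj$ and sum over $j = 1, \ldots, T$. The first term on the right becomes $2\E(f(\tx_{j-1}) - f(\tx_j))$ and telescopes to $2\E(f(\tx_0) - f(\tx_T))$, which is bounded above by $2\E(f(\tx_0) - f(x^\star))$ since $f(\tx_T) \ge f(x^\star)$. This yields
\[
\sum_{j=1}^T \lamj \E\|\nabla f(\tx_j)\|^2 \le 2\E(f(\tx_0) - f(x^\star)) + \sigma^2 \sum_{j=1}^T \frac{\lamj I(\Bj < n)}{\Bj}.
\]

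Finally, by the definition of $\cR$, namely $\P(\cR = j) = \lamj / \sum_{i=1}^T \lam_i$, the left-hand side equals $\lb\sum_{j=1}^T \lam_j\rb \E\|\nabla f(\tx_{\cR})\|^2$ after taking the outer expectation over $\cR$. Dividing through by $\sum_{j=1}^T \lamj$ gives exactly the claimed bound. There is no substantial obstacle here; the argument is essentially a weighted averaging with the weights chosen to make the potential-function differences telescope. The only subtlety worth noting is that summability (or equivalently, finiteness of the expectations involved) is already ensured by Lemma~\ref{lem:exp_exist} applied within each epoch, so no extra integrability check is needed before invoking Theorem~\ref{thm:one_epoch}.
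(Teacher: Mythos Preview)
Your proposal is correct and follows essentially the same approach as the paper: multiply the one-epoch bound of Theorem~\ref{thm:one_epoch} by $\lamj$, sum over $j=1,\ldots,T$ so the functional-value terms telescope, bound $\E f(\tx_T)\ge f(x^\star)$, and recognize the left-hand side as $\lb\sum_j \lamj\rb\E\|\nabla f(\tx_{\cR})\|^2$. The extra remark about integrability via Lemma~\ref{lem:exp_exist} is fine but not needed, since Theorem~\ref{thm:one_epoch} already relies on it.
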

\begin{proof}
  By Theorem \ref{thm:one_epoch},
  \[\lamj \E \norm{\nabla f(\tx_{j})}^{2}\le 2 \lb\E f(\tx_{j-1}) - \E f(\tx_{j})\rb + \frac{\sigma^{2}\lamj I(\Bj < n)}{\Bj}.\]
  By definition,
  \begin{align*}
    &\E \norm{\nabla f(x_{\cR})}^{2} = \frac{\sum_{j=1}^{T}\E \norm{\nabla f(\tx_{j})}^{2}\lamj}{\sum_{j=1}^{T}\lamj}\\
    & \le \frac{2\sum_{j=1}^{T}\lb\E f(\tx_{j-1}) - \E f(\tx_{j})\rb  + \sigma^{2}\sum_{j=1}^{T}\lamj I(\Bj < n)/\Bj}{\sum_{j=1}^{T}\lamj}\\
    & = \frac{2\lb\E f(\tx_{0}) - \E f(\tx_{T})\rb  + \sigma^{2}\sum_{j=1}^{T}\lamj I(\Bj < n)/\Bj}{\sum_{j=1}^{T}\lamj}.
  \end{align*}
The proof is then completed by the fact that $f(\tx_{T})\ge f(x^\star)$.
\end{proof}

The next lemma provides contraction results for expected gradient norm and function value suboptimality of the last iterate.
\begin{lemma}\label{lem:LjFj_recursion}
Define the following Lyapunov function
  \[\Lj = \E\lb\lamj\norm{\nabla f(\tx_{j})}^{2} + 2(f(\tx_{j}) - f(x^\star))\rb.\]
  Then under the assumption \ref{as:p-l} with $\mu$ possibly being zero,
  \begin{equation}
    \label{eq:Lj_recursion}
    \Lj\le \frac{1}{\mu\lamjj + 1}\Ljj + \frac{\sigma^{2}\lamj I(\Bj < n)}{\Bj},
  \end{equation}
  and
  \begin{equation}
    \label{eq:Fj_recursion}
    \E \lb f(\tx_{j}) - f(x^\star)\rb\le \frac{1}{\mu\lamj + 1}\E \lb f(\tx_{j-1}) - f(x^\star)\rb + \frac{\lamj}{\mu\lamj + 1}\frac{\sigma^{2}I(\Bj < n)}{2\Bj}.
  \end{equation}
\end{lemma}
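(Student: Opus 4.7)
}

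The plan is to use Theorem \ref{thm:one_epoch} as the workhorse and convert it into the desired contraction bounds by injecting the PL inequality. First, I would rescale the one-epoch bound by multiplying both sides of Theorem \ref{thm:one_epoch} by $\lamj = \etaj\mj/\bj$, giving
\[
\lamj\,\E\norm{\nabla f(\tx_{j})}^{2}\;\le\; 2\,\E(f(\tx_{j-1}) - f(\tx_{j})) + \frac{\sigma^{2}\lamj I(\Bj < n)}{\Bj}.
\]
This is the unified starting point for both inequalities.

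For \eqref{eq:Fj_recursion}, I would apply the PL inequality \eqref{eq:p-l} pointwise in expectation, $\E\norm{\nabla f(\tx_{j})}^{2}\ge 2\mu\,\E(f(\tx_{j}) - f(x^\star))$, and substitute into the displayed inequality above. Writing $f(\tx_{j-1}) - f(\tx_{j}) = (f(\tx_{j-1}) - f(x^\star)) - (f(\tx_{j}) - f(x^\star))$ and moving the $f(\tx_{j}) - f(x^\star)$ terms to the left-hand side yields
\[
2(\mu\lamj + 1)\,\E(f(\tx_{j}) - f(x^\star)) \;\le\; 2\,\E(f(\tx_{j-1}) - f(x^\star)) + \frac{\sigma^{2}\lamj I(\Bj < n)}{\Bj},
\]
and dividing by $2(\mu\lamj + 1)$ gives exactly \eqref{eq:Fj_recursion}. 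Note this works even when $\mu = 0$, since PL becomes vacuous and the multiplier $\mu\lamj + 1$ becomes $1$.

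For \eqref{eq:Lj_recursion}, I would add $2\,\E(f(\tx_{j}) - f(x^\star))$ to both sides of the rescaled one-epoch inequality; the left-hand side becomes $\Lj$, while on the right-hand side the two function-value increments telescope to $2\,\E(f(\tx_{j-1}) - f(x^\star))$. This reduces the task to bounding $2\,\E(f(\tx_{j-1}) - f(x^\star))$ by $\Ljj/(\mu\lamjj + 1)$. That bound follows directly from the definition of $\Ljj$ together with PL at step $j-1$: since $\lamjj\E\norm{\nabla f(\tx_{j-1})}^{2} \ge 2\mu\lamjj\E(f(\tx_{j-1}) - f(x^\star))$, we get $\Ljj \ge 2(\mu\lamjj + 1)\,\E(f(\tx_{j-1}) - f(x^\star))$, and rearranging yields the target.

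There is no genuinely difficult step; the proof is algebraic manipulation plus a single application of PL to either the current iterate (for \eqref{eq:Fj_recursion}) or the previous iterate (for \eqref{eq:Lj_recursion}). The only thing to be careful about is that the PL bound must be applied at the correct index so that the coefficient $\mu\lamj + 1$ (versus $\mu\lamjj + 1$) ends up on the appropriate side of each inequality, and the argument must remain valid when $\mu = 0$, which it does since all multipliers involving $\mu$ degenerate to $1$.
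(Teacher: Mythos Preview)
Your proposal is correct and follows essentially the same route as the paper: both arguments rescale Theorem~\ref{thm:one_epoch} by $\lamj$, add $2\E(f(\tx_{j})-f(x^\star))$ to obtain $\Lj\le 2\E(f(\tx_{j-1})-f(x^\star)) + \sigma^{2}\lamj I(\Bj<n)/\Bj$, and then close the recursion via the PL bound $\Ljj\ge 2(\mu\lamjj+1)\E(f(\tx_{j-1})-f(x^\star))$ (the paper derives this same inequality through an auxiliary coefficient $\chij=\mu\lamj/(\mu\lamj+1)$, but the content is identical). Your derivation of \eqref{eq:Fj_recursion} likewise matches the paper's, which just applies PL to the left side of Theorem~\ref{thm:one_epoch} and rearranges.
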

\begin{proof}
  When $\mu = 0$, the lemma is a direct consequence of Theorem \ref{thm:one_epoch}. Assume $\mu > 0$ throughout the rest of the proof. Let
  \[\chij = \frac{\mu\lamj}{\mu\lamj + 1}.\]
  Then by assumption \ref{as:p-l},
  \begin{align*}
    \E (f(\tx_{j}) - f(x^\star))
    &= (1 - \chij)\E (f(\tx_{j}) - f(x^\star)) + \chij\E (f(\tx_{j}) - f(x^\star))\\
    & \le (1 - \chij)\E (f(\tx_{j}) - f(x^\star)) + \frac{\chij}{2\mu}\E\norm{\nabla f(\tx_{j})}^{2}\\
    & = \frac{1}{2(\mu\lamj + 1)}\lb\lamj\E\norm{\nabla f(\tx_{j})}^{2} + 2\E(f(\tx_{j}) - f(x^\star))\rb\\
    & = \frac{1}{2(\mu\lamj + 1)} \Lj.
  \end{align*}
  By Theorem \ref{thm:one_epoch},
  \begin{align*}
    \Lj &\le 2\E (f(\tx_{j-1}) - f(x^\star)) + \frac{\sigma^{2}\lamj I(\Bj < n)}{\Bj}\\
        &\le \frac{1}{\mu\lamjj + 1}\Ljj + \frac{\sigma^{2}\lamj I(\Bj < n)}{\Bj}.
  \end{align*}
  On the other hand, by Theorem 5,
  \[2\mu\E (f(\tx_{j}) - f(x^\star))\le \E \|\nabla f(\tx_{j})\|^{2}\le \frac{2}{\lamj}\E(f(\tx_{j-1}) - f(\tx_{j})) + \frac{\sigma^{2}I(\Bj < n)}{\Bj}.\]
  Rearranging terms concludes the proof.
\end{proof}

The third lemma shows that $\Lj$ and $\E (f(\tx_{j}) - f(x^\star))$ are uniformly bounded.
\begin{lemma}\label{lem:LjFj_uniform_bound}
 For any $j > 0$,
\[\Lj\le \DeltaL \eqdef 2\Delta_{f} + \lb\sum_{t: \Bt < n}\frac{\lamt}{\Bt}\rb\sigma^{2}.\]
\end{lemma}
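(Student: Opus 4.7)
The plan is a short induction on $j$ that piggybacks on the intermediate bounds already produced in the proof of Lemma~\ref{lem:LjFj_recursion}. The key observation is that inside that proof, before the PL inequality is applied, Theorem~\ref{thm:one_epoch} (multiplied through by $\lamj$ and with $2\E(f(\tx_j)-f(x^\star))$ added to both sides) gives the ``one-step'' bound
\[
\Lj\ \le\ 2\,\E\!\lb f(\tx_{j-1})-f(x^\star)\rb\ +\ \frac{\sigma^{2}\lamj\,I(\Bj<n)}{\Bj},
\]
which holds for every $j\ge 1$ regardless of whether $\mu>0$ or $\mu=0$.

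First I would handle the base case $j=1$: since $\tx_{0}$ is deterministic, $2\E(f(\tx_{0})-f(x^\star))=2\Delta_{f}$, so the displayed bound immediately gives $L_{1}\le 2\Delta_{f}+\sigma^{2}\lambda_{1}I(B_{1}<n)/B_{1}\le\DeltaL$.

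For the inductive step $j\ge 2$, I would use the second half of the proof of Lemma~\ref{lem:LjFj_recursion}, which shows $\E(f(\tx_{j-1})-f(x^\star))\le \Ljj/\lb 2(\mu\lamjj+1)\rb\le \Ljj/2$. Plugging this into the one-step bound yields
\[
\Lj\ \le\ \Ljj\ +\ \frac{\sigma^{2}\lamj\,I(\Bj<n)}{\Bj}.
\]
Iterating this telescoping inequality down to $j=1$ and then applying the base case gives
\[
\Lj\ \le\ 2\Delta_{f}+\sigma^{2}\sum_{t=1}^{j}\frac{\lamt\,I(\Bt<n)}{\Bt}\ \le\ \DeltaL,
\]
which is exactly the claim. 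There is no real obstacle here; the only thing to be careful about is that the telescoping uses $1/(\mu\lamjj+1)\le 1$, so the uniform bound holds for all $\mu\ge 0$, including the non-PL case $\mu=0$ covered by the first line of the proof of Lemma~\ref{lem:LjFj_recursion}. The sum is extended from $\{t:\Bt<n\}$ trivially because the omitted terms vanish.
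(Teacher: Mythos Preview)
Your proposal is correct and essentially identical to the paper's proof: both obtain the recursion $\Lj\le\Ljj+\sigma^{2}\lamj I(\Bj<n)/\Bj$, establish the base case $\Ll_{1}\le 2\Delta_{f}+\sigma^{2}\lambda_{1}I(B_{1}<n)/B_{1}$ from Theorem~\ref{thm:one_epoch}, and telescope. The only cosmetic difference is that the paper cites~\eqref{eq:Lj_recursion} directly and uses $1/(\mu\lamjj+1)\le 1$, whereas you rebuild that step from the one-step bound plus $2\E(f(\tx_{j-1})-f(x^\star))\le\Ljj$; note this last inequality is immediate from the definition of $\Ljj$ (the gradient term is nonnegative), so you do not actually need the PL-based refinement from Lemma~\ref{lem:LjFj_recursion} here.
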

\begin{proof}
  By \eqref{eq:Lj_recursion}, since $\mu \ge 0$,
  \[\Lj \le \Ljj + \frac{\sigma^{2}\lamj I(\Bj < n)}{\Bj}.\]
  Moreover, by Theorem \ref{thm:one_epoch},
  \[\Ll_{1}\le 2\E (f(\tx_{0}) - f(x^\star)) + \frac{\lambda_{1}\sigma^{2}I(B_{1} < n)}{B_{1}}.\]
  Telescoping the above inequalities yields the bound for $\Lj$. 
\end{proof}

The last lemma states refined bounds for $\E \norm{\nabla f(\tx_{j})}^{2}$ and $\E (f(\tx_{j}) - f(x^\star))$ based on Lemma \ref{lem:LjFj_recursion} and Lemma \ref{lem:LjFj_uniform_bound}.

\begin{lemma}\label{lem:LjFj_bound}
  Fix any constant $c\in (0, 1)$. Suppose $\Bj$ can be written as 
  \[\Bj = \lceil \tBj\wedge n\rceil,\]
  for some strictly increasing sequence $\tBj$. Assume that $\lamj$ is non-decreasing and
\[\frac{\tBjj\lamj}{\tBj\lamjj}\ge \sqrt{c}.\]
  Let
  \[\T_{\mu}(c) = \min\{j: \lamj > 1 / \mu c\}, \quad \T_{n} = \min\{j: \tBj \ge n\},\]
  where $\T_{\mu}(c) = \infty$ if no such $j$ exists, e.g. for $\mu = 0$. Then for any $j > \T_{\mu}(c)$, 
  \[\E \norm{\nabla f(\tx_{j})}^{2} \le \min\left\{\lb\prod_{t=\T_{\mu}(c)}^{j -1}\frac{1}{\mu\lamt}\rb\frac{\DeltaL}{\lamj} + \frac{\sigma^{2}I(j > \T_{\mu}(c))}{(1 - \sqrt{c})\tBj}, \lb\frac{1}{\mu\lambda_{\T_{n}} + 1}\rb^{(j - \T_{n})_{+}}\frac{\DeltaL}{\lamj}\right\},\]
  and
  \[\E \lb f(\tx_{j}) - f(x^\star)\rb\le \min\left\{\lb\prod_{t=\T_{\mu}(c) + 1}^{j}\frac{1}{\mu\lamt}\rb\DeltaL + \frac{\sigma^{2}I(j > \T_{\mu}(c))}{2(1 - \sqrt{c})\mu\tBj}, \lb\frac{1}{\mu\lambda_{\T_{n}} + 1}\rb^{(j - \T_{n})_{+}}\DeltaL\right\},\]
  where $\prod_{t=a}^{b}c_{t} = 1$ if $a > b$.
\end{lemma}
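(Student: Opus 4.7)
The plan is to iterate the two recursions in Lemma~\ref{lem:LjFj_recursion} and fuse them with the uniform bound $\Lj\le \DeltaL$ from Lemma~\ref{lem:LjFj_uniform_bound}, handling the deterministic decay and the stochastic noise separately, and then specializing to the two regimes $j>\T_\mu(c)$ and $j>\T_n$.

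For the first pair of bounds I would unroll the $\Lj$-recursion from $T=\T_\mu(c)$ to $j$, obtaining
\[\Lj \le \prod_{t=T+1}^{j}\frac{1}{\mu\lambda_{t-1}+1}\,L_{T} + \sum_{s=T+1}^{j}\prod_{t=s+1}^{j}\frac{1}{\mu\lambda_{t-1}+1}\cdot \frac{\sigma^{2}\lambda_{s}I(B_{s}<n)}{B_{s}}.\]
For the deterministic term I drop the ``$+1$'' and use $L_{T}\le \DeltaL$, so the product is bounded by $\prod_{t=T}^{j-1}\frac{1}{\mu\lamt}$; dividing by $\lamj$ and recalling $\Lj\ge \lamj\E\|\nabla f(\tx_j)\|^{2}$ yields the first summand of the gradient bound. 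For the noise sum, since $t-1\ge T$ implies $\mu\lambda_{t-1}>1/c$ and hence $\frac{1}{\mu\lambda_{t-1}+1}\le c/(1+c)$, the inner product is at most $(c/(1+c))^{j-s}$. Telescoping the ratio assumption gives $\lambda_{s}/\tilde{B}_{s}\le (1/\sqrt{c})^{j-s}\lamj/\tBj$, which together with $B_{s}\ge \tilde{B}_{s}$ bounds the noise sum (divided by $\lamj$) by
\[\frac{\sigma^{2}}{\tBj}\sum_{s=T+1}^{j}\lb\frac{\sqrt{c}}{1+c}\rb^{j-s}\;\le\;\frac{\sigma^{2}}{\tBj}\sum_{k=0}^{\infty}(\sqrt{c})^{k}\;=\;\frac{\sigma^{2}}{(1-\sqrt{c})\tBj},\]
since $\sqrt{c}/(1+c)\le \sqrt{c}<1$. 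This completes the gradient bound. The function-value bound follows by substituting the resulting $\Lj$ estimate into the tighter identity $\E(f(\tx_j)-f(x^\star))\le \Lj/(2(\mu\lamj+1))$ extracted from the proof of Lemma~\ref{lem:LjFj_recursion}: the prefactor $1/(2(\mu\lamj+1))\le 1/(2\mu\lamj)$ absorbs an extra $1/(\mu\lamj)$ factor into the product, $\lamj/(2(\mu\lamj+1))\le 1/(2\mu)$ produces the $1/\mu$ in the noise term, and the mild slack $\tfrac{1}{2}\prod_{t=T}^{j}\frac{1}{\mu\lamt}\le \prod_{t=T+1}^{j}\frac{1}{\mu\lamt}$ is justified by $\mu\lambda_{T}>1/c>1/2$.

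For the second pair of bounds I would observe that $j>\T_n$ forces $B_s\ge n$ for all $s\ge \T_n$, so the indicators vanish and both recursions become pure contractions. Iterating from $\T_n$ to $j$ and upper-bounding each contraction factor by $1/(\mu\lambda_{\T_n}+1)$, using that $\lamj$ is non-decreasing, combined with $L_{\T_n}\le \DeltaL$ and $2\E(f(\tx_{\T_n})-f(x^\star))\le \DeltaL$ from Lemma~\ref{lem:LjFj_uniform_bound}, directly yields the two exponential-decay bounds; for $j\le \T_n$ the exponent $(j-\T_n)_{+}$ is zero and the statement reduces to the uniform bound.

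The main obstacle is the noise sum in the first bound: the contractive product decays at rate $c/(1+c)$ while the ratio assumption on $\lamj/\tBj$ allows $\lambda_{s}/\tilde{B}_{s}$ to ``grow backward'' in $s$ at rate $1/\sqrt{c}$. The clean factor $1/(1-\sqrt{c})$ in the final bound emerges from the elementary but crucial inequality $\sqrt{c}/(1+c)\le \sqrt{c}$, which is what keeps the geometric series summable with the intended constant.
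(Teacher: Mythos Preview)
Your proposal is correct and follows the paper's strategy closely: unroll the recursions of Lemma~\ref{lem:LjFj_recursion}, cap the initial term by the uniform bound $\DeltaL$ from Lemma~\ref{lem:LjFj_uniform_bound}, and control the accumulated noise via the ratio hypothesis on $\lamt/\tBt$ together with the contraction $\mu\lamt>1/c$ for $t\ge \T_\mu(c)$; the $\T_n$ bounds are handled identically by noting the indicators vanish. The one substantive difference is your route to the function-value bound: the paper writes ``Similarly'' and unrolls the $F$-recursion \eqref{eq:Fj_recursion} directly, whereas you push the already-established $\Lj$-bound through the inequality $\E(f(\tx_j)-f(x^\star))\le \Lj/(2(\mu\lamj+1))$ from the proof of Lemma~\ref{lem:LjFj_recursion}. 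Your detour is actually the cleaner argument here, because the direct $F$-unrolling produces a noise sum $\sum_s c^{\,j-s}/\tilde B_s$, and the ratio hypothesis only controls $\lamt/\tBt$, not $1/\tBt$ alone; extracting the stated constant $\sigma^{2}/(2(1-\sqrt c)\mu\tBj)$ from that route takes more work than the paper's ``Similarly'' suggests, while your substitution sidesteps the issue entirely at the cost of the harmless slack $\tfrac12\cdot \tfrac{1}{\mu\lambda_{\T_\mu(c)}}\le 1$.
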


\begin{proof}
  We first prove the bounds involving $\T_{\mu}(c)$.  Assume $\T_{\mu}(c) < \infty$. Then for $j > \T_{\mu}(c)$,
  \begin{equation}
    \label{eq:leqc}
    \frac{1}{\mu \lamj + 1} \le \frac{1}{\mu \lamjj + 1} < \frac{1}{\mu \lamjj} < c.
  \end{equation}
  By \eqref{eq:Lj_recursion}, \eqref{eq:Fj_recursion} and the condition that $\lamj \ge \lamjj$, we have
  \[\Lj\le \frac{1}{\mu\lamjj}\Ljj + \frac{\sigma^{2}\lamj I(\Bj < n)}{\Bj}\le \frac{1}{\mu\lamjj}\Ljj + \frac{\sigma^{2}\lamj}{\tBj},\]
  and
  \begin{align*}
    \E \lb f(\tx_{j}) - f(x^\star)\rb
    &\le \frac{1}{\mu\lamj}\E \lb f(\tx_{j-1}) - f(x^\star)\rb + \frac{\sigma^{2}I(\Bj < n)}{2\mu\Bj}\\
    &\le \frac{1}{\mu\lamj}\E \lb f(\tx_{j-1}) - f(x^\star)\rb + \frac{\sigma^{2}}{2\mu\tBj}.
  \end{align*}
  Applying the above inequalities recursively and using Lemma \ref{lem:LjFj_uniform_bound} and \eqref{eq:leqc}, we obtain that
  \begin{align*}
    \lamj \E \norm{\nabla f(\tx_{j})}^{2}
    &\le \Lj\le \lb\prod_{t=\T_{\mu}(c)}^{j-1}\frac{1}{\mu\lamt}\rb\Ll_{\T_{\mu}(c)} + \sigma^{2}\sum_{t=\T_{\mu}(c) + 1}^{j}\frac{c^{j - t}\lamt}{\tBt}\\
    &\stackrel{(i)}{\le} \lb\prod_{t=\T_{\mu}(c)}^{j-1}\frac{1}{\mu\lamt}\rb\DeltaL + \sigma^{2}\sum_{t=\T_{\mu}(c) + 1}^{j}\frac{(\sqrt{c})^{j - t}\lamj}{\tBj}\\
    & = \lb\prod_{t=\T_{\mu}(c)}^{j-1}\frac{1}{\mu\lamt}\rb\DeltaL + \frac{\sigma^{2}\lamj}{(1 - \sqrt{c})\tBj}
  \end{align*}
  where (i) uses the condition that $\tBjj \lamj/ \tBj \lamjj \ge \sqrt{c}$ and thus $\Bt\ge \Bj (\sqrt{c})^{(j - t)}$. Similarly,
  \begin{align*}
    \E \lb f(\tx_{j}) - f(x^\star)\rb\le \lb\prod_{t=\T_{\mu}(c) + 1}^{j}\frac{1}{\mu\lamt}\rb\DeltaL + \frac{\sigma^{2}}{2(1 - \sqrt{c})\mu\tBj}.
  \end{align*}

  ~\\
  \noindent Next, we prove the bounds involving $\T_{n}$. Similar to the previous step, the case with $j \le \T_{n}$ can be easily proved. When $j > \T_{n}$, $\Bj = n$ and thus
  \[\Lj\le \lb \frac{1}{\mu \lambda_{\T_{n}} + 1}\rb\Ljj, \quad \E \lb f(\tx_{j}) - f(x^\star)\rb\le \lb \frac{1}{\mu \lambda_{\T_{n}} + 1}\rb\E \lb f(\tx_{j-1}) - f(x^\star)\rb.\]
  This implies the bounds involving $\T_{n}$.
\end{proof}

Combining Lemma \ref{lem:randomized} and Lemma \ref{lem:LjFj_bound}, we obtain the convergence rate of the randomized iterate. 

\begin{theorem}\label{thm:tail-randomized}
  Given any positive integer $T$, let $\cR$ be a random variable supported on $\{T, \ldots, \lceil (1 + \delta)T\rceil\}$ with
  \[\P(\cR = j)\propto \lamj.\]
  Then under the settings of Lemma \ref{lem:LjFj_bound},
  \begin{align*}
    \E \norm{\nabla f(\tx_{\cR})}^{2} \le \min\bigg\{&\lb\prod_{t=\T_{\mu}(c)}^{T-1}\frac{1}{\mu\lamt}\rb\frac{\DeltaL}{\lambda_{T}} + \frac{\sigma^{2}I(T > \T_{\mu}(c))}{(1 - \sqrt{c})\td{B}_{T}}, \lb\frac{1}{\mu\lambda_{\T_{n}} + 1}\rb^{(T - \T_{n})_{+}}\frac{\DeltaL}{\lambda_{T}}, \\
   & \quad \frac{2\DeltaL + \sigma^{2}\sum_{j=T}^{\lceil (1 + \delta)T\rceil}\lamj I(\Bj < n)/\Bj}{\sum_{j=T}^{\lceil (1 + \delta)T\rceil}\lamj}\bigg\},
  \end{align*}
  and
  \[\E \lb f(\tx_{\cR}) - f(x^\star)\rb\le \min\left\{\lb\prod_{t=\T_{\mu}(c) + 1}^{T}\frac{1}{\mu\lamt}\rb\DeltaL + \frac{\sigma^{2}I(T > \T_{\mu}(c))}{2(1 - \sqrt{c})\mu \td{B}_{T}}, \lb\frac{1}{\mu\lambda_{\T_{n}} + 1}\rb^{(T - \T_{n})_{+}}\DeltaL\right\},\]
    where $\prod_{t=a}^{b}c_{t} = 1$ if $a > b$.
\end{theorem}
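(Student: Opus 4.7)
The plan is to decompose the statement into its individual bounds and prove each separately. The first two bounds in the gradient-norm inequality and both bounds in the function-value inequality I will obtain by applying Lemma \ref{lem:LjFj_bound} pointwise to each $j \in \{T, \ldots, \lceil(1+\delta)T\rceil\}$ and then invoking a monotonicity argument. The third gradient-norm bound I will obtain by a telescoping argument analogous to Lemma \ref{lem:randomized} but restricted to the window $\{T, \ldots, \lceil(1+\delta)T\rceil\}$.

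For the monotonicity step, I will verify that every expression produced by Lemma \ref{lem:LjFj_bound} is non-increasing in $j$. The product $\prod_{t=\T_{\mu}(c)}^{j-1} 1/(\mu\lamt)$ shrinks as $j$ grows because each new factor satisfies $1/(\mu\lamt) < c < 1$ by the definition of $\T_{\mu}(c)$; the denominator $\lamj$ in $\DeltaL/\lamj$ is non-decreasing by the hypothesis on $\lamj$; $\tBj$ is non-decreasing so $\sigma^{2}/\tBj$ is non-increasing; and $1/(\mu\lambda_{\T_{n}} + 1) \le 1$ makes the factor $(1/(\mu\lambda_{\T_{n}} + 1))^{(j-\T_{n})_{+}}$ non-increasing in $j$. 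Hence, writing $p_{j} = \lamj / \sum_{t=T}^{\lceil(1+\delta)T\rceil} \lamt$ for the weights of $\cR$,
\[\E \norm{\nabla f(\tx_{\cR})}^{2} = \sum_{j=T}^{\lceil(1+\delta)T\rceil} p_{j} \E\norm{\nabla f(\tx_{j})}^{2} \le \sum_{j=T}^{\lceil(1+\delta)T\rceil} p_{j} \cdot U_{j} \le U_{T},\]
where $U_{j}$ denotes either of the first two bounds from Lemma \ref{lem:LjFj_bound}. The same argument, applied to $V_{j} := $ the corresponding function-value bound, delivers both bounds on $\E(f(\tx_{\cR}) - f(x^{\star}))$.

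For the third gradient-norm bound, I will multiply the conclusion of Theorem \ref{thm:one_epoch} by $\lamj$ and sum over $j \in \{T, \ldots, \lceil(1+\delta)T\rceil\}$, yielding
\[\sum_{j=T}^{\lceil(1+\delta)T\rceil} \lamj \E\norm{\nabla f(\tx_{j})}^{2} \le 2\bigl(\E f(\tx_{T-1}) - \E f(\tx_{\lceil(1+\delta)T\rceil})\bigr) + \sigma^{2}\sum_{j=T}^{\lceil(1+\delta)T\rceil} \frac{\lamj I(\Bj < n)}{\Bj}.\]
The inner differences telescope, leaving only the endpoint gap. Since $f(\tx_{\lceil(1+\delta)T\rceil}) \ge f(x^{\star})$ and $\E(f(\tx_{T-1}) - f(x^{\star})) \le \tfrac{1}{2}\Ll_{T-1} \le \tfrac{1}{2}\DeltaL$ by Lemma \ref{lem:LjFj_uniform_bound}, the right side is at most $2\DeltaL + \sigma^{2}\sum_{j} \lamj I(\Bj < n)/\Bj$. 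Dividing by $\sum_{j=T}^{\lceil(1+\delta)T\rceil} \lamj$, which is precisely the normalizing constant of $\cR$, produces the third bound exactly as stated.

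The main obstacle I expect is bookkeeping with the boundary cases rather than any analytical difficulty. When $\T_{\mu}(c) = \infty$ (in particular for $\mu = 0$), the first bound collapses via the empty-product convention $\prod_{a}^{b}(\cdot) = 1$ for $a > b$ and the indicator $I(T > \T_{\mu}(c)) = 0$ silences the noise term; when $T \le \T_{n}$, the second bound is reduced to $\DeltaL/\lamj$ by the truncation $(T - \T_{n})_{+} = 0$. I will need to check that these conventions align with the claimed expressions in the theorem, so that the monotone pointwise comparison really does produce the formulas written in the statement. Once these edge cases are accounted for, the proof is essentially a one-line pointwise bound plus the telescoping identity above.
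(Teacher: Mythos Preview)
Your proposal is correct and matches the paper's approach: pointwise application of Lemma~\ref{lem:LjFj_bound} plus monotonicity in $j$ for the first two gradient bounds and both function-value bounds, and the telescoping argument of Lemma~\ref{lem:randomized} together with Lemma~\ref{lem:LjFj_uniform_bound} for the third. One bookkeeping point you already anticipate: when $T\le \T_{\mu}(c)$ the indicator $I(j>\T_{\mu}(c))$ appearing in Lemma~\ref{lem:LjFj_bound} is not monotone in the direction you need, so handle that case by invoking $\E\|\nabla f(\tx_{j})\|^{2}\le \Lj/\lamj\le \DeltaL/\lambda_{T}$ directly from Lemma~\ref{lem:LjFj_uniform_bound}, which coincides with the collapsed first bound your conventions produce.
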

\begin{proof}
  By Lemma \ref{lem:LjFj_bound}, for any $j\in [T, \lceil (1 + \delta)T\rceil]$,
  \begin{align*}
    \E \norm{\nabla f(\tx_{j})}^{2}
    &\le \min\left\{\lb\prod_{t=\T_{\mu}(c)}^{j-1}\frac{1}{\mu\lamt}\rb\frac{\DeltaL}{\lamj} + \frac{\sigma^{2}I(j > \T_{\mu}(c))}{(1 - \sqrt{c})\tBj}, \lb\frac{1}{\mu\lambda_{\T_{n}} + 1}\rb^{(j - \T_{n})_{+}}\frac{\DeltaL}{\lamj}\right\}\\
    &\le \min\left\{\lb\prod_{t=\T_{\mu}(c)}^{j-1}\frac{1}{\mu\lamt}\rb\frac{\DeltaL}{\lambda_{T}} + \frac{\sigma^{2}I(T > \T_{\mu}(c))}{(1 - \sqrt{c})\tBj}, \lb\frac{1}{\mu\lambda_{\T_{n}} + 1}\rb^{(j - \T_{n})_{+}}\frac{\DeltaL}{\lambda_{T}}\right\}.
  \end{align*}
  As a result,
  \begin{align*}
    & \E \norm{\nabla f(\tx_{\cR})}^{2} = \frac{\sum_{j=T+1}^{\lceil (1 + \delta)T\rceil}\lamj \E\norm{\nabla f(\tx_{j})}^{2}}{\sum_{j=T+1}^{\lceil (1 + \delta)T\rceil}\lamj}\\
    & \le \min\left\{\lb\prod_{t=\T_{\mu}(c)}^{T-1}\frac{1}{\mu\lamt}\rb\frac{\DeltaL}{\lambda_{T}} + \frac{\sigma^{2}I(T > \T_{\mu}(c))}{(1 - \sqrt{c})\tBj}, \lb\frac{1}{\mu\lambda_{\T_{n}} + 1}\rb^{(T - \T_{n})_{+}}\frac{\DeltaL}{\lambda_{T}}\right\}.
  \end{align*}
  Similarly we can prove the bound for $\E (f(\tx_{j}) - f(x^\star))$. To prove the third bound for $\E \norm{\nabla f(\tx_{\cR})}^{2}$, we first notice that $\tx_{\cR}$ can be regarded as the randomized iterate with $\tx_{T}$ being the initializer. By Lemma \ref{lem:randomized},
  \[\E \norm{\nabla f(\tx_{\cR})}^{2}\le \frac{2\E \lb f(\tx_{T}) - f(x^\star)\rb  + \sigma^{2}\sum_{j=T+1}^{\lceil (1 + \delta)T\rceil}\lamj I(\Bj < n)/\Bj}{\sum_{j=T+1}^{\lceil (1 + \delta)T\rceil}\lamj}.\]
  By Lemma \ref{lem:LjFj_uniform_bound},
  \[\E \lb f(\tx_{T}) - f(x^\star)\rb\le \DeltaL,\]
  which concludes the proof. The bound for $\E (f(\tx_{\cR}) - f(x^\star))$ can be proved similarly.
\end{proof}

\subsection{Complexity Analysis: Proof of Theorem \ref{thm:quadratic_complexity}}
 Under this setting,
  \[2\lamj L = \frac{2\etaj \mj}{\bj} = \sqrt{\mj} = jI(j < \sqrt{n}) + \sqrt{n}I(j\ge \sqrt{n}).\]
   Let $c = 1 / 8$. It is easy to verify that $\tBjj\lamj / \tBj\lamjj \ge 1 / 2 > \sqrt{c}$. Moreover, by Lemma \ref{lem:log1+x},
  \[L\sum_{t: \Bt < n}\frac{\lamt}{\Bt} = \sum_{t < \sqrt{n}}\frac{1}{t}\le 1 + \log\sqrt{n}.\]
Recalling the definition of $\DeltaL$ in Lemma \ref{lem:LjFj_uniform_bound},
  \begin{equation}
    \label{eq:DeltaL_bound}
    \DeltaL \le 2\Delta_{f} + \lb\sum_{t: \Bt < n}\frac{\lamt}{\Bt}\rb\sigma^{2}\le 2\lb\Delta_{f} + \frac{\sigma^{2}}{L}\log n\rb = \O(\Delta).
  \end{equation}
  Now we treat each of the three terms in the bound of $\E\norm{\nabla f(\tx_{j})}^{2}$ in Theorem \ref{thm:tail-randomized} separately. 

  ~\\
  \noindent (\textbf{First term.}) Write $\T_{\mu}$ for $\T_{\mu}(c) = \T_{\mu}(1/8)$. By definition,
\[\T_{\mu} = \min\left\{j: \lamj \ge \frac{8}{\mu}\right\} = \left\{
    \begin{array}{ll}
      \left\lceil 16L / \mu\right\rceil & \lb\left\lceil 16L / \mu \right\rceil \le \sqrt{n}\rb\\
      \infty & (\mathrm{otherwise})
    \end{array}
    \right.\]
  Let
  \[T_{g1}(\eps) = \T_{\mu} + \frac{\log(16\mu \DeltaL / \eps^{2})}{\log 8} + \frac{2\sigma}{\eps}.\]
  When $T_{g1}(\eps) = \infty$, it is obvious that $T_{g}(\eps) \le T_{g1}(\eps)$. When $T_{g1}(\eps) < \infty$, for any $T \ge T_{g1}(\eps)$,
  \[\lb\prod_{t=\T_{\mu}}^{T-1}\frac{1}{\mu\lamt}\rb \frac{\DeltaL}{\lambda_{T}}\le  c^{T - \T_{\mu}}\frac{\DeltaL}{\lambda_{T}}\le \lb\frac{1}{8}\rb^{\frac{\log(16\mu \DeltaL  / \eps^{2})}{\log 8}}\frac{\DeltaL}{\lambda_{\T_{\mu}}} = \frac{\eps^{2}}{16\mu \lambda_{\T_{\mu}}}\le \frac{\eps^{2}}{2}.\]
  Note that $\tBj = j^{2}$ in this case, 
  \[\frac{\sigma^{2}}{(1 - \sqrt{c})\td{B}_{T}}\le \frac{2\sigma^{2}}{T^{2}}\le \frac{\eps^{2}}{2}.\]
  Recalling the definition \eqref{eq:Tgeps} of $T_{g}(\eps)$, we obtain that
  \[T_{g}(\eps)\le T_{g1}(\eps).\]

  ~\\
  \noindent (\textbf{Second term.}) By definition,
\[T_{n} = \min\{j: \Bj = n\} = \lceil\sqrt{n}\rceil, \quad \lambda_{T_{n}} = T_{n} / 2L.\]
  Let
  \[T_{g2}(\eps) = T_{n} + \lb 1 + \frac{2L}{\mu\sqrt{n}}\rb\log\lb \frac{2L\DeltaL}{\sqrt{n}\eps^{2}}\rb.\]
  By Lemma \ref{lem:log1+x},
  \[T_{g2}(\eps) - T_{n}\ge \frac{\log(2L\DeltaL / \sqrt{n}\eps^{2})}{\log(1 + \mu\sqrt{n} / 2L)}.\]
  When $T \ge T_{g2}(\eps)$,
  \[\lb\frac{1}{\mu\lambda_{\T_{n}} + 1}\rb^{(T - \T_{n})_{+}}\frac{\DeltaL}{\lambda_{T}}\le \frac{\sqrt{n}\eps^{2}}{2L\DeltaL}\frac{\DeltaL}{\lambda_{T_{n}}}\le \eps^{2}.\]
  Therefore, we have
  \[T_{g}(\eps)\le T_{g2}(\eps).\]
  
  ~\\
  \noindent (\textbf{Third term.}) Note that
  \[2L\sum_{j=T}^{2T}\lamj I(\Bj < n) / \Bj = \sum_{j=T}^{2T} I(j < \sqrt{n}) / j\le \sum_{j=T}^{2T} I(j < \sqrt{n}) / T\le \frac{T + 1}{T}\le 2.\]
  and
  \[2L\sum_{j=T}^{2T}\lamj = \sum_{j=T}^{2T}\lb jI(j < \sqrt{n}) + \sqrt{n}I(j\ge \sqrt{n})\rb \ge \sum_{j=T}^{2T}(T\wedge\sqrt{n})\ge T^{2}\wedge \sqrt{n}T.\]
  Let
\[\tDeltaL = 2L\DeltaL + 2\sigma^{2}.\]
  By Theorem \ref{thm:tail-randomized},
  \[\E\norm{\nabla f(\tx_{j})}^{2}\le \frac{\tDeltaL}{T^{2}\wedge \sqrt{n}T}.\]
  Let
  \[T_{g3}(\eps) = \frac{\sqrt{\tDeltaL}}{\eps} + \frac{\tDeltaL}{\sqrt{n}\eps^{2}}.\]
  If $T \ge T_{g3}(\eps)$,
  \[\frac{\tDeltaL}{T^{2}\wedge \sqrt{n}T}\le \max\left\{\frac{\tDeltaL}{T^{2}}, \frac{\tDeltaL}{\sqrt{n}T}\right\}\le \eps^{2}, \]
  Therefore,
  \[T_{g}(\eps)\le T_{g3}(\eps).\]

  ~\\
  \noindent Putting three pieces together, we conclude that
  \[T_{g}(\eps) \le T_{g1}(\eps)\wedge T_{g2}(\eps)\wedge T_{g3}(\eps).\]
  In this case, the expected computational complexity is
  \begin{align*}
    \E \comp_{g}(\eps) &= \sum_{j=1}^{2T_{g}(\eps)}(2\mj + \Bj) = 3\sum_{j=1}^{2T_{g}(\eps)}(j^{2}\wedge n) \\
    & \le 3\min\left\{\sum_{j=1}^{2T_{g}(\eps)}j^{2}, nT_{g}(\eps)\right\} = \O\lb T_{g}^{3}(\eps) \wedge nT_{g}(\eps)\rb.
  \end{align*}

  ~\\
  \noindent \textbf{Dealing with $T_{g1}(\eps)$ and $T_{g2}(\eps)$}. First we prove that
  \begin{align}
    & \lb T_{g1}^{3}(\eps)\wedge nT_{g1}(\eps)\rb \wedge \lb T_{g2}^{3}(\eps)\wedge nT_{g2}(\eps)\rb\nonumber\\
    = & \O\lb \left\{\frac{L^{3}}{\mu^{3}} + \frac{\sigma^{3}}{\eps^{3}} + \log^{3}\lb\frac{\mu \DeltaL}{\eps^{2}}\rb\right\} \wedge \left\{n^{3/2} + \lb n + \frac{\sqrt{n}L}{\mu}\rb\log\lb\frac{L\DeltaL}{\sqrt{n}\eps^{2}}\rb\right\}\rb.\label{eq:quadratic_term12}
  \end{align}
We distinguish two cases.
\begin{itemize}
\item If $\T_{\mu}\le \T_{n}$, since $T_{g2}(\eps) > \T_{n}$ and $\T_{n}^{3}\le n\T_{n}$then
\[\lb T_{g1}^{3}(\eps)\wedge nT_{g1}(\eps)\rb\wedge \lb T_{g2}^{3}(\eps)\wedge nT_{g2}(\eps)\rb = T_{g1}^{3}(\eps),\]
which proves \eqref{eq:quadratic_term12}.
\item  If $\T_{\mu} > \T_{n}$, then
\[\lb T_{g1}^{3}(\eps)\wedge nT_{g1}(\eps)\rb\wedge \lb T_{g2}^{3}(\eps)\wedge nT_{g2}(\eps)\rb\le nT_{g2}(\eps).\]
It is left to prove
\[n^{3/2} + \lb n + \frac{\sqrt{n}L}{\mu}\rb\log\lb\frac{L\DeltaL}{\sqrt{n}\eps^{2}}\rb = \O\lb \frac{L^{3}}{\mu^{3}} + \frac{\sigma^{3}}{\eps^{3}} + \log^{3}\lb\frac{\mu \DeltaL}{\eps^{2}}\rb\rb.\]
Since $\T_{\mu} > \sqrt{n} / 2$, we have $\sqrt{n} = \O\lb \frac{L}{\mu}\rb$. This entails that
\[n^{3/2} = \O\lb\frac{L^{3}}{\mu^{3}}\rb, \quad \mbox{and}\quad n + \frac{\sqrt{n}L}{\mu}= \O\lb\frac{L^{2}}{\mu^{2}}\rb.\]
As a result,
\begin{align*}
  &\lb n + \frac{\sqrt{n}L}{\mu}\rb\log\lb\frac{L\DeltaL}{\sqrt{n}\eps^{2}}\rb = \O\lb\frac{L^{2}}{\mu^{2}}\left\{\log\lb\frac{\mu \DeltaL}{\eps^{2}}\rb + \log\lb\frac{L}{\sqrt{n}\mu}\rb\right\}\rb\\
  = & \O\lb\frac{L^{2}}{\mu^{2}}\left\{\log\lb\frac{\mu \DeltaL}{\eps^{2}}\rb + \log\lb\frac{L}{\mu}\rb\right\}\rb.
\end{align*}
\eqref{eq:quadratic_term12} is then proved by the fact that
\[\frac{L^{2}}{\mu^{2}}\log\lb\frac{\mu\DeltaL}{\eps^{2}}\rb\le \frac{L^{3}}{\mu^{3}} + \log^{3}\lb\frac{\mu\DeltaL}{\eps^{2}}\rb, \quad \frac{L^{2}}{\mu^{2}}\log\lb\frac{L}{\mu}\rb = \O\lb\frac{L^{3}}{\mu^{3}}\rb.\]
\end{itemize}

  ~\\
  \noindent \textbf{Dealing with $T_{g3}(\eps)$}. We prove that
  \begin{equation}
    \label{eq:quadratic_term3}
    T_{g3}^{3}(\eps)\wedge nT_{g3}(\eps) = \O\lb \frac{\tDeltaL^{3/2}}{\eps^{3}}\wedge \frac{\sqrt{n}\tDeltaL}{\eps^{2}}\rb.
  \end{equation}
  We distinguish two cases.
  \begin{itemize}
  \item If $\tDeltaL \le n\eps^{2}$, then
    \[\frac{\tDeltaL}{\sqrt{n}\eps^{2}}\le \frac{\sqrt{\tDeltaL}}{\eps} \le \sqrt{n}, \quad \mbox{and}\quad \frac{\tDeltaL^{3/2}}{\eps^{3}}\le \frac{\sqrt{n}\tDeltaL}{\eps^{2}}.\]
    As a result,
    \[T_{g3}(\eps) = \O\lb \frac{\sqrt{\tDeltaL}}{\eps}\rb.\]
    Thus,
    \[T_{g3}^{3}(\eps)\wedge nT_{g3}(\eps) = \O\lb T_{g3}^{3}(\eps)\rb = \O\lb \frac{\tDeltaL^{3/2}}{\eps^{3}}\rb = \O\lb \frac{\tDeltaL^{3/2}}{\eps^{3}}\wedge \frac{\sqrt{n}\tDeltaL}{\eps^{2}}\rb.\]
  \item If $\tDeltaL \ge n\eps^{2}$, then
    \[\frac{\tDeltaL}{\sqrt{n}\eps^{2}}\ge \frac{\sqrt{\tDeltaL}}{\eps} \ge \sqrt{n}, \quad \mbox{and}\quad \frac{\tDeltaL^{3/2}}{\eps^{3}}\ge \frac{\sqrt{n}\tDeltaL}{\eps^{2}}.\]
    As a result,
    \[T_{g3}(\eps) = \O\lb\frac{\tDeltaL}{\sqrt{n}\eps^{2}}\rb.\]
    Therefore,
    \[T_{g3}^{3}(\eps)\wedge nT_{g3}(\eps) = \O\lb nT_{g3}(\eps)\rb = \O\lb\frac{\sqrt{n}\tDeltaL}{\eps^{2}}\rb = \O\lb \frac{\tDeltaL^{3/2}}{\eps^{3}}\wedge \frac{\sqrt{n}\tDeltaL}{\eps^{2}}\rb.\]
  \end{itemize}
  \eqref{eq:quadratic_term3} is then proved by putting two pieces together..

  ~\\
  \noindent \textbf{Summary} Putting \eqref{eq:quadratic_term12} and \eqref{eq:quadratic_term3} together and using the fact that $\tDeltaL = \O(L\Delta)$ and $\DeltaL = \O(\Delta)$, we prove the bound for $\E \comp_{g}(\eps)$. As for $\E \comp_{f}(\eps)$, by Theorem \ref{thm:tail-randomized}, we can directly apply \eqref{eq:quadratic_term12} by replacing $\DeltaL / \lambda_{T}$ by $\DeltaL$ and $\sigma^{2}$ with $\sigma^{2} / \mu$.

  \subsection{Complexity Analysis: Proof of Theorem \ref{thm:exponential_complexity}}
 Under this setting,
  \[2\lamj L = \frac{2\etaj \mj}{\bj} = \sqrt{\mj} = \alpha^{j}I(j < \log_{\alpha}n) + \sqrt{n}I(j\ge \log_{\alpha} n).\]
  Let $c = 1 / 4\alpha^{4}$. Then
  \[\frac{\tBjj\lamj}{\tBj\lamjj} \ge \frac{1}{\alpha^{2}} > \sqrt{c}.\]
  On the other hand,
  \[L\sum_{t: \Bt < n}\frac{\lamt}{\Bt} = \frac{1}{2}\sum_{t < \sqrt{n}}\alpha^{-t}\le \frac{1}{2(1 - \alpha^{-1})} = \frac{\alpha}{2(\alpha - 1)}.\]
  Recalling the definition of $\DeltaL$ in Lemma \ref{lem:LjFj_uniform_bound},
  \begin{equation}
    \label{eq:DeltaL_bound}
    \DeltaL \le 2\Delta_{f} + \lb\sum_{t: \Bt < n}\frac{\lamt}{\Bt}\rb\sigma^{2}\le 2\Delta_{f} + \frac{\alpha}{2(\alpha - 1)}\frac{\sigma^{2}}{L} = \O(\Delta').
  \end{equation}
  As in the proof of Theorem \ref{thm:quadratic_complexity}, we treat each of the three terms in the bound of $\E\norm{\nabla f(\tx_{j})}^{2}$ in Theorem \ref{thm:tail-randomized} separately.

    ~\\
  \noindent (\textbf{First term.}) Write $\T_{\mu}$ for $\T_{\mu}(c) = \T_{\mu}(1/4\alpha^4)$. By definition,
\[\T_{\mu} = \min\left\{j: \lamj > \frac{4\alpha^2}{\mu}\right\} = \left\{
    \begin{array}{ll}
      \left\lceil \log_{\alpha} \lb 8L / \mu\rb\right\rceil + 4 & \lb\left\lceil 8L\alpha^4 / \mu \right\rceil \le \sqrt{n}\rb\\
      \infty & (\mathrm{otherwise})
    \end{array}
  \right.,\]
and
\[T_{n} = \min\{j: \Bj = n\} = \lceil(\log_{\alpha}n) / 2\rceil.\]
Let
\[A(\eps) = \max\left\{\T_{\mu} + \sqrt{2\log_{\alpha}\lb\frac{2\mu\DeltaL}{\eps^{2}}\rb}, \log_{\alpha}\lb\frac{2\sigma}{\eps}\rb\right\},\]
and
  \[T_{g1}(\eps) = A(\eps)I(A(\eps)\le \T_{n}) + \infty I(A(\eps) > \T_{n}).\]
  When $T_{g1}(\eps) = \infty$, it is obvious that $T_{g}(\eps) \le T_{g1}(\eps) = \infty$. When $T_{g1}(\eps) < \infty$, i.e. $\T_{\mu} \le A(\eps) \le \T_{n}$, for any $T \ge T_{g1}(\eps)$,
  \[\lb\prod_{t=\T_{\mu}}^{T-1}\frac{1}{\mu\lamt}\rb \frac{\DeltaL}{\lambda_{T}} = \lb\prod_{t=\T_{\mu}}^{T}\frac{1}{\mu\lamt}\rb (\mu\DeltaL) \le \exp\left\{-\sum_{t=\T_{\mu}}^{T\wedge A(\eps)}\log(\mu \lamt)\right\}(\mu \DeltaL).\]
  For any $t\in [\T_{\mu}, A(\eps)]$, since $t\le \T_{n}$, we have $\lamt = \alpha^{t}$ and thus
  \begin{align*}
    \log(\mu \lamt) = \log(\mu \lambda_{\T_{\mu}}) + (\log \alpha)(t - \T_{\mu})\ge (\log \alpha)(t - \T_{\mu}).
  \end{align*}
  Then
  \[\sum_{t=\T_{\mu}}^{T\wedge  A(\eps)}\log(\mu \lamt)\ge \frac{\log \alpha}{2}(\lfloor A(\eps)\rfloor - \T_{\mu})^{2}\ge \log\lb\frac{2\mu\DeltaL}{\eps^{2}}\rb.\]
  This implies that
  \[\lb\prod_{t=\T_{\mu}}^{T-1}\frac{1}{\mu\lamt}\rb \frac{\DeltaL}{\lambda_{T}}\le \frac{\eps^{2}}{2}.\]
  On the other hand, note that $\tBj = \alpha^{2j}$ in this case, when $T > T_{g1}(\eps)$,
  \[T_{g1}(\eps)\ge \frac{\log(2\sigma / \eps)}{\log \alpha} \Longrightarrow \frac{\sigma^{2}}{(1 - \sqrt{c})\td{B}_{T}}\le \frac{\eps^{2}}{4(1 - \sqrt{c})}\le \frac{\eps^{2}}{2},\]
  where the last inequality follows from the fact that
  \[4(1 - \sqrt{c}) = 4 \lb 1 - \frac{1}{2\alpha^{2}}\rb\ge 2.\]
  Putting pieces together we have
  \[T_{g}(\eps)\le T_{g1}(\eps).\]

    ~\\
  \noindent (\textbf{Second term.}) Let
  \[T_{g2}(\eps) = T_{n} + \lb 1 + \frac{2L}{\mu\sqrt{n}}\rb\log\lb \frac{2L\DeltaL}{\sqrt{n}\eps^{2}}\rb.\]
  Using the same argument as in the proof of Theorem \ref{thm:quadratic_complexity}, 
  \[T_{g}(\eps)\le T_{g2}(\eps).\]

    ~\\
  \noindent (\textbf{Third term.}) Note that
  \[2L\sum_{j=T}^{\lceil (1 + \delta)T \rceil}\lamj I(\Bj < n) / \Bj = \sum_{j=T}^{\lceil (1 + \delta)T \rceil} I(j < \sqrt{n}) \alpha^{-j}\le \sum_{j=1}^{\infty}\alpha^{-j} = \frac{1}{\alpha - 1}.\]
  and
  \begin{align*}
    2L\sum_{j=T}^{\lceil (1 + \delta)T \rceil}\lamj
    &= \sum_{j=T}^{\lceil (1 + \delta)T \rceil}\lb \alpha^{j}I(j < \T_{n}) + \sqrt{n}I(j\ge \T_{n})\rb \\
    &\ge \sum_{j=T+1}^{\lceil (1 + \delta)T \rceil}(\alpha^{j}\wedge\sqrt{n})\ge \alpha^{\lceil (1 + \delta)T \rceil}\wedge \delta\sqrt{n}T.
  \end{align*}
  Let
\[\tDeltaL = 2L\DeltaL + \sigma^2 / (\alpha - 1).\]
  By Theorem \ref{thm:tail-randomized},
  \[\E\norm{\nabla f(\tx_{j})}^{2}\le \frac{2\DeltaL + \sigma^2 / (\alpha - 1)}{\alpha^{\lceil (1 + \delta)T \rceil}\wedge \delta\sqrt{n}T} = \frac{\tDeltaL}{\alpha^{\lceil (1 + \delta)T \rceil}\wedge \delta\sqrt{n}T}.\]
  Let
  \[T_{g3}(\eps) = \max\left\{\frac{1}{1 + \delta}\log_{\alpha}\lb\frac{\tDeltaL}{\eps^{2}}\rb, \frac{\tDeltaL}{\delta\sqrt{n}\eps^{2}}\right\}.\]
  Then
  \[T_{g}(\eps)\le T_{g3}(\eps).\]

  ~\\
  \noindent Putting three pieces together, we conclude that
  \[T_{g}(\eps) \le T_{g1}(\eps)\wedge T_{g2}(\eps)\wedge T_{g3}(\eps).\]
  In this case, the expected computational complexity is
  \begin{align*}
    \E \comp_{g}(\eps) &= \sum_{j=1}^{\lceil (1 + \delta)T_{g}(\eps)\rceil}(2\mj + \Bj) = 3\sum_{j=1}^{\lceil (1 + \delta)T_{g}(\eps)\rceil}(\alpha^{2j}\wedge n)\\
    &\le 3\min\left\{\sum_{j=1}^{\lceil (1 + \delta)T_{g}(\eps)\rceil}\alpha^{2j}, nT_{g}(\eps)\right\} = \O\lb \alpha^{2(1 + \delta)T_{g}(\eps)} \wedge nT_{g}(\eps)\rb.
  \end{align*}

  ~\\
  \noindent \textbf{Dealing with $T_{g1}(\eps)$ and $T_{g2}(\eps)$}. First we prove that
  \begin{align}
    & \lb \alpha^{2(1 + \delta)T_{g1}(\eps)}\wedge nT_{g1}(\eps)\rb \wedge \lb \alpha^{2(1 + \delta)T_{g2}(\eps)}\wedge nT_{g2}(\eps)\rb\nonumber\\
    = & \O\lb \left\{\frac{L^{2(1 + \delta)}}{\mu^{2(1 + \delta)}}\es\lb 2\log_{\alpha}\left\{\frac{\mu\DeltaL}{\eps^{2}}\right\}\rb + \frac{\sigma^{2(1 + \delta)}}{\eps^{2(1 + \delta)}}\right\}\log^{2}n \wedge \left\{n\log \lb\frac{L}{\mu}\rb + \lb n + \frac{\sqrt{n}L}{\mu}\rb\log\lb\frac{L\DeltaL}{\sqrt{n}\eps^{2}}\rb\right\}\rb.    \label{eq:exponential_term12}
  \end{align}
We distinguish two cases.
\begin{itemize}
\item If $T_{g1}(\eps)\le \T_{n} / (1 + \delta)$, since $T_{g2}(\eps) > \T_{n}$,
  \begin{align*}
    &\lb \alpha^{2(1 + \delta)T_{g1}(\eps)}\wedge nT_{g1}(\eps)\rb\wedge \lb \alpha^{2(1 + \delta)T_{g2}(\eps)}\wedge nT_{g2}(\eps)\rb = \alpha^{2(1 + \delta)T_{g1}(\eps)}\\
    & = \O\lb\frac{L^{2(1 + \delta)}}{\mu^{2(1 + \delta)}}\es\lb 2\log_{\alpha}\left\{\frac{\mu\DeltaL}{\eps^{2}}\right\}\rb + \frac{\sigma^{2(1 + \delta)}}{\eps^{2(1 + \delta)}}\rb\\
    & = \O\lb\left\{\frac{L^{2(1 + \delta)}}{\mu^{2(1 + \delta)}}\es\lb 2\log_{\alpha}\left\{\frac{\mu\DeltaL}{\eps^{2}}\right\}\rb + \frac{\sigma^{2(1 + \delta)}}{\eps^{2(1 + \delta)}}\right\}\log^{2}n\rb,
  \end{align*}
which proves \eqref{eq:exponential_term12}.
\item  If $T_{g1}(\eps) > \T_{n} / (1 + \delta)$, then
  \begin{align*}
    &\lb \alpha^{2(1 + \delta)T_{g1}(\eps)}\wedge nT_{g1}(\eps)\rb \wedge \lb \alpha^{2(1 + \delta)T_{g2}(\eps)}\wedge nT_{g2}(\eps)\rb \le nT_{g2}(\eps)\\
    & \,\, = \O\lb n\log \lb\frac{L}{\mu}\rb + \lb n + \frac{\sqrt{n}L}{\mu}\rb\log\lb\frac{L\DeltaL}{\sqrt{n}\eps^{2}}\rb\rb.
  \end{align*}
It is left to prove that
\begin{align}
  & n\log \lb\frac{L}{\mu}\rb + \lb n + \frac{\sqrt{n}L}{\mu}\rb\log\lb\frac{L\DeltaL}{\sqrt{n}\eps^{2}}\rb\nonumber\\
  = & \O\lb\left\{\frac{L^{2(1 + \delta)}}{\mu^{2(1 + \delta)}}\es\lb 2\log_{\alpha}\left\{\frac{\mu\DeltaL}{\eps^{2}}\right\}\rb + \frac{\sigma^{2(1 + \delta)}}{\eps^{2(1 + \delta)}}\right\}\log^{2}n\rb.\label{eq:exponential_term12_goal}
\end{align}
We consider the following two cases.
\begin{itemize}
\item If $L / \mu > \sqrt{n}$,
  \begin{align*}
    &n\log\lb\frac{L}{\mu}\rb + \lb n + \frac{\sqrt{n}L}{\mu}\rb\log\lb\frac{L\DeltaL}{\sqrt{n}\eps^{2}}\rb \le \frac{\sqrt{n}L}{\mu}\log\lb\frac{L}{\mu}\rb + \frac{2\sqrt{n}L}{\mu}\log\lb\frac{L\DeltaL}{\sqrt{n}\eps^{2}}\rb\\
    & = \O\lb\frac{\sqrt{n}L}{\mu}\log\left\{\frac{L^{2}\DeltaL}{\mu \sqrt{n}\eps^{2}}\right\}\rb = \O\lb\frac{\sqrt{n}L}{\mu}\log\lb\frac{\mu\DeltaL}{\eps^{2}}\rb + \frac{\sqrt{n}L}{\mu}\log\lb\frac{1}{\sqrt{nL^{2}}\mu^{2}}\rb\rb
  \end{align*}
  The first term can be bounded by
  \[\frac{\sqrt{n}L}{\mu}\log\lb\frac{\mu\DeltaL}{\eps^{2}}\rb\le \frac{L^{2}}{\mu^{2}}\log\lb\frac{\mu\DeltaL}{\eps^{2}}\rb = \O\lb\frac{L^{2(1 + \delta)}}{\mu^{2(1 + \delta)}}\es\lb 2\log_{\alpha}\left\{\frac{\mu\DeltaL}{\eps^{2}}\right\}\rb\log^{2} n\rb.\]
  To bound the second term, we consider two cases.
  \begin{itemize}
  \item If $L / \mu > n$,
    \begin{align*}
      & \frac{\sqrt{n}L}{\mu}\log\lb\frac{L^{2}}{\sqrt{n}\mu^{2}}\rb\le \frac{2L^{3/2}}{\mu^{3/2}}\log\lb\frac{L}{\mu}\rb = \O\lb\frac{L^{2}}{\mu^{2}}\rb\\
      & \,\, = \O\lb\frac{L^{2(1 + \delta)}}{\mu^{2(1 + \delta)}}\es\lb 2\log_{\alpha}\left\{\frac{\mu\DeltaL}{\eps^{2}}\right\}\rb\log^{2} n\rb.
    \end{align*}
  \item If $\sqrt{n} < L / \mu < n$,
  \[\frac{\sqrt{n}L}{\mu}\log\lb\frac{L^{2}}{\sqrt{n}\mu^{2}}\rb\le \frac{2L\sqrt{n}\log n}{\mu} \le \frac{2L^{2}\log n}{\mu^{2}} = \O\lb\frac{L^{2(1 + \delta)}}{\mu^{2(1 + \delta)}}\es\lb 2\log_{\alpha}\left\{\frac{\mu\DeltaL}{\eps^{2}}\right\}\rb\log^{2} n\rb.\]
  \end{itemize}
  \eqref{eq:exponential_term12_goal} is then proved by putting pieces together.
\item If $L / \mu \le \sqrt{n}$.
  \begin{align*}
    &n\log \lb\frac{L}{\mu}\rb + \lb n + \frac{\sqrt{n}L}{\mu}\rb\log\lb\frac{L\DeltaL}{\sqrt{n}\eps^{2}}\rb \le n\log n + 2n\log\lb\frac{L\DeltaL}{\sqrt{n}\eps^{2}}\rb\\
    & = n\log n + 2n\log \lb\frac{L}{\mu \sqrt{n}}\rb + 2n\log\lb\frac{\mu\DeltaL}{\eps^{2}}\rb = \O\lb n\log n + n \log_{\alpha}\lb\frac{\mu\DeltaL}{\eps^{2}}\rb\rb
  \end{align*}
  Since $T_{g1}(\eps) > \T_{n} / (1 + \delta)$,
  \begin{equation}
    \label{eq:exponential_term12_nlower}
    n\le \alpha^{2(1 + \delta)T_{g1}(\eps)} = \O\lb\frac{L^{2(1 + \delta)}}{\mu^{2(1 + \delta)}}\es\lb 2\log_{\alpha}\left\{\frac{\mu\DeltaL}{\eps^{2}}\right\}\rb + \frac{\sigma^{2(1 + \delta)}}{\eps^{2(1 + \delta)}}\rb.
  \end{equation}
  It is left to prove that
  \begin{equation}
    \label{eq:exponential_term12_goal2}
    \frac{n}{\log^{2}n}\log_{\alpha}\lb\frac{\mu\DeltaL}{\eps^{2}}\rb = \O\lb\frac{L^{2(1 + \delta)}}{\mu^{2(1 + \delta)}}\es\lb 2\log_{\alpha}\left\{\frac{\mu\DeltaL}{\eps^{2}}\right\}\rb + \frac{\sigma^{2(1 + \delta)}}{\eps^{2(1 + \delta)}}\rb.
  \end{equation}
  We distinguish two cases.
  \begin{itemize}
  \item If $\log_{\alpha}(\mu\DeltaL / \eps^{2}) \le 2\log^{2}n$, \eqref{eq:exponential_term12_goal2} is proved by \eqref{eq:exponential_term12_nlower}.
  \item If $\log_{\alpha}(\mu\DeltaL / \eps^{2}) > 2\log^{2}n$,
    \[\es\lb 2\log_{\alpha}\left\{\frac{\mu \DeltaL}{\eps^{2}}\right\}\rb\ge \es\lb\log_{\alpha}\left\{\frac{\mu \DeltaL}{\eps^{2}}\right\} / 2\rb^{2}\ge n \cdot\es\lb \log_{\alpha}\left\{\frac{\mu \DeltaL}{\eps^{2}}\right\} / 2\rb.\]
    Note that
    \[\log_{\alpha}\lb\frac{\mu\DeltaL}{\eps^{2}}\rb = \O\lb\es\lb\log_{\alpha}\left\{\frac{\mu \DeltaL}{\eps^{2}}\right\} / 2\rb\rb.\]
    Therefore, 
    \[ \frac{n}{\log^{2}n}\log_{\alpha}\left\{\frac{\mu\DeltaL}{\eps^{2}}\right\} = \O\lb \es\lb 2\log_{\alpha}\left\{\frac{\mu \DeltaL}{\eps^{2}}\right\}\rb\rb,\]
    which proves \eqref{eq:exponential_term12_goal2}.
  \end{itemize}
\end{itemize}
\end{itemize}
Therefore, \eqref{eq:exponential_term12_goal} is proved. 

~\\
\noindent \textbf{Dealing with $T_{g3}(\eps)$}. If $\delta =
0$, the bound is infinite and thus trivial. Assume $\delta > 0$. We prove that
\begin{equation}
  \label{eq:exponential_term3}
  \alpha^{2(1 + \delta)T_{g3}(\eps)}\wedge nT_{g3}(\eps) = \O\lb \frac{\tDeltaL^{2}}{\delta^{2}\eps^{4}}\wedge \frac{\sqrt{n}\tDeltaL\log n}{\delta \eps^{2}}\rb.
\end{equation}
Let
\[h(y) = \frac{1}{1 + \delta}\log_{\alpha}(y) - \frac{y}{\delta\sqrt{n}}.\]
It is easy to see that
\[h'(y) = \frac{1}{y(1 + \delta)\log \alpha} - \frac{1}{\delta \sqrt{n}}.\]
Thus $h(y)$ is decreasing on $[0, y^{*}]$ and increasing on $[y^{*}, \infty)$ where
\[y^{*} = \frac{\delta \sqrt{n}}{(1 + \delta)\sqrt{\alpha}}.\]
Now we distinguish two cases.
\begin{itemize}
\item If $h(y^{*})\le 0$, then $h(y)\le 0$ for all $y > 0$ and thus $h(\tDeltaL / \eps^{2})\le 0$. As a result,
  \[h\lb\frac{\tDeltaL}{\eps^{2}}\rb \le 0 \Longrightarrow T_{g3}(\eps) \le \frac{\tDeltaL}{\delta\sqrt{n}\eps^{2}}.\]
  If $\tDeltaL/\delta\eps^{2} \le \sqrt{n}$,
  \[T_{g3}(\eps) = \O(1)\Longrightarrow\alpha^{2(1 + \delta)T_{g3}(\eps)}\wedge nT_{g3}(\eps) = \O(1),\]
  and hence \eqref{eq:exponential_term3} is proved by recalling the footnote in page \pageref{fn:O}. Otherwise,   note that
    \[\alpha^{2(1 + \delta)T_{g3}(\eps)}\wedge nT_{g3}(\eps) = \O(nT_{g3}(\eps)) = \O\lb\frac{\sqrt{n}\tDeltaL}{\delta\eps^{2}}\rb.\]
  Since $\tDeltaL/\delta\eps^{2} > \sqrt{n}$,
  \[\frac{\sqrt{n}\tDeltaL}{\delta\eps^{2}} = \O\lb\frac{\tDeltaL^{2}}{\delta^{2}\eps^{4}}\rb.\]
  Therefore, \eqref{eq:exponential_term3} is proved.
\item If $h(y^{*}) > 0$, noting that $h(0) = h(\infty) = -\infty$, there must exist $0 < y_{1}^{*} < y^{*} < y_{2}^{*} < \infty$ such that $h(y_{1}^{*}) = h(y_{2}^{*}) = 0$ and $h(y) \ge 0$ iff $y \in [y_{1}^{*}, y_{2}^{*}]$. 
  First we prove that
  \begin{equation}
    \label{eq:ystar}
    y_{1}^{*} = \O\lb 1\rb, \quad y_{2}^{*} = \O(\delta\sqrt{n}\log n).
  \end{equation}
  As for $y_{1}^{*}$, if $y^{*}\le 4$, then $y_{1}^{*} \le y^{*} = \O(1)$. If $y^{*} > 4$, let
  \[y = 1 + \frac{4}{y^{*}}.\]
  Now we prove $y_{1}^{*}\le y$. It is sufficient to prove $h(y)\ge 0$ and $y \le y^{*}$. In fact, a simple algebra shows that
  \[h(y^{*})\ge 0\Longrightarrow y^{*}\ge e\Longrightarrow y \le 4 \le y^{*}.\]
  On the other hand, by Lemma \ref{lem:log1+x}
  \[\log y\ge \frac{4/y^{*}}{1 + 4/y^{*}}\ge \frac{2}{y^{*}}.\]
  Recalling that $y^{*} = \delta\sqrt{n} / (1 + \delta)\log \alpha$, 
  \[h(y) \ge \frac{2}{(1 + \delta)(\log \alpha)y^{*}} - \frac{1}{\delta\sqrt{n}}\lb 1 + \frac{4}{y^{*}}\rb = \frac{1}{\delta\sqrt{n}}\lb 2 - 1 - \frac{4}{y^{*}}\rb\ge 0.\]
  Therefore, $y_{1}^{*} = \O(1)$.

  ~\\
  \noindent As for $y_{2}^{*}$, let $C > 0$ be any constant, then for sufficiently large $C$,
  \[(C + 1)\delta\sqrt{n}\log_{\alpha}n\ge y^{*} = \frac{\delta\sqrt{n}}{(1 + \delta)\log \alpha}.\]
  On the other hand,
  \[h((C + 1)\delta\sqrt{n}\log_{\alpha}(n)) = \log_{\alpha}(C\log_{\alpha} n) - C\log_{\alpha}(\delta\sqrt{n}).\]
Then for sufficiently large $C$,
  \[h((C + 1)\delta\sqrt{n}\log_{\alpha}(n))\le 0.\]
  Recalling that $h(y)$ is decreasing on $[y^{*}, \infty)$ and $h(y_{2}^{*}) = 0$, \eqref{eq:ystar} must hold. Based on \eqref{eq:ystar}, \eqref{eq:exponential_term3} can be equivalently formulated as
  \begin{equation}
    \label{eq:exponential_term3_goal2}
    \alpha^{2(1 + \delta)T_{g3}(\eps)}\wedge nT_{g3}(\eps) = \O\lb \frac{\tDeltaL^{2}}{\delta^{2}\eps^{2}}\left\{\frac{\tDeltaL^{2}}{\eps^{2}}\wedge y_{2}^{*}\right\}\rb.
  \end{equation}
  Now we consider three cases.
  \begin{itemize}
  \item If $\tDeltaL / \eps^{2}\ge y_{2}^{*}$,
    \[h\lb\frac{\tDeltaL}{\eps^{2}}\rb\le 0 \Longrightarrow T_{g3}(\eps) = \frac{\tDeltaL}{\delta\sqrt{n}\eps^{2}}.\]
    Then
    \[\alpha^{2(1 + \delta)T_{g3}(\eps)}\wedge nT_{g3}(\eps) = \O(nT_{g3}(\eps)) = \O\lb\frac{\sqrt{n}\tDeltaL}{\delta \eps^{2}}\rb = \O\lb\frac{\tDeltaL}{\delta^{2} \eps^{2}}y_{2}^{*}\rb\]
    where the last equality uses the fact that
\[y_{2}^{*}\ge y^{*} = \frac{\delta\sqrt{n}}{(1 + \delta)\log \alpha}.\]
This proves \eqref{eq:exponential_term3_goal2}.
 \item If $\tDeltaL / \eps^{2}\le y_{1}^{*}$,
   \[h\lb\frac{\tDeltaL}{\eps^{2}}\rb\le 0 \Longrightarrow T_{g3}(\eps) = \frac{\tDeltaL}{\delta\sqrt{n}\eps^{2}}.\]
   By \eqref{eq:ystar},
   \[T_{g3}(\eps) = \O(1) \Longrightarrow\alpha^{2(1 + \delta)T_{g3}(\eps)}\wedge nT_{g3}(\eps) = \O(1),\]
   and hence \eqref{eq:exponential_term3} is proved by recalling the footnote in page \pageref{fn:O}.
 \item If $\tDeltaL / \eps^{2}\in [y_{1}^{*}, y_{2}^{*}]$,
   \[h\lb\frac{\tDeltaL}{\eps^{2}}\rb\ge 0 \Longrightarrow T_{g3}(\eps) = \frac{1}{1 + \delta}\log_{\alpha}\lb\frac{\tDeltaL}{\eps^{2}}\rb.\]
   Then
   \[\alpha^{2(1 + \delta)T_{g3}(\eps)}\wedge nT_{g3}(\eps) = \O\lb\alpha^{2(1 + \delta)T_{g3}(\eps)}\rb = \O\lb\frac{\tDeltaL^{2}}{\eps^{4}}\rb = \O\lb\frac{\tDeltaL}{\eps^{2}}\left\{\frac{\tDeltaL}{\eps^{2}}\wedge y_{2}^{*}\right\}\rb,\]
   which proves \eqref{eq:exponential_term3_goal2} since $\delta = O(1)$.
 \end{itemize}
\end{itemize}

~\\
\noindent \textbf{Summary} Putting \eqref{eq:exponential_term12} and \eqref{eq:exponential_term3} together and using the fact that $\tDeltaL = \O(L\Delta')$ and $\DeltaL = \O(\Delta')$, we prove the bound for $\E \comp_{g}(\eps)$. As for $\E \comp_{f}(\eps)$, by Theorem \ref{thm:tail-randomized}, we can directly apply \eqref{eq:exponential_term12} by replacing $\DeltaL / \lambda_{T}$ by $\DeltaL$ and $\sigma^{2}$ with $\sigma^{2} / \mu$.

\subsection{Complexity analysis: Proof of Theorem \ref{thm:non-adaptive}}



For the first claim, we set $\delta = 0$ thus $\cR(T) = T$. Applying \eqref{eq:Lj_recursion} recursively with the fact $\sum_{i=0}^T 1/(1+x)^i \leq \nicefrac{(1+x)}{x}$ for $x>0$, we obtain
\[
    \E \lb f(\tx_{\cR(T)}) - f(x^\star) \rb\le \frac{1}{\lb\mu\lambda + 1\rb^T}\E \lb f(\tx_{0}) - f(x^\star)\rb +\frac{\sigma^{2}I(B < n)}{2\mu B},
\]
where $\lambda = \nicefrac{\sqrt{B}}{2L}$.  Setting $B =  \lb n \wedge \frac{\sigma^2}{4\mu\eps^2} \rb$, the second term is less than $\nicefrac{\eps^2}{2}$. For $T \geq \lb 1+ \nicefrac{2L}{\mu \sqrt{B}}\rb\log\frac{2\Delta_f}{\eps^2} $ also the first term is less han $\nicefrac{\eps^2}{2}$ which follows from Lemma~\ref{lem:log1+x}. As the cost of each epoch is $2B$ this result implies that the total complexity is 
\[ 
\O\lb   \lb  B + \frac{\sqrt{B}L}{\mu}\rb\log\frac{\Delta_f}{\eps^2} \rb.
\]

For the second claim, we use \eqref{eq:Lj_recursion} recursively together with Theorem~\ref{thm:one_epoch} and with the fact $\sum_{i=0}^{\infty} 1/(1+x)^i \leq \nicefrac{(1+x)}{x}$ for $x>0$, we obtain 
\[
\Ll_T\le \frac{\Ll_{1}}{(\mu\lambda + 1)^{T-1}} + \frac{\sigma^{2}\lambda I(B < n)}{B}\frac{1 + \lambda\mu}{\lambda\mu}.
\]
Further by Theorem \ref{thm:one_epoch},
\[\Ll_{1}\le 2\Delta_{f} + \frac{\sigma^{2}\lambda I(B < n)}{B}.\]
Using definition of $\Lj$ and $\delta = 0$, we get

\begin{align*}
  \E \|\nabla f(\tx_{\cR(T)})\|^{2} &\le \frac{2\Delta_f}{\lambda(\mu\lambda + 1)^{T-1}} + \frac{\sigma^{2} I(B < n)}{B}\lb 2 + \frac{1}{\lambda\mu} \rb\\
  & \le \frac{2\Delta_f}{\lambda(\mu\lambda + 1)^{T-1}} + \frac{\sigma^{2} I(B < n)}{B}\lb 2 + \frac{2L}{\sqrt{B}\mu} \rb
\end{align*}

The choice of $B$ to be $ \left( \left\{\frac{8\sigma^{2}}{\eps^{2}} + \frac{8\sigma^{4/3}L^{2/3}}{\eps^{4/3}\mu^{2/3}}\right\} \wedge n \right)$
guarantees that the second term is less than $\nicefrac{\eps^2}{2}$. By the same reasoning as for the second claim, we obtain following complexity

\[ 
\O\lb   \lb  B +  \frac{\sqrt{B}L}{\mu}\rb\log\frac{L\Delta_f}{\sqrt{B}\eps^2} \rb.
\]

\section{Miscellaneous}

\begin{lemma}\label{lem:var_sampling}
Let $z_{1}, \ldots, z_{M}\in \R^{d}$ be an arbitrary population and $\mathcal{J}$ be a uniform random subset of $[M]$ with size $m$. Then 
\[\Var\lb\frac{1}{m}\sum_{j\in \mathcal{J}}z_{j}\rb\le \frac{I(m < M)}{m}\cdot \frac{1}{M}\sum_{j=1}^{M}\|z_{j}\|_{2}^{2}.\]
\end{lemma}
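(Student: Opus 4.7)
\textbf{Proof plan for Lemma \ref{lem:var_sampling}.} The strategy is to compute the variance of $\bar Z \eqdef \frac{1}{m}\sum_{j\in \mathcal{J}} z_j$ exactly using the first- and second-order inclusion probabilities of simple random sampling without replacement, and then to drop two nonnegative terms to obtain the stated upper bound.

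First I would dispose of the edge case $m = M$. In that case $\mathcal{J} = [M]$ almost surely, so $\bar Z$ is deterministic, the variance vanishes, and the right-hand side is also $0$ because of the indicator $I(m<M)$. So from now on we may assume $m<M$.

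For $m<M$ the plan is to write $\bar Z = \frac{1}{m}\sum_{j=1}^{M} z_j \one\{j\in\mathcal{J}\}$ and compute moments directly. By symmetry, $\P(j\in \mathcal{J}) = m/M$, giving $\E \bar Z = \bar z \eqdef \frac{1}{M}\sum_j z_j$. For the second moment, $\P(j\in\mathcal{J},\, k\in\mathcal{J}) = \frac{m(m-1)}{M(M-1)}$ when $j\ne k$, so
\[
\E \|\bar Z\|^{2} \;=\; \frac{1}{m^{2}}\Bigl[\tfrac{m}{M}\sum_{j}\|z_j\|^{2} \;+\; \tfrac{m(m-1)}{M(M-1)}\sum_{j\ne k}\langle z_j,z_k\rangle\Bigr].
\]
Using $\sum_{j\ne k}\langle z_j,z_k\rangle = M^{2}\|\bar z\|^{2} - \sum_j \|z_j\|^{2}$ and subtracting $\|\bar z\|^{2} = \|\E \bar Z\|^{2}$, a routine algebraic simplification (combining the coefficients of $\sum_j\|z_j\|^2$ and of $\|\bar z\|^2$, each of which collapses to $\pm\frac{M-m}{m(M-1)}$) produces the classical finite-population variance formula
\[
\Var(\bar Z) \;=\; \frac{M-m}{m(M-1)}\cdot \frac{1}{M}\sum_{j=1}^{M}\|z_j - \bar z\|^{2}.
\]

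Finally, I would bound this expression in two trivial steps: since $\sum_j\|z_j-\bar z\|^2 = \sum_j\|z_j\|^2 - M\|\bar z\|^2 \le \sum_j\|z_j\|^2$, and since $\frac{M-m}{M-1}\le 1$ for $1\le m < M$, we obtain
\[
\Var(\bar Z)\;\le\;\frac{1}{m}\cdot \frac{1}{M}\sum_{j=1}^{M}\|z_j\|^{2},
\]
which together with the $m=M$ case is exactly the claim. The only mildly delicate step is the algebraic collapse of the second-moment computation into the clean finite-population formula; everything else is bookkeeping and a pair of one-line inequalities.
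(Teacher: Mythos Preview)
Your proof is correct. The paper actually states this lemma without proof (it is a classical fact about sampling without replacement), so there is nothing to compare against; your derivation via the exact finite-population variance formula $\Var(\bar Z) = \frac{M-m}{m(M-1)}\cdot\frac{1}{M}\sum_j\|z_j-\bar z\|^2$ followed by the two obvious relaxations is the standard argument and goes through cleanly.
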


\begin{lemma}\label{lem:sum1/t}
  For any positive integer $n$,
  \[\sum_{t=1}^{n}\frac{1}{t}\le 1 + \log n.\]
\end{lemma}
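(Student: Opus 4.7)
The plan is to bound the harmonic partial sum by an integral comparison using the monotonicity of $1/x$. Since the function $x \mapsto 1/x$ is strictly decreasing on $(0,\infty)$, for every integer $t \geq 2$ and every $x \in [t-1, t]$ we have $1/t \leq 1/x$, which upon integration over $[t-1, t]$ yields $1/t \leq \int_{t-1}^{t} dx/x$.

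I would then sum this inequality for $t = 2, \ldots, n$ to telescope the integrals into a single interval: $\sum_{t=2}^{n} 1/t \leq \sum_{t=2}^{n} \int_{t-1}^{t} dx/x = \int_{1}^{n} dx/x = \log n$. Finally, I would separate off the $t=1$ term (which contributes exactly $1$) to conclude $\sum_{t=1}^{n} 1/t \leq 1 + \log n$, as required. The case $n = 1$ is handled trivially since the sum equals $1 \leq 1 + \log 1$.

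There is no real obstacle here; this is a textbook integral-comparison argument. The only thing to be slightly careful about is isolating the $t=1$ term before applying the comparison, since $1/x$ blows up at $x=0$ and one cannot extend the integral bound down to zero. Writing the two displays above essentially completes the proof in a couple of lines.
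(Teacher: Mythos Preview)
Your proposal is correct and follows essentially the same approach as the paper: separate off the $t=1$ term and bound $\sum_{t=2}^{n}1/t$ by $\int_{1}^{n}dx/x=\log n$ using the monotonicity of $1/x$. The paper's proof is the same one-line integral comparison.
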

\begin{proof}
  Since $x\mapsto 1/x$ is decreasing,
  \[\sum_{t=1}^{n}\frac{1}{t} = 1 + \sum_{t=2}^{n}\frac{1}{t}\le 1 + \int_{1}^{n}\frac{dx}{x} = 1 + \log n.\]
\end{proof}

\begin{lemma}\label{lem:log1+x}
  For any $x > 0$,
  \[\frac{1}{\log(1 + x)}\le 1 + \frac{1}{x}.\]
\end{lemma}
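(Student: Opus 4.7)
The plan is to rearrange the claimed inequality into the equivalent form
\[
\log(1+x) \ge \frac{x}{x+1}, \qquad x > 0,
\]
obtained by multiplying the desired inequality by the positive quantities $x$ and $\log(1+x)$ (note $\log(1+x)>0$ for $x>0$, so the direction of the inequality is preserved and no division-by-zero issue arises). The reformulated bound is the standard lower bound on $\log(1+x)$ and will be the only nontrivial step.

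To prove $\log(1+x) \ge \frac{x}{x+1}$, the cleanest approach I would take is to write $\log$ as an integral:
\[
\log(1+x) = \int_{0}^{x} \frac{dt}{1+t}.
\]
Since $t \mapsto 1/(1+t)$ is decreasing on $[0,x]$, we have $\frac{1}{1+t} \ge \frac{1}{1+x}$ for all $t\in[0,x]$, and integrating this pointwise bound yields
\[
\log(1+x) \;\ge\; \int_{0}^{x} \frac{dt}{1+x} \;=\; \frac{x}{1+x}.
\]
(An alternative one-line argument: the concavity of $\log$ gives $\log 1 \le \log(1+x) + \frac{1}{1+x}\bigl(1-(1+x)\bigr)$, which rearranges to the same inequality. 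A third option is to study $g(u)=\log u + 1/u$ on $(0,\infty)$, show $g'(u)=(u-1)/u^{2}$ so that $g$ attains its minimum $g(1)=1$, and then substitute $u=1+x$.)

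Having established $\log(1+x) \ge \frac{x}{1+x}$, I would conclude by inverting: since $\log(1+x) > 0$ for $x>0$, dividing yields
\[
\frac{1}{\log(1+x)} \;\le\; \frac{1+x}{x} \;=\; 1 + \frac{1}{x},
\]
which is the claim. There is no real obstacle here; the only thing to be careful about is the sign of $\log(1+x)$ when inverting, which is handled by restricting attention to $x>0$.
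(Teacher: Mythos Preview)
Your proof is correct and takes essentially the same approach as the paper: both reduce the claim to the equivalent inequality $\log(1+x) \ge \frac{x}{1+x}$ and verify it by elementary calculus. The only difference is the specific tool used for that step---you bound the integrand in $\log(1+x) = \int_0^x \frac{dt}{1+t}$ below by $\frac{1}{1+x}$, whereas the paper defines $g(x) = (1+x)\log(1+x) - x$, computes $g'(x) = \log(1+x) \ge 0$, and concludes $g(x) \ge g(0) = 0$.
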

\begin{proof}
  Let $g(x) = (1 + x)\log (1 + x) - x$. Then
  \[g'(x) = 1 + \log(1 + x) - 1 = \log(1 + x)\ge 0.\]
  Thus $g$ is increasing on $[0, \infty)$. As a result, $g(x)\ge g(0) = 0$. 
\end{proof}

\end{document}